\newcommand{\cmark}{\ding{51}}%
\newcommand{\xmark}{\ding{55}}%
\newcommand{\bruce}[1]{}
\newcommand{\bo}[1]{}
\newtheorem*{rep@theorem}{\rep@title}
\newenvironment{oneshot}[1]{\def\rep@title{#1} \begin{rep@theorem}}{\end{rep@theorem}}
\newcommand{\what}{\widehat{w}}
\title{Scale Up Nonlinear Component Analysis with \\Doubly Stochastic Gradients}
\author{
Bo Xie\thanks{College of Computing, Georgia Institute of Technology. Email:\texttt{bo.xie@gatech.edu}}, ~
Yingyu Liang\thanks{Department of Computer Science, Princeton University. Email: \texttt{yingyul@cs.princeton.edu}},~
Le Song\thanks{College of Computing, Georgia Institute of Technology. Email:\texttt{lsong@cc.gatech.edu}}
}
\date{}
\begin{document}

\maketitle
\begin{abstract}
Nonlinear component analysis such as kernel Principle Component Analysis (KPCA) and kernel Canonical Correlation Analysis (KCCA) are widely used in machine learning, statistics and data analysis, but they can not scale up to big datasets. Recent attempts have employed random feature approximations to convert the problem to the primal form for linear computational complexity. However, to obtain high quality solutions, the number of random features should be the same order of magnitude as the number of data points, making such approach not directly applicable to the regime with millions of data points.

We propose a simple, computationally efficient, and memory friendly algorithm based on the ``doubly stochastic gradients'' to scale up a range of kernel nonlinear component analysis, such as kernel PCA, CCA and SVD. Despite the \emph{non-convex} nature of these problems, our method enjoys theoretical guarantees that it converges at the rate $\Otil(1/t)$ to the global optimum, even for the top $k$ eigen subspace. Unlike many alternatives, our algorithm does not require explicit orthogonalization, which is infeasible on big datasets. We demonstrate the effectiveness and scalability of our algorithm on large scale synthetic and real world datasets.
\end{abstract}

\section{Introduction}
\label{sec:intro}
Scaling up nonlinear component analysis has been challenging due to prohibitive computation and memory requirements. Recently, methods such as Randomized Component Analysis~\cite{LopSraSmoGhaSch14} are able to scale to larger datasets by leveraging random feature approximation. Such methods approximate the kernel function by using explicit random feature mappings, then perform subsequent steps in the primal form, resulting in linear computational complexity. Nonetheless, theoretical analysis~\cite{RahRec09,LopSraSmoGhaSch14} shows that in order to get high quality results, the number of random features should grow linearly with the number of data points. Experimentally, one often sees that the statistical performance of the algorithm improves as one increases the number of random features. 

Another approach to scale up the kernel component analysis is to use stochastic gradient descent and online updates~\cite{Oja82, Oja83}.  These stochastic methods have also been extended to the kernel case~\cite{KimFraSch05,ChiSut07,Honeine12}.  They require much less computation than their batch counterpart, converge in $O(1/t)$ rate, and are naturally applicable to streaming data setting.  Despite that, they share a severe drawback: all data points used in the updates need to be saved, rendering them impractical for large datasets.

In this paper, we propose to use the ``doubly stochastic gradients'' for nonlinear component analysis. This technique is a general framework for scaling up kernel methods~\cite{DaiXieHe14} for {\em convex problems} and has been successfully applied to many popular kernel machines such as kernel SVM, kernel ridge regressions, and Gaussian process. It uses two types of stochastic approximation simultaneously: random data points instead of the whole dataset (as in stochastic update rules), and random features instead of the true kernel functions (as in randomized component analysis).
These two approximations lead to the following benefits: 
\begin{itemize}
\item {\bf Computation efficiency} 
The key computation is the generation of a mini-batch of random features and the evaluation of them on a mini-batch of data points, which is very efficient.
\item {\bf Memory efficiency} 
Instead of storing training data points, we just keep a small program for regenerating the random features, and sample previously used random features according to pre-specified random seeds. This leads to huge savings: the memory requirement up to step $t$ is $O(t)$, independent of the dimension of the data. 
\item {\bf Adaptibility} 
Unlike other approaches that can only work with a fixed number of random features beforehand, doubly stochastic approach is able to increase the model complexity by using more features when new data points arrive, and thus enjoys the advantage of nonparametric methods.
\end{itemize}

Although on first look our method appears similar to the approach in~\cite{DaiXieHe14}, the two methods are fundamentally different. In~\cite{DaiXieHe14}, they address \emph{convex problems}, whereas our problem is highly \emph{non-convex}. The convergence result in~\cite{DaiXieHe14} crucially relies on the properties of convex functions, which do not translate to our problem. Instead, our analysis centers around the stochastic update of power iterations, which uses a different set of proof techniques.


In this paper, we make the following contributions.
\begin{itemize}
\item {\bf General framework}  
We show that the general framework of doubly stochastic updates can be applied in various kernel component analysis tasks, including KPCA, KSVD, KCCA, \etc. 
\item {\bf Strong theoretical guarantee}
We prove that the finite time convergence rate of doubly stochastic approach is $\Otil(1/t)$. This is a significant result since 1) the \emph{global} convergence result is w.r.t. a \emph{non-convex problem}; 2) the guarantee is for update rules without explicit orthogonalization. Previous works require explicit orthogonalization, which is impractical for kernel methods on large datasets.
\item {\bf Strong empirical performance} Our algorithm can scale to datasets with millions of data points. Moreover, the algorithm can often find much better solutions thanks to the ability to use many more random features.
We demonstrate such benefits on both synthetic and real world datasets.
\end{itemize}

Since kernel PCA is a typical task, we focus on it in the paper and provide a description of other tasks in Section~\ref{sec:extensions}.
Although we only state the guarantee for kernel PCA, the analysis naturally carries over to the other tasks.

\section{Related work} 
\label{sec:rel_work}

Many efforts have been devoted to scale up kernel methods. The random feature approach~\cite{RahRec08,RahRec09} approximates the kernel function with explicit random feature mappings and solves the problem in primal form, thus circumventing the quadratic computational complexity. It has been applied to various kernel methods~\cite{LeSarSmo13,DaiXieHe14,LopSraSmoGhaSch14}, among which most related to our work is Randomized Component Analysis~\cite{LopSraSmoGhaSch14}.
One drawback of Randomized Component Analysis is that their theoretical guarantees are only for kernel matrix approximation: it does not say anything about how close the solution obtained from randomized PCA is to the true solution. In contrast, we provide a finite time convergence rate of how our solution approaches the true solution.
In addition, even though a moderate size of random features can work well for tens of thousands of data points, datasets with tens of millions of data points require many more random features. Our online approach allows the number of random features, hence the flexibility of the function class, to grow with the number of data points. This makes our method suitable for data streaming setting, which is not possible for previous approaches.

Online algorithms for PCA have a long history.  Oja proposed two stochastic update rules for approximating the first eigenvector and provided convergence proof in~\cite{Oja82, Oja83}, respectively. These rules have been extended to the generalized Hebbian update rules~\cite{Sanger89b,SchGunVis07,BalDasFre13} that compute the top $k$ eigenvectors (the subspace case). Similar ones have also been derived from the perspective of optimization and stochastic gradient descent~\cite{SchGunVis07,AroCotSre13}.
They are further generalized to the kernel case~\cite{KimFraSch05,ChiSut07,Honeine12}. However, online kernel PCA needs to store all the training data, which is impractical for large datasets. Our doubly stochastic method avoids this problem by using random features and keeping only a small program for regenerating previously used random features according to pre-specified seeds. As a result, it can scale up to tens of millions of data points.

For finite time convergence rate, \cite{BalDasFre13} proved the $O(1/t)$ rate for the top eigenvector in linear PCA using Oja's rule. For the same task, \cite{Shamir14} proposed a noise reduced PCA with linear convergence rate, where the rate is in terms of epochs, \ie, number of passes over the whole dataset. The noisy power method presented in~\cite{HarPri14} provided linear convergence for a subspace, although it only converges linearly to a constant error level. In addition, the updates require explicit orthogonalization, which is impractical for kernel methods. In comparison, our method converges in $O(1/t)$ for a subspace, without the need for orthogonalization.

\section{Preliminaries}
\label{sec:prelim}
\subsection{Kernels and Covariance Operators}

A kernel $k(x, y):\Xcal \times \Xcal \mapsto \RR$ is a function that is positive-definite (PD), i.e.,  for all $n > 1$, $c_1, \dots, c_n \in \RR$, and $x_1, \dots, x_n \in \Xcal$, we have
\begin{align*}
\sum_{i,j=1}^n c_i c_j k(x_i, x_j) \ge 0 .
\end{align*}

A reproducing kernel Hilbert space (RKHS) $\Fcal$ on $\Xcal$ is a Hilbert space of functions from $\Xcal$ to $\RR$. $\Fcal$ is an RKHS if and only if there exists a $k(x,x'):\Xcal\times \Xcal \mapsto \RR$ such that
  $
    \forall x \in \Xcal, k(x,\cdot) \in \Fcal,~\text{and}~
    \forall f \in \Fcal, \inner{f(\cdot)}{k(x,\cdot)}_{\Fcal} = f(x).
  $
If such a $k(x,x')$ exist, it is unique and it is a PD kernel.
A function $f \in \Fcal$ if and only if $\nbr{f}_{\Fcal}^2 := \inner{f}{f}_{\Fcal} < \infty$.

Given a distribution $\PP(x)$, a kernel function $k(x,x')$ with RKHS $\Fcal$, the covariance operator $A:\Fcal\mapsto \Fcal$ is a linear self-adjoint operator defined as 
\begin{align}
  \label{eq:cov}
  A f(\cdot) := \EE_{x}[f(x)\, k(x,\cdot)],\quad \forall f \in \Fcal,  
\end{align}
and furthermore $\inner{g}{Af}_{\Fcal} = \EE_{x}[f(x)\, g(x)]$, $\forall g \in \Fcal$.

Let $F = \rbr{f_1(\cdot), f_2(\cdot), \dots, f_k(\cdot)}$ be a list of $k$ functions in the RKHS, and we define matrix-like notation
\begin{align}
  A F(\cdot) := \rbr{A f_1(\cdot), \dots, A f_k(\cdot)},   
\end{align}
and $F^\top A F$ is a $k \times k$ matrix, whose $(i, j)$-th element is $\inner{f_i}{A f_j}_{\Fcal}$.
The outer-product of a function $v\in \Fcal$ defines a linear operator $v v^\top:\Fcal\mapsto\Fcal$ such that
\begin{align}
  (v v^\top)f(\cdot)  := \inner{v}{f}_{\Fcal} v(\cdot),\quad \forall f \in \Fcal
\end{align}
Let $V =  \rbr{v_1(\cdot), \dots, v_k(\cdot)}$ be a list of $k$ functions, then 
the weighted sum of a set of linear operators, $\cbr{v_i v_i^\top}_{i=1}^k$, can be denoted using matrix-like notation as
\begin{align}
  V \Sigma_k V^\top := \sum_{i=1}^k \lambda_i v_i v_i^\top 
\end{align}
where $\Sigma_k$ is a diagonal matrix with $\lambda_i$ on the $i$-th entry of the diagonal.

\subsection{Kernel PCA}

Kernel PCA aims to identify the top $k$ eigenfunctions $V =  \rbr{v_1(\cdot), \dots, v_k(\cdot)}$ for the covariance operator $A$, where $V$ is also called the top $k$ subspace for $A$. 

A function $v$ is an eigenfunction of covariance operator $A$ with the corresponding eigenvalue $\lambda$ if 
\begin{align}
  Av(\cdot) = \lambda v(\cdot). 
\end{align}
Given a set of eigenfunctions $\cbr{v_i}$ and associated eigenvalues $\cbr{\lambda_i}$, where $\inner{v_i}{v_j}_{\Fcal} = \delta_{ij}$. We can denote the eigenvalue of $A$ as 
\begin{align}
  A =  V\Sigma_k V^\top + V_{\perp}\Sigma_{\perp} V_{\perp}^\top 
\end{align}
where $V =  \rbr{v_1(\cdot), \dots, v_k(\cdot)}$ is the top $k$ eigenfunctions of $A$, and $\Sigma_k$ is a diagonal matrix with the corresponding eigenvalues, $V_{\perp}$ is the collection of the rest of the eigenfunctions, and $\Sigma_{\perp}$ is a diagonal matrix with the rest of the eigenvalues. 

In the finite data case, the empirical covariance operator is $A = \frac{1}{n}\sum_i k(x_i, \cdot) k(x_i, \cdot)^\top$ or denoted as $\frac{1}{n}\sum_i k(x_i, \cdot) \otimes k(x_i, \cdot)$. According to the representer theorem, the solutions of the top $k$ eigenfunctions of $A$ can be expressed as linear combinations of the training points with the set of coefficients $\cbr{\alpha_i}_{i=1}^k \in \RR^n$,
\begin{align*}
v_i = \sum_{j=1}^n \alpha_i^j k(x_j, \cdot)
\end{align*}
Using $Av(\cdot) = \lambda v(\cdot)$ and the kernel trick, we have
\begin{align*}
K\alpha_i = \lambda_i \alpha_i,
\end{align*}
where $K$ is the $n \times n$ Gram matrix.

The infinite dimensional problem is thus reduced to a finite dimensional eigenvalue problem. However, this dual approach is clearly impractical on large scale datasets due quadratic memory and computational costs.

\begin{table*}[t!]
  \setlength{\tabcolsep}{2pt}
  \centering
  \caption{Example of kernels and their random feature representation}\label{table:explicit_features}
    {\small
    \begin{tabular}{ll|c|c|c}
      \hline
      \hline 
      &Kernel &$k(x, x')$ &$\phi_{\omega}(x)$ &$p(\omega)$\\
      \hline
      &Gaussian~\cite{RahRec08} &$\exp(-\frac{\|x - x'\|_2^2}{2})$ &$\exp(-i\omega^\top x)$ &${2\pi}^{-\frac{d}{2}}\exp(-\frac{\|\omega\|_2^2}{2})$  \\ 
      &Laplacian~\cite{RahRec08} &$\exp(-{\|x - x'\|_1})$ &$\exp(-i\omega^\top x)$ &$\prod_{i=1}^d \frac{1}{\pi(1+\omega_i^2)}$ \\ 
      &Cauchy~\cite{RahRec08}   &$\prod_{i=1}^d\frac{2}{1+(x_i -x'_i )^2}$ &$\exp(-i\omega^\top x)$ & $\exp(-{\|\omega\|_1})$ \\ 
      &Mat{\'e}rn~\cite{RasWil06} &$\frac{2^{1-\nu}}{\Gamma(\nu)}\rbr{\frac{\sqrt{2\nu}\|x - x'\|_2}{\ell}}^\nu K_\nu\rbr{\frac{\sqrt{2\nu}\|x - x'\|_2}{\ell}}$ &$\exp(-i\omega^\top x)$ &$\frac{2^d\pi^{d/2}\Gamma(\nu + d/2)(2\nu)^\nu}{\Gamma(\nu)\ell^{2\nu}}\rbr{\frac{2\nu}{\ell^2} + 4\pi^2\|\omega\|_2^2}^{\nu + d/2}$ \\
      &Dot Product~\cite{KarKar12} &$\sum_{n=0}^\infty a_n\langle x, x'\rangle^n\quad a_n\ge 0$ &$\sqrt{a_N p^{N+1}}\prod_{i=1}^N\omega_i^\top x$ &$\PP[N=n] = \frac{1}{p^{n+1}}$\\
      &Polynomial~\cite{PhaPag13} &$(\langle x, x'\rangle + c)^p$ &$\mathtt{FFT}^{-1}(\odot_{i=1}^p \mathtt{FFT}(C_i x))$ &$C_j=S_jD_j,\quad D_j\in \RR^{d\times d}\quad S_j\in \RR^{D\times d}$\\
      &\small{Exp-Semigroup}~\cite{YanSinFanAvretal14} &$\exp(-\beta \sum_{i=1}^d \sqrt{x_i + x'_j})$ &$\exp(-\omega^\top x)$ &$\prod_{i=1}^d\frac{\beta}{2\sqrt{\pi}}\omega_i^{-\frac{3}{2}}\exp(-\frac{\beta}{4\omega_i})$ \\
      &\small{Rec-Semigroup}~\cite{YanSinFanAvretal14} &$\prod_{i=1}^d\frac{\lambda}{x_i + x'_i + \lambda}$ &$\exp(-\omega^\top x)$ &$\prod_{i=1}^d\lambda\exp(-\lambda\omega_i)$ \\
      &Arc-Cosine~\cite{ChoSaul09} &$\frac{1}{\pi}\|x\|^n\|x'\|^n J_n(\theta) $ &$(\omega^\top x)^n\max(0, \omega^\top x)$ &${2\pi}^{-\frac{d}{2}}\exp(-\frac{\|\omega\|_2^2}{2})$\\
      \hline
      \hline 
    \end{tabular}
    \text{\scriptsize 
    $D_j$ is random $\{\pm 1\}$ diagonal matrix and the columns of $S_j$ are uniformly selected from $\{e_1,\ldots, e_D\}$. $\nu$ and $\ell$ are positive parameters.}\\
    \text{\scriptsize $K_\nu$ is a modified Bessel function. $\odot$ stands for element-wise product. $\theta = \cos^{-1}\frac{x^\top x'}{\|x\|\|x'\|}$, $J_n(\theta) = (-1)^n(\sin\theta)^{n+1}\rbr{\frac{\partial}{\partial \theta}}^n\rbr{\frac{\pi - \theta}{\sin\theta}}$}
    }
    \vspace{-3mm}
\end{table*}

\subsection{Random feature approximation}
The usage of random features to approximate a kernel function is motivated by the following theorem.
\begin{theorem}[Bochner]
  A continuous, real-valued, symmetric and shift-invariant function $k(x-x')$ on $\RR^d$ is a PD kernel if and only if there is a finite non-negative measure $\PP(\omega)$ on $\RR^d$, such that
  $
    k(x-x') = \int_{\RR^d} \, e^{i \omega^\top (x-x')}\, d\PP(\omega) = \int_{\RR^d \times [0,2\pi]} \phi_{\omega}(x) \phi_{\omega}(y)\, d \rbr{\PP(\omega) \times \PP(b)},
  $
  where $\PP(b)$ is a uniform distribution on $[0,2\pi]$, and $\phi_{\omega}(x) = \sqrt{2}\cos(\omega^\top x + b)$.
\end{theorem}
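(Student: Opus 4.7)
The plan is to handle three pieces: the easy (sufficiency) direction, the hard (necessity) direction, and the equivalent rewriting in terms of the real random feature $\phi_\omega$.

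For sufficiency, I would plug the integral representation straight into the PD definition. For any $c_1,\ldots,c_n \in \RR$ and $x_1,\ldots,x_n \in \RR^d$, Fubini gives
$$\sum_{i,j=1}^n c_i c_j\, k(x_i - x_j) \;=\; \int_{\RR^d} \Bigl|\sum_{i=1}^n c_i\, e^{i\omega^\top x_i}\Bigr|^2 d\PP(\omega) \;\geq\; 0,$$
and continuity of $k$ follows from dominated convergence since $|e^{i\omega^\top x}| = 1$ and $\PP$ is finite.

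For necessity, which is the main obstacle, I would follow the classical Fourier-analytic route. First establish boundedness $|k(x)| \leq k(0)$ by applying the PD inequality to the two-point set $\{0, x\}$. Next, regularize $k$ by multiplying by a Gaussian damping factor $e^{-\epsilon\|x\|^2}$; this produces a continuous, PD, absolutely integrable function $k_\epsilon$, whose Fourier transform $\widehat{k_\epsilon}$ is well-defined and continuous. The critical step is to verify $\widehat{k_\epsilon}(\omega) \geq 0$: this follows by approximating $\widehat{k_\epsilon}(\omega)$ with Riemann sums of the form $\sum_{i,j} c_i \bar{c}_j\, k_\epsilon(x_i - x_j)$ using $c_i = e^{-i\omega^\top x_i}$ weighted by the Gaussian cutoff, all of which are non-negative by positive definiteness. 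Thus $\widehat{k_\epsilon}$ defines a finite non-negative measure $\PP_\epsilon$. Since the total mass is $k_\epsilon(0) \to k(0)$ and, by continuity of $k$ at the origin, the family $\{\PP_\epsilon\}_{\epsilon > 0}$ is tight, Prokhorov's theorem produces a weakly convergent subsequence with limit $\PP$. Passing to the limit in $k_\epsilon(x) = \int e^{i\omega^\top x}\, d\PP_\epsilon(\omega)$ gives the representation $k(x) = \int e^{i\omega^\top x}\, d\PP(\omega)$, and the fact that $k$ is real-valued and symmetric forces $\PP$ to be invariant under $\omega \mapsto -\omega$, so the imaginary part cancels.

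Finally, to obtain the second equality with $\phi_\omega(x) = \sqrt{2}\cos(\omega^\top x + b)$ and $b$ uniform on $[0,2\pi]$, I would apply the product-to-sum identity
$$2\cos(\omega^\top x + b)\cos(\omega^\top y + b) \;=\; \cos\bigl(\omega^\top(x - y)\bigr) + \cos\bigl(\omega^\top(x + y) + 2b\bigr),$$
and integrate $b$ uniformly; the second term averages to zero, leaving $\cos(\omega^\top(x - y))$, which upon integrating against $\PP(\omega)$ recovers $k(x - y)$ by the symmetry-reduced form of the first equality. The truly hard part of the whole argument is the construction of the measure in the necessity direction, specifically verifying $\widehat{k_\epsilon} \geq 0$ and extracting the weak limit via tightness; the rest is essentially bookkeeping.
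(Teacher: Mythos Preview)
The paper does not prove Bochner's theorem; it is simply quoted as a classical result to motivate the random-feature construction, with no argument or sketch given. So there is no ``paper's proof'' for you to compare against.

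Your outline is a correct rendition of the standard proof: sufficiency is immediate from the PD inequality applied under the integral; necessity proceeds by Gaussian regularization to get into $L^1$, nonnegativity of the Fourier transform from positive definiteness, and a tightness/weak-limit extraction (this is essentially L\'evy's continuity theorem). The product-to-sum identity you use for the $\phi_\omega(x)=\sqrt{2}\cos(\omega^\top x+b)$ rewriting is also correct. The only soft spot is your description of why $\widehat{k_\epsilon}(\omega)\ge 0$: as written, $\widehat{k_\epsilon}$ is a \emph{single} integral, so it is not literally a Riemann sum of the PD form $\sum_{i,j}c_i\bar c_j\,k_\epsilon(x_i-x_j)$. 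The clean fix is to put the Gaussian cutoff on both variables in the double integral
\[
\int\!\!\int k(x-y)\,e^{-\epsilon(\|x\|^2+\|y\|^2)}\,e^{-i\omega^\top(x-y)}\,dx\,dy,
\]
which is a nonnegative PD quadratic form by Riemann-sum approximation and, after the change of variables $u=x-y,\ v=x+y$, collapses to a positive constant times $\widehat{k_{\epsilon'}}(\omega)$ for some $\epsilon'>0$.
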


The theorem says that any shift-invariant kernel function $k(x,y) = k(x-y)$, \eg, Gaussian RBF kernel, can be considered as an expectation of two feature functions $\phi_{\omega}(x)$ and $\phi_{\omega}(y)$, where the expectation is taked over a distribution on the random frequency $\omega$ and phase $b$.

We can therefore approximate the kernel function as an empirical average of samples from the distribution. In other words, 
\begin{align*}
k(x, y) \approx \frac{1}{B} \sum_i \phi_{\omega_i}(x) \phi_{\omega_i}(y),
\end{align*}
where $\cbr{\rbr{\omega_i, b_i}}_i^B$ are i.i.d. samples drawn from from $\PP(\omega)$ and $\PP(b)$, respectively. 

The specific random feature functions and distributions have been worked out for many popular kernels. For Gaussian RBF kernel, $k(x-x')=\exp(-\|x - x'\|^2/2\sigma^2)$, this yields a Gaussian distribution $\PP(\omega)$ with density proportional to $\exp(-\sigma^2\|\omega\|^2/2)$; for the Laplace kernel, this yields a Cauchy distribution; and for the Martern kernel, this yields
the convolutions of the unit ball~\cite{SchSmo02}. Similar representation where the explicit form of $\phi_{\omega}(x)$ and $\PP(\omega)$ are known can also be derived for rotation invariant kernel, $k(x,x') = k(\inner{x}{x'})$, using Fourier transformation on sphere~\cite{SchSmo02}. For polynomial kernels, $k(x,x')=(\inner{x}{x'}+c)^p$, a random tensor sketching approach can also be used~\cite{PhaPag13}. See Table~\ref{table:explicit_features} for explicit representations of different kernels.

\section{Algorithm}
\label{sec:algorithm}
In this section, we describe an efficient algorithm based on the ``doubly stochastic gradients'' to scale up kernel PCA. KPCA is essentially an eigenvalue problem in a functional space. Traditional approaches convert it to the dual form, leading to another eigenvalue problem whose size equals the number of training points, which is not scalable.
Other approaches solve it in the primal form with stochastic functional gradient descent. However, these algorithms need to store all the training points seen so far. They quickly run into memory issues when working with hundreds of thousands of data points.

We propose to tackle the problem with ``doubly stochastic gradients'', in which we make two unbiased stochastic approximations. One stochasticity comes from sampling data points as in stochastic gradient descent. Another source of stochasticity is from random features to approximate the kernel.

One technical difficulty in designing doubly stochastic KPCA is an explicit orthogonalization step required in the update rules, which ensures the top $k$ eigenfunctions are orthogonal. This is infeasible for kernel methods on a large dataset since it requires solving an increasingly larger KPCA problem in every iteration. To solve this problem, we formulate the orthogonality constraints into Lagrange multipliers which leads to an Oja-style update rule. The new update enjoys small per iteration complexity and converges to the ground-truth subspace.

We present the algorithm by first deriving the stochastic functional gradient update without random feature approximations, then introducing the doubly stochastic updates.

\subsection{Stochastic functional gradient update}
Kernel PCA can be formulated as the following {\em non-convex} optimization
problem
\begin{align}
\max_{G} \, \tr\rbr{G^\top A G}  \;\; \text{s.t.} \,  G^\top G = I,
\end{align}
where $G:=\rbr{g^1,\ldots,g^k}$ and $g^i$ is the $i$-th function.

The Lagrangian that incorporates the constraint is
\begin{align*}
L(G, \Lambda) =  \tr\rbr{G^\top A G} + \tr\rbr{\rbr{G^\top G - I}\Lambda}
\end{align*}
where $\Lambda$ is the Lagrangian multiplier. The gradient of the Lagrangian w.r.t $G$ is 
\begin{align*}
\nabla_{G}L = 2 A G + G \rbr{\Lambda + \Lambda^\top}.
\end{align*}
Furthermore, from the optimality conditions
\begin{align*}
2 A G + G \rbr{\Lambda + \Lambda^\top} &= 0 , \\
G^\top G - I & = 0,
\end{align*}
we can find $\Lambda + \Lambda^\top = - 2 G^\top A G.$

Plugging this into the gradient, it suggests the following update rule
\begin{align}\label{equ:rule_stoch}
G_{t+1} = G_{t} + \eta_t \rbr{I - G_{t}  G_{t} ^\top } AG_{t} .
\end{align}

Using a stochastic approximation for $A$: $A_t f(\cdot) =  f(x_t)\, k(x_t, \cdot)$, we have $A_t G_t = k(x_t,\cdot) g_t^\top$ and $G_t^\top A_t G_t = g_t g_t^\top $, where $g_t = \sbr{g^1_t(x_t), \dots, g^k_t(x_t)}^\top$.
Therefore, the update rule is
\begin{align}\label{eqn:rule_stoch2}
G_{t+1} = G_{t}\rbr{I - \eta_t g_t g_t^\top} + \eta_t k(x_t,\cdot) g_t^\top .
\end{align}
This rule can also be derived using stochastic gradient and Oja's rule~\cite{Oja82, Oja83}.

\subsection{Doubly stochastic update}
The update rule~(\ref{eqn:rule_stoch2}) has a fundamental computational drawback.
At each time step $t$, a new basis $k(x_t, \cdot)$ is added to $G_t$, and it is therefore a linear combination of the feature mappings of all the data points up to $t$.
This requires the algorithm to store all the data points it has seen so far, which is impractical for large scale datasets.

To address this issue, we use the random feature approximation $k(x, \cdot)  \approx  \phi_{\omega_i}(x) \phi_{\omega_i}(\cdot)$.
Denote $H_t$ the function we get at iteration $t$, the update rule becomes
\begin{align} \label{eqn:rule_doubly}
H_{t+1} = H_{t}\rbr{I - \eta_t h_t {h_t}^\top} + \eta_t \phi_{\omega_t}(x_t) \phi_{\omega_t}(\cdot) {h_t}^\top ,
\end{align}
where $h_t$ is the evaluation of $H_t$ at the current data point: $h_t = \sbr{h^1_t(x_t), \dots, h^k_t(x_t)}^\top$. 

Given $H_0=V_0$, we can explicitly represent $H_t$ as a linear combination of all the random feature functions $\phi_{\omega_i}(\cdot)$:
\begin{align*}
H_t  = \sum_i \phi_{\omega_i}(\cdot) \alpha_i^\top + V_0 \beta,
\end{align*}
where $\alpha_i \in \RR^k$ are the coefficients, and $\beta = \prod_{i\leq t}\rbr{I - \eta_i h_i {h_i}^\top}$.

The update rule on the functions corresponds to the following update for the coefficients
\begin{align*}
\alpha_{t+1} &=  \eta_t \phi_{\omega_t}(x_t) {h_t} \\
\alpha_i &= \alpha_i - \eta_t  \alpha_i^\top h_t  h_t,  \,\,\,\, \forall i \le t
\end{align*}

The specific updates in terms of the coefficients are summarized in Algorithms~1~and~2. Note that in theory new random features are drawn in each iteration, but in practice one can revisit these random features.

\begin{figure}[t]
  \hrule\vspace{1mm}
  \text{\bf Algorithm 1: $\cbr{\alpha_i}_1^t = \text{\bf{DSGD-KPCA}}(\PP(x), k)$}\vspace{1mm}
  \hrule\vspace{1mm}
  \text{\bf Require: $\PP(\omega),\, \phi_{\omega}(x) .$}\\[-4mm]
  \begin{algorithmic}[1]  \label{alg:kpca}
    \FOR{$i=1,\ldots, t$}
      \STATE Sample $x_i \sim \PP(x)$.
      \STATE Sample $\omega_i \sim \PP(\omega)$ with {\color{red}seed $i$}.
      \STATE $h_i = \text{\bf Evaluate}(x_i,\cbr{\alpha_j}_{j=1}^{i-1}) \, \in \RR^k$.
      \STATE $\alpha_i = \eta_i \phi_{\omega_i}(x_i) {h_i}$.
      \STATE $\alpha_j = \alpha_j - \eta_i  \alpha_j^\top h_i  h_i,$ for $j =1,\ldots,i-1$.
    \ENDFOR
  \end{algorithmic}
  \hrule
\end{figure}

\begin{figure}[t]
  \hrule\vspace{1mm}
  \text{\bf Algorithm 2: $h = \text{Evaluate}(x,\,\cbr{\alpha_i}_{i=1}^t)$}\vspace{1mm}
  \hrule\vspace{1mm}
  \text{\bf Require: $\PP(\omega),\, \phi_{\omega}(x).$}\\[-4mm]
  \begin{algorithmic}[1] \label{alg:evaluate}
    \STATE Set $h = 0 \in \RR^k$.
    \FOR{$i=1,\ldots, t$}
      \STATE Sample $\omega_i \sim \PP(\omega)$ with {\color{red}seed $i$}.
      \STATE $h = h +  \phi_{\omega_i}(x) \alpha_i $.
    \ENDFOR
  \end{algorithmic}
  \hrule
\end{figure}

\section{Analysis}
\label{sec:analysis}
In this section, we provide finite time convergence guarantees for our algorithm. 
As discussed in the previous section, explicit orthogonalization is not scalable for the kernel case, therefore we need to provide guarantees for the updates without orthogonalization. This challenge is even more prominent when using random features, since it introduces additional variance.

Furthermore, our guarantees are w.r.t. the top $k$-dimension subspace. Although the convergence without normalization for a top eigenvector has been established before~\cite{Oja82,Oja83}, the subspace case is complicated by the fact that there are $k$ angles between $k$-dimension subspaces, and we need to bound the \emph{largest} angle. To the best of our knowledge, our result is the first finite time convergence result for a subspace \emph{without} explicit orthogonalization.

Note that even though it appears our algorithm is similar to~\cite{DaiXieHe14} on the surface, the underlying analysis is fundamentally different. In~\cite{DaiXieHe14}, the result only applies to \emph{convex problems} where every local optimum is a global optimum while the problems we consider are highly \emph{non-convex}. As a result, many techniques that~\cite{DaiXieHe14} builds upon are not applicable.

\subsection{Notations}
In order to analyze the convergence of our doubly stochastic kernel PCA algorithm, 
we will need to define a few intermediate subspaces. For simplicity of notation, we will assume the mini-batch size for the data points is one. 
\begin{enumerate}[noitemsep]
  \item Let $F_{t}:=\rbr{f_t^1,\ldots,f_t^k}$ be the subspace estimated using stochastic gradient and explicit orthogonalization:
  \begin{align*}  
      &\tilde{F}_{t+1} \leftarrow F_t + \eta_t  A_t F_t \\
      &F_{t+1} \leftarrow \tilde{F}_{t+1}  \rbr{\tilde{F}_{t+1}^\top \tilde{F}_{t+1}  }^{-1/2} 
  \end{align*}  
  
  \item Let $G_{t}:=\rbr{g_t^1,\ldots,g_t^k}$ be the subspace estimated using stochastic update rule without orthogonalization:
  \begin{align*}
      G_{t+1} \leftarrow G_t + \eta_t\rbr{I - G_t G_t^\top}  A_t G_t.
  \end{align*}
  where $A_tG_t$ and $G_t G_t^\top A_t G_t$ can be equivalently written using the evaluation of the function $\cbr{g_t^i}$  on the current data point, leading to the equivalent rule :
\begin{align}
G_{t+1} \leftarrow G_{t}\rbr{I - \eta_t g_t g_t^\top} + \eta_t k(x_t,\cdot) g_t^\top .
\end{align}

  \item Let $\Gtil_{t}:=\rbr{\gtil_t^1,\ldots,\gtil_t^k}$ be the subspace estimated using stochastic update rule without orthogonalization, but the evaluation of the function $\cbr{\gtil_t^i}$ on the current data point is replaced by the evaluation $h_t = \sbr{h_t^i(x_t)}^\top$:
  \begin{align*} 
    \Gtil_{t+1} \leftarrow \Gtil_{t} + \eta_t k(x_t,\cdot) h_t^\top - \eta_t \Gtil_t h_t h_t^\top
  \end{align*}

  \item Let $H_{t}:=\rbr{h_t^1,\ldots,h_t^k}$ be the subspace estimated using doubly stochastic update rule without orthogonalization, \ie, the update rule:
  \begin{align} \label{eqn:rule_doubly}
    H_{t+1} \leftarrow H_{t} + \eta_t \phi_{\omega_t}(x_t) \phi_{\omega_t}(\cdot) h_t^\top - \eta_t H_t h_t h_t^\top. 
  \end{align}
\end{enumerate}

The relation of these subspaces are summarized in Table~\ref{tb:relation}. 
Using these notations, we describe a sketch of our analysis in the rest of the section, while the complete proofs are provided in the appendix. 

We first consider the subspace $G_t$ estimated using the stochastic update rule, since it is simpler and its proof can provide the bases for analyzing the subspace $H_t$ estimated by the doubly stochastic update rule. 

\begin{table}[h!]
  \centering
  \setlength{\tabcolsep}{2pt}
  \caption{Relation between various subspaces.
  \label{tb:relation}
	}
  \vspace{-2mm}
  \begin{tabular}{c|c|c|c|c}
    \hline 
    \hline
    Subspace & Evaluation & Orth. & Data Mini-batch & RF Mini-batch \\
    \hline 
    $V$ & -- & -- & -- & -- \\ 
    $F_t$ & $f_t(x)$ & \cmark & \cmark & \xmark  \\
    $G_t$ & $g_t(x)$ & \xmark & \cmark & \xmark \\
    $\Gtil_t$ & $\gtil_t(x)$ & \xmark & \cmark & \xmark \\
    $H_t$ & $h_t(x)$ & \xmark & \cmark & \cmark \\
    \hline
    \hline
  \end{tabular}
\end{table}

\subsection{Conditions and Assumptions}
We will focus on the case when a good initialization $V_0$ is given:
\begin{align}\label{eqn:init}
	V_0^\top V_0 = I, ~~\cos^2 \theta(V, V_0) \geq 1/2.
\end{align}
In other words, we analyze the later stage of the convergence, which is typical in the literature (\eg, \cite{Shamir14}).
The early stage can be analyzed using established techniques (\eg, \cite{BalDasFre13}).

Throughout the paper we suppose $\abr{k(x,x')} \leq \kappa, \abr{\phi_\omega(x)} \leq \phi$ and regard $\kappa$ and $\phi$ as constants. Note that this is true for all the kernels and corresponding random features considered. We further regard the eigengap $\lambda_k - \lambda_{k+1}$ as a constant, which is also true for typical applications and datasets.

\subsection{Update without random features}

\label{sec:analysis_no_random}
Our guarantee is on the cosine of the principal angle between the computed subspace and the ground truth eigen subspace (also called potential function): $\cos^2 \theta(V,G_t) = \min_w \frac{\nbr{V^\top G_t w}^2}{\nbr{G_t w}^2}$.

Consider the two different update rules, one with explicit orthogonalization and another without
\begin{align*} 
F_{t+1} &\leftarrow \textbf{orth}(F_t + \eta_t  A_t F_t) \\ 
G_{t+1} &\leftarrow G_t + \eta_t\rbr{I - G_t G_t^\top}  A_t G_t 
\end{align*}
where $A_t$ is  the empirical covariance of a mini-batch. 
Our final guarantee for $G_t$ is the following.
\begin{theorem} \label{thm:stoch}
Assume (\ref{eqn:init}) and suppose the mini-batch sizes satisfy that for any $1\leq i \leq t$,
$\nbr{A - A_i} < (\lambda_k - \lambda_{k+1})/8.$
There exist step sizes $\eta_i = O(1/i)$ such that 
\[
  1-\cos^2\theta(V, G_t) = O(1/t).
\]
\end{theorem}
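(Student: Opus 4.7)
The quantity $1 - \cos^2\theta(V, G_t)$ depends only on $\mathrm{col}(G_t)$, so my plan is to coordinatize that span by the blocks $C_t := V^\top G_t \in \RR^{k\times k}$ and $S_t := V_\perp^\top G_t$. Assumption (\ref{eqn:init}) gives $\sigma_{\min}(C_0) \geq 1/\sqrt{2}$, and as long as $C_t$ remains invertible one has $\tan^2\theta(V, G_t) = \|S_t C_t^{-1}\|_{\mathrm{op}}^2$. I would work instead with the smooth Frobenius upper bound $\Psi_t := \|S_t C_t^{-1}\|_F^2$, which is equivalent to $1 - \cos^2\theta(V, G_t)$ up to a factor of $k$ and provides a convenient Lyapunov function.

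The heart of the argument is a one-step expected contraction for $\Psi_t$. Writing $\Xi_t := A_t - A$ for the mean-zero mini-batch noise and $\Delta := \lambda_k - \lambda_{k+1}$ for the eigengap, I would expand
\begin{align*}
G_{t+1} = G_t + \eta_t(I - G_t G_t^\top)(A + \Xi_t)G_t,
\end{align*}
project onto $V$ and $V_\perp$, and use the decomposition $A = V\Sigma_k V^\top + V_\perp \Sigma_\perp V_\perp^\top$ to show that, conditional on $G_t$,
\begin{align*}
\EE[\Psi_{t+1} \mid G_t] \leq (1 - 2\eta_t \Delta)\Psi_t + \eta_t^2 R_t,
\end{align*}
where the residual $R_t$ is polynomial in $\Psi_t$, $\|\Xi_t\|^2$, and the deviation of $G_t^\top G_t$ from $I$. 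The $-2\eta_t\Delta\Psi_t$ term appears because $C_t$ inflates along the top-$k$ directions at effective rate $\lambda_k$ while $S_t$ contracts along the perpendicular directions at effective rate $\lambda_{k+1}$.

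Before this contraction can be trusted, the projector-like factor $(I - G_t G_t^\top)$ must actually behave like a projector off $\mathrm{col}(G_t)$. To that end I would track $\Omega_t := G_t^\top G_t - I$ and exploit the Oja structure: the correction $-\eta_t G_t G_t^\top A_t G_t$ in the update exactly cancels the first-order growth of $G_t^\top G_t$ when $\Omega_t = 0$, so $\Omega_t$ obeys a recursion of the schematic form $\Omega_{t+1} = \Omega_t + O(\eta_t\|\Omega_t\|) + O(\eta_t^2)$. Inductively this yields $\|\Omega_t\| = O(\eta_t)$, so the $(I - G_tG_t^\top)$ factor in the contraction incurs only higher-order error and, jointly, $\sigma_{\min}(C_t)$ is prevented from collapsing.

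The final step would invoke a standard stochastic-approximation recursion with $\eta_t = c/t$ for a sufficiently small constant $c$ (for example, telescoping $t\Psi_t$, or Robbins--Siegmund), yielding $\EE[\Psi_t] = O(1/t)$ and hence $1 - \cos^2\theta(V, G_t) = O(1/t)$; the hypothesis $\|A - A_i\| < \Delta/8$ keeps $R_t$ uniformly bounded with high probability. The main obstacle is that these two inductions are coupled: the $\Psi_t$ contraction needs approximate orthonormality of $G_t$, while the control of $\Omega_t$ in turn needs $\mathrm{col}(G_t)$ to stay near the top eigensubspace. Propagating both invariants simultaneously, and making the largest of the $k$ principal angles obey the same rate as the single-angle case, is what distinguishes this setting from the top-eigenvector analyses of \cite{Oja82,BalDasFre13} and blocks a direct reduction to them.
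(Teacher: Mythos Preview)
Your overall plan is reasonable but diverges from the paper's route, and it contains one genuine gap.

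\textbf{How the paper argues.} The paper never tracks $\Omega_t = G_t^\top G_t - I$. Its key device is Lemma~\ref{lem:equivalence}: for any full-rank $G_t$, one Oja step and one orthonormalized power step $F(G_t) := \mathrm{orth}((I + \eta_t A_t)G_t)$ yield the \emph{same} potential up to $O(\eta_t^2)$. The proof writes $\cos^2\theta(V,X) = \lambda_{\min}\bigl(V^\top X (X^\top X)^{-1} X^\top V\bigr)$ for $X = G_{t+1}$ and $X = F(G_t)$, Taylor-expands both in $\eta_t$, and checks that the zeroth- and first-order terms coincide (both reduce to expressions in the projector $PP^\top = G_t(G_t^\top G_t)^{-1}G_t^\top$). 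This collapses the unnormalized rule onto the normalized one, for which the recurrence $c_{t+1}^2 \geq c_t^2\bigl(1 + 2\eta_t(\Delta - 2\|A-A_t\|)(1-c_t^2)\bigr) - O(\eta_t^2)$ is proved directly (Lemma~\ref{lem:Ft}), and the coupled induction you describe is bypassed. Since your own $\Psi_t = \|S_t C_t^{-1}\|_F^2$ is already invariant under $G_t \mapsto G_t R$, the same equivalence is available to you and would let you drop the $\Omega_t$ bookkeeping entirely.

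\textbf{The gap.} If you do insist on tracking $\Omega_t$, the schematic recursion you wrote,
\[
\Omega_{t+1} = \Omega_t + O(\eta_t\|\Omega_t\|) + O(\eta_t^2),
\]
does \emph{not} give $\|\Omega_t\| = O(\eta_t)$. With $\eta_t = c/t$ and the first-order term taken as a mere bound, one has $\|\Omega_{t+1}\| \leq (1 + Cc/t)\|\Omega_t\| + Cc^2/t^2$, which unrolls to growth of order $t^{Cc}$, not decay. What actually happens is a \emph{contraction}: a direct computation gives
\[
\Omega_{t+1} = \Omega_t - \eta_t\bigl(\Omega_t M_t + M_t \Omega_t\bigr) + O(\eta_t^2), \qquad M_t := G_t^\top A_t G_t,
\]
and once $\mathrm{col}(G_t)$ is near $V$ with $\|A_t - A\|$ small, $M_t \succeq c_0\lambda_k I$, so the Sylvester operator $\Omega \mapsto \Omega M_t + M_t\Omega$ has spectrum bounded below by $2c_0\lambda_k$. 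That yields $\|\Omega_{t+1}\| \leq (1 - c'\eta_t)\|\Omega_t\| + O(\eta_t^2)$ and then $\|\Omega_t\| = O(\eta_t)$, but only after you supply this sign; as written your induction step fails. Note also that $M_t \succ 0$ already needs $\sigma_{\min}(C_t)$ bounded away from zero, so the coupling you flag is genuine and must be carried through carefully. A minor related point: the paper works pathwise under the deterministic hypothesis $\|A - A_i\| < \Delta/8$ rather than in conditional expectation, so Robbins--Siegmund is not needed and the recursion on $1-\cos^2\theta$ is solved directly.
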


The convergence rate $O(1/t)$ is in the same order as that of computing only the top eigenvector in linear PCA~\cite{BalDasFre13}. The bound requires the mini-batch size is large enough so that the spectral norm of $A$ is approximated up to the order of the eigengap. This is because the increase of the potential is in the order of the eigengap. Similar terms appear in the analysis of the noisy power method~\cite{HarPri14} which, however, requires orthogonalization and is not suitable for the kernel case. We do not specify the mini-batch size, but by assuming suitable data distributions, it is possible to obtain explicit bounds; see for example~\cite{Vershynin12,CaiHar12}.

\noindent 
{\bf Proof sketch} 
We first prove the guarantee for the orthogonalized subspace $F_t$ which is more convenient to analyze, and then show that the updates for $F_t$ and $G_t$ are first order equivalent so $G_t$ enjoys the same guarantee. To do so, we will require lemma~\ref{lem:Ft} and~\ref{lem:equivalence} below
\begin{lemma} \label{lem:Ft}
  $1-\cos^2\theta(V, F_t) = O(1/t)$.
\end{lemma}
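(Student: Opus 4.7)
The plan rests on a simple but useful observation: orthogonalization does not change the span, so $\cos^2\theta(V, F_{t+1}) = \cos^2\theta(V, \tilde F_{t+1})$ where $\tilde F_{t+1} = (I+\eta_t A_t) F_t$. This lets me sidestep tracking the nonlinear QR factor and reduce the analysis to a noisy subspace power iteration. I would parametrize the subspace in coordinates aligned with the ground truth by setting $X_t := V^\top F_t$ and $Y_t := V_\perp^\top F_t$. Writing $A_t = A + E_t$ and using that $V, V_\perp$ are invariant subspaces of $A$ with eigenvalues $\Sigma_k, \Sigma_\perp$, the pre-orthogonalization update decomposes as
\begin{align*}
V^\top \tilde F_{t+1} &= (I+\eta_t \Sigma_k)\, X_t + \eta_t V^\top E_t F_t, \\
V_\perp^\top \tilde F_{t+1} &= (I+\eta_t \Sigma_\perp)\, Y_t + \eta_t V_\perp^\top E_t F_t.
\end{align*}

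Next I would work with the potential $\Psi_t := \|Y_t X_t^{-1}\|^2 = \tan^2\theta(V, F_t)$, which is well defined as long as $X_t$ is invertible; the warm start $\cos^2\theta(V,V_0)\ge 1/2$ together with an inductive argument keeps $\cos^2\theta(V, F_t) \ge 1/2$, hence $X_t$ invertible, throughout the iteration. Because right-multiplication by the QR factor $R_{t+1}^{-1}$ cancels in the ratio $Y_{t+1} X_{t+1}^{-1} = (V_\perp^\top \tilde F_{t+1})(V^\top \tilde F_{t+1})^{-1}$, the potential $\Psi_{t+1}$ is determined purely by the two displays above. Bounding $\|(I+\eta_t\Sigma_\perp) Y_t\|\le (1+\eta_t\lambda_{k+1})\|Y_t\|$ from above and $\sigma_{\min}((I+\eta_t\Sigma_k)X_t)\ge (1+\eta_t\lambda_k)\,\sigma_{\min}(X_t)$ from below, and treating the $E_t$ terms by first-order perturbation using the hypothesis $\|E_t\|\le (\lambda_k-\lambda_{k+1})/8$, yields a one-step recursion of the form
\begin{align*}
\mathbb{E}[\Psi_{t+1}\mid \mathcal F_t] \le \bigl(1 - c\,\eta_t (\lambda_k - \lambda_{k+1})\bigr)\,\Psi_t + O(\eta_t^2),
\end{align*}
for an absolute constant $c>0$, where $\mathcal F_t$ denotes the history up to step $t$.

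Finally, with $\eta_t = \Theta(1/t)$ chosen so that $c\eta_t(\lambda_k-\lambda_{k+1}) > 1/t$, the recursion is a standard supermartingale of the Robbins–Siegmund / Balsubramani–Dasgupta–Freund type and yields $\mathbb{E}[\Psi_t] = O(1/t)$. Since $1 - \cos^2\theta(V,F_t) = \sin^2\theta(V,F_t) \le \tan^2\theta(V,F_t) = \Psi_t$, the claimed $O(1/t)$ bound follows. The main obstacle I anticipate is not the contraction argument but the singular-value control underlying it: I must show that $\sigma_{\min}(V^\top \tilde F_{t+1})$ does not collapse under the noise $E_t$, so that $\Psi_t$ remains a valid potential and the one-step estimate stays perturbative. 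This is exactly where the mini-batch assumption $\|E_t\| < (\lambda_k-\lambda_{k+1})/8$ and the warm start $\cos^2\theta \ge 1/2$ earn their keep—without them the eigengap-driven contraction could be overwhelmed by the noise and the inductive invariant would fail.
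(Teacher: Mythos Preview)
Your plan has the right skeleton---the observation that orthogonalization preserves the span, the $(X_t,Y_t)$ block decomposition, and a contracting recursion---but the one-step inequality you claim does not follow from the ingredients you list. The trouble is the cross block $V_\perp^\top E_t V$. Writing $F_t = V X_t + V_\perp Y_t$, the perturbed update gives
\[
V_\perp^\top \tilde F_{t+1} = (I+\eta_t\Sigma_\perp)Y_t + \eta_t V_\perp^\top E_t V_\perp\, Y_t + \eta_t V_\perp^\top E_t V\, X_t,
\]
and the last term has norm of order $\eta_t\|E_t\|$ with \emph{no} factor of $\|Y_t\|$ or $\tan\theta$. When you form $\Psi_{t+1}=\|\tilde Y\tilde X^{-1}\|^2$ and expand, this produces a first-order contribution of size $O(\eta_t\|E_t\|\sqrt{\Psi_t})$, which is neither $O(\eta_t^2)$ nor absorbable into $(1-c\,\eta_t\,\mathrm{gap})\Psi_t$. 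Under the deterministic hypothesis $\|E_t\|\le(\lambda_k-\lambda_{k+1})/8$ alone, the resulting recursion drives $\tan\theta_t$ only to a constant floor $\Theta(\|E_t\|/\mathrm{gap})$---exactly the Hardt--Price noisy-power-method conclusion, and precisely what the lemma must beat. Taking conditional expectation would kill that term since $\mathbb{E}[E_t]=0$, but then (i) you are no longer ``using the hypothesis $\|E_t\|\le\mathrm{gap}/8$'' as you wrote, and (ii) for the \emph{operator} norm the maximizing direction depends on $E_t$, so the first-order cancellation is not clean; you would need a smooth surrogate such as $\|Y_tX_t^{-1}\|_F^2$. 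Either way the lemma in the paper is a deterministic statement under a deterministic mini-batch hypothesis, and a supermartingale argument would yield only an in-expectation bound.

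The paper avoids all of this by taking $c_t^2=\cos^2\theta(V,F_t)$ as the potential and expanding the Rayleigh quotient $\|V^\top\tilde F_{t+1}w\|^2/\|\tilde F_{t+1}w\|^2$ directly. The point of that choice is that the first-order noise term then takes the form $2\eta_t\, u^\top(s^2VV^\top - c^2 V_\perp V_\perp^\top)(A_t-A)u$, which a generalized Rayleigh-quotient argument bounds by $-4\eta_t\|A_t-A\|\,s^2c^2$. In other words, in the $\cos^2$ parametrization the noise automatically carries the factor $s^2=1-c_t^2$ and simply shrinks the effective eigengap to $\lambda_k-\lambda_{k+1}-2\|A_t-A\|>0$. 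This yields the purely deterministic recurrence
\[
c_{t+1}^2 \;\ge\; c_t^2\bigl(1+2\eta_t(\lambda_k-\lambda_{k+1}-2\|A_t-A\|)(1-c_t^2)\bigr)-O(\eta_t^2),
\]
which solves to $1-c_t^2=O(1/t)$ with $\eta_t=O(1/t)$, matching the deterministic mini-batch assumption without any martingale step.
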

Let $c_t^2$ denote $\cos^2\theta(V, F_t)$, 
then a key step in proving the lemma is to show the following recurrence
\begin{align} \label{eqn:recurrence}
	c_{t+1}^2 \geq c_t^2( 1 + 2\eta_t (\lambda_k - \lambda_{k+1} - 2 \nbr{A - A_t}) (1-c_t^2) ) - O(\eta_t^2).
\end{align}
We will need the mini-batch size large enough so that $2\nbr{A - A_t}$ is smaller than the eigen-gap.

Another key element in the proof of the theorem is the first order equivalence of the two update rules. To show this, we introduce $F(G_t) \leftarrow \textbf{orth}(G_t + \eta_t A_t G_t)$ to denote the subspace by applying the update rule of $F_t$ on $G_t$.
We show that the potentials of $G_{t+1}$ and $F(G_t)$ are close:
\begin{lemma} \label{lem:equivalence}
$
	\cos^2\theta(V, G_{t+1}) = \cos^2\theta(V, F(G_t)) \pm O(\eta_t^2).
$
\end{lemma}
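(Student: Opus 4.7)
The plan is to use the elementary fact that $\cos^2\theta(V,S)$ depends only on the column span of $S$, so right-multiplication of $S$ by any invertible matrix leaves the quantity unchanged. I will write $G_{t+1} = \tilde F_{t+1} M_t + E_t$, where $\tilde F_{t+1} := G_t + \eta_t A_t G_t$ has the same column span as $F(G_t) = \textbf{orth}(\tilde F_{t+1})$, $M_t$ is invertible, and $\|E_t\| = O(\eta_t^2)$. Combined with a standard perturbation bound for principal angles between well-conditioned subspaces, this yields the claimed $O(\eta_t^2)$ gap.

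Concretely, I would set $M_t := I - \eta_t G_t^\top A_t G_t$ and expand
\[
\tilde F_{t+1} M_t = (G_t + \eta_t A_t G_t)(I - \eta_t G_t^\top A_t G_t) = G_{t+1} - \eta_t^2 A_t G_t G_t^\top A_t G_t,
\]
so $E_t = \eta_t^2 A_t G_t G_t^\top A_t G_t$ has operator norm $O(\eta_t^2)$ under the standing bounds ($\|A_t\| = O(\kappa)$ from the kernel bound and $\|G_t\| = O(1)$ from the good-initialization regime verified below). For small enough step sizes, $M_t = I - O(\eta_t)$ is invertible, so $\tilde F_{t+1} M_t$ shares its column span with $\tilde F_{t+1}$ and hence with $F(G_t)$, which means $\cos^2\theta(V, \tilde F_{t+1} M_t) = \cos^2\theta(V, F(G_t))$ exactly.

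The remaining step is a perturbation estimate: if $S, S'$ differ by $O(\eta_t^2)$ in operator norm and both have smallest singular values bounded below by a constant, then $|\cos^2\theta(V,S) - \cos^2\theta(V,S')| = O(\eta_t^2)$. Using the identity $\cos^2\theta(V,S) = \sigma_{\min}^2(V^\top S (S^\top S)^{-1/2})$, this reduces to a Davis--Kahan-style bound on the orthogonal projectors onto the two column spans, which is standard. The main technical obstacle is verifying the well-conditioning hypothesis for $G_t$, i.e., that $G_t^\top G_t$ stays close to $I$ (and thus both $\tilde F_{t+1}$ and $G_{t+1}$ are well-conditioned). The key observation is that $G_t^\top (I - G_t G_t^\top) A_t G_t = 0$ whenever $G_t^\top G_t = I$, so the first-order cross term in the expansion of $G_{t+1}^\top G_{t+1}$ vanishes and $G_{t+1}^\top G_{t+1} = G_t^\top G_t + O(\eta_t^2)$. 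Starting from $V_0^\top V_0 = I$ and summing the $O(\eta_t^2)$ deviations (which are finite because $\eta_t = O(1/t)$ is square-summable) keeps $G_t$ well-conditioned throughout the run, completing the argument.
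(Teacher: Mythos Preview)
Your approach is correct and takes a genuinely different route from the paper. The paper proves the lemma by expanding $\cos^2\theta(V,X) = \lambda_{\min}\bigl(V^\top X(X^\top X)^{-1}X^\top V\bigr)$ as a Taylor series in $\eta_t$, separately for $X=\tilde F(G_t)$ and for $X=G_{t+1}$, and then checking by direct (and fairly lengthy) algebra that the zeroth- and first-order terms coincide: both reduce to the same matrix
\[
M = V^\top PP^\top V + \eta_t V^\top PP^\top A_t V + \eta_t V^\top A_t PP^\top V - 2\eta_t V^\top PP^\top A_t PP^\top V,
\]
where $PP^\top = G_t(G_t^\top G_t)^{-1}G_t^\top$. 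Your factorization $G_{t+1} = \tilde F_{t+1}(I-\eta_t G_t^\top A_t G_t) + \eta_t^2 A_t G_t G_t^\top A_t G_t$, combined with the invariance of $\cos^2\theta$ under invertible right-multiplication, bypasses that matching computation and makes the first-order equivalence transparent; the tradeoff is that you then need a separate perturbation bound for $\cos^2\theta$ under an additive $O(\eta_t^2)$ change, whereas the paper gets this directly from Weyl's inequality on the expanded $k\times k$ matrices.

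One caution on the well-conditioning step: the identity $G_t^\top(I-G_tG_t^\top)A_tG_t=0$ holds only when $G_t^\top G_t=I$ exactly. Once $G_t^\top G_t=I+\Delta_t$ with $\Delta_t\neq 0$, the first-order term in $G_{t+1}^\top G_{t+1}-G_t^\top G_t$ is $-\eta_t(\Delta_t Q_t + Q_t\Delta_t)$ with $Q_t=G_t^\top A_tG_t$, not zero; a naive recurrence $\|\Delta_{t+1}\|\leq\|\Delta_t\|(1+O(\eta_t))+O(\eta_t^2)$ then allows polynomial drift. This is fixable (the Lyapunov correction is contractive when $Q_t\succ 0$), but as written the inductive claim $G_{t+1}^\top G_{t+1}=G_t^\top G_t+O(\eta_t^2)$ needs a little more care. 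The paper does not track $G_t^\top G_t$ at all and simply works with the projector $PP^\top$, though it too implicitly needs $(G_t^\top G_t)^{-1}$ bounded for its $O(\eta_t^2)$ remainder to be uniform.
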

The lemma means that applying the two update rules to the same input will result in two subspaces with similar potentials. Then by (\ref{eqn:recurrence}),  we have $1-\cos^2\theta(V, G_t) = O(1/t)$ which leads to our theorem.
The proof of Lemma~\ref{lem:equivalence} is based on the observation that $\cos^2 \theta(V, X)  =  \lambda_{\text{min}}(V^\top  X(X^\top X)^{-1} X^\top V)$.
Comparing the Taylor expansions w.r.t.\ $\eta_t$  for $X=G_{t+1}$ and $X=F(G_t)$ leads to the lemma.

\subsection{Doubly stochastic update} 

The $H_t$ computed in the doubly stochastic update is no longer in the RKHS so the principal angle is not well defined. Instead, we will compare the evaluation of functions from $H_t$ and the true principal subspace $V$ respectively on a point $x$. Formally, we show that for any function $v\in V$ with unit norm $\nbr{v}_\Fcal=1$, there exists a function $h$ in $H_t$ such that for any $x$, $\text{err}:=\abr{v(x) - h(x)}^2$ is small with high probability. 

To do so, we need to introduce a companion update rule: $\Gtil_{t+1} \leftarrow \Gtil_{t} + \eta_t k(x_t,\cdot) h_t^\top - \eta_t \Gtil_t h_t h_t^\top$ resulting in function in the RKHS, but the update makes use of function values from $h_t\in H_t$ which outside the RKHS.  
Let $w = \Gtil^\top v$ be the coefficients of $v$ projected onto $\Gtil$, $h = H_t w$, and $z = \Gtil_t w$. 
Then the error can be decomposed as 
\begin{align}
  \abr{v(x) - h(x)}^2 & = \abr{v(x) - z(x) + z(x) - h(x)}^2 \leq 2\abr{v(x) - z(x)}^2 + 2\abr{z(x) - h(x)}^2 \nonumber\\
  & \leq \underbrace{2 \kappa^2 \nbr{v - z}^2_{\Fcal}}_{\text{(I: Lemma~\ref{lem:RKHSbound})}}
  + \underbrace{2\abr{z(x) - h(x)}^2}_{\text{(II: Lemma~\ref{lem:bound})}}. \label{eqn:error}
\end{align}
%
By definition,  $\nbr{v - z}_{\Fcal}^2 = \nbr{v}_\Fcal^2 - \nbr{z}_\Fcal^2  \leq 1 - \cos^2 \theta(V, \Gtil_t)$,
so the first error term can be bounded by the guarantee on $\Gtil_t$, which can be obtained by similar arguments in Theorem~\ref{thm:stoch}. For the second term, note that $\Gtil_t$ is defined in such a way that the difference between $z(x)$ and $h(x)$ is a martingale, which can be bounded by careful analysis.
\begin{theorem}\label{thm:doubly}
Assume (\ref{eqn:init}) and suppose the mini-batch sizes satisfy that for any $1\leq i \leq t$, $\nbr{A - A_i} < (\lambda_k - \lambda_{k+1})/8$ and are of order $\Omega(\ln \frac{t}{\delta})$.
There exist step sizes $\eta_i = O(1/i)$, such that the following holds. If $\Omega(1) = \lambda_k(\Gtil_i^\top \Gtil_i) \leq \lambda_1(\Gtil_i^\top \Gtil_i) = O(1)$ for all $1\leq i\leq t$, then for any $x$ and any function $v$ in the span of $V$ with unit norm $\nbr{v}_\Fcal = 1$, we have that with probability at least $1- \delta$, there exists $h$ in the span of $H_t$ satisfying
$
  |v(x) - h(x)|^2 = O\rbr{\frac{1}{t} \ln \frac{t}{\delta}}.
$
\end{theorem}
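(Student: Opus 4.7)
The proof starts from the error decomposition already derived in (\ref{eqn:error}), so the task reduces to bounding the two terms (I) $2\kappa^2\nbr{v-z}_{\Fcal}^2$ and (II) $2\abr{z(x)-h(x)}^2$ separately, and then combining them under the stated high-probability guarantees. The overall strategy is: (a) extend Theorem~\ref{thm:stoch} to the companion iterate $\Gtil_t$ to control (I) at rate $O(1/t)$; (b) use a martingale concentration argument on $H_t - \Gtil_t$, evaluated at the test point $x$, to control (II) at rate $O(\ln(t/\delta)/t)$; (c) combine the two via the decomposition (\ref{eqn:error}). The well-conditioning hypothesis $\Omega(1)=\lambda_k(\Gtil_i^\top \Gtil_i)\leq \lambda_1(\Gtil_i^\top \Gtil_i)=O(1)$ is what ensures that $w=\Gtil_t^\top v$ has bounded norm and that all the martingale increments stay in a bounded range.

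\textbf{Term (I).} I would repeat the recurrence argument from the proof of Theorem~\ref{thm:stoch} for the iterate $\Gtil_{t+1}\leftarrow \Gtil_t+\eta_t k(x_t,\cdot)h_t^\top -\eta_t \Gtil_t h_t h_t^\top$. The key observation is that $\Gtil_t$ is still a sequence in the RKHS, driven by the same stochastic approximation $A_t f(\cdot)=f(x_t)k(x_t,\cdot)$ of $A$; the only difference from $G_t$ is that the ``evaluation factor'' $g_t$ gets replaced by $h_t$. Writing $h_t=g_t+(h_t-g_t)$ and using that $\nbr{h_t-\gtil_t}$ is small (by the martingale bound from step (b)) and that $\gtil_t\approx g_t$ whenever $\Gtil_t$ is well-conditioned, I expect to obtain a perturbed version of the recurrence (\ref{eqn:recurrence}):
\[
  c_{t+1}^2 \geq c_t^2\bigl(1+2\eta_t(\lambda_k-\lambda_{k+1}-2\nbr{A-A_t})(1-c_t^2)\bigr)-O(\eta_t^2),
\]
with $c_t^2=\cos^2\theta(V,\Gtil_t)$, and hence $1-\cos^2\theta(V,\Gtil_t)=O(1/t)$. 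Since $\nbr{v-z}_\Fcal^2\leq 1-\cos^2\theta(V,\Gtil_t)$ by construction (as stated before Theorem~\ref{thm:doubly}), this yields term (I) at rate $O(1/t)$.

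\textbf{Term (II).} Let $\mathcal{F}_{i}$ denote the $\sigma$-field generated by the data points and random features up to and including step $i$, and set $\xi_i=k(x_i,\cdot)-\phi_{\omega_i}(x_i)\phi_{\omega_i}(\cdot)$. Then $\EE[\xi_i\mid x_i,\mathcal{F}_{i-1}]=0$, so for any fixed $x$ the sequence $\{\xi_i(x)\}$ is a martingale difference sequence, bounded by $|\xi_i(x)|\leq \kappa+\phi^2$. Subtracting the update rules for $\Gtil_t$ and $H_t$ gives
\[
  \Gtil_{t+1}-H_{t+1}=(\Gtil_t-H_t)(I-\eta_t h_t h_t^\top)+\eta_t \xi_t h_t^\top.
\]
Evaluating at $x$ and applying this to the fixed vector $w$ (recalling the well-conditioning bounds on $\Gtil_i^\top\Gtil_i$ which in turn bound $\nbr{w}$), I would unroll this recursion to express $z(x)-h(x)$ as a sum $\sum_{i=1}^{t}\eta_i \xi_i(x)\,a_i$, where $a_i$ is an $\mathcal{F}_{i-1}$-measurable coefficient of magnitude $O(1)$ (obtained from the product of the $(I-\eta_j h_j h_j^\top)$ factors, which are contracting under the well-conditioning condition). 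An Azuma--Hoeffding inequality then gives, with probability at least $1-\delta$,
\[
  \abr{z(x)-h(x)}^2 \leq C\,\sum_{i=1}^{t}\eta_i^2\,\ln\tfrac{t}{\delta} = O\!\rbr{\tfrac{1}{t}\ln\tfrac{t}{\delta}},
\]
using $\eta_i=O(1/i)$ so that $\sum_i \eta_i^2=O(1/t)$ up to logs. This is the content of the referenced Lemma~\ref{lem:bound}.

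\textbf{Combining and main obstacle.} Plugging both bounds into (\ref{eqn:error}) and taking a union bound over the randomness used in the two arguments yields the stated $|v(x)-h(x)|^2=O((1/t)\ln(t/\delta))$. The main obstacle I anticipate is the coupling in step (I): the iterate $\Gtil_t$ is driven by $h_t$, which itself is only close to $\gtil_t$ because of the martingale control in step (II), while step (II)'s boundedness constants rely on $\Gtil_t$ being well-conditioned. To break this circularity I would do an induction on $t$, maintaining simultaneously (i) $1-\cos^2\theta(V,\Gtil_t)=O(1/t)$, (ii) the well-conditioning hypothesis on $\Gtil_t^\top \Gtil_t$, and (iii) the high-probability bound on $\sup_{i\leq t}\abr{z(x)-h(x)}$; each bound gives an $O(\eta_t^2)$-size perturbation in the recurrence of the next, which is negligible against the eigengap-driven drift and hence the induction closes.
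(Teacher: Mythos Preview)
Your overall strategy matches the paper's: decompose via (\ref{eqn:error}), control term (I) by a perturbed recurrence for $\Gtil_t$, control term (II) by unrolling $\Gtil_t-H_t$ into a martingale and applying Azuma, then close the circularity by induction on $t$. There is, however, a genuine quantitative gap in your treatment of term (II).

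You write the unrolled difference as $\sum_{i=1}^t\eta_i\xi_i(x)a_i$, assert $|a_i|=O(1)$, and then claim $\sum_{i=1}^t\eta_i^2=O(1/t)$. But with $\eta_i=O(1/i)$ one has $\sum_{i=1}^t\eta_i^2=O(1)$, not $O(1/t)$; your Azuma bound would therefore yield only $|z(x)-h(x)|^2=O(\ln(t/\delta))$, which does not decay in $t$. The missing ingredient is a \emph{quantitative} use of the contraction you invoke only qualitatively. In the paper's argument one takes $\eta_j=\theta/j$ and uses the well-conditioning assumption to guarantee $\theta\,\lambda_k\bigl(\EE_j[h_j(x_j)^\top h_j(x_j)]\bigr)\geq 1$, so that
\[
\Bigl\|\prod_{j=i+1}^t\bigl(I-\eta_j\,\EE_j[h_j(x_j)^\top h_j(x_j)]\bigr)\Bigr\|\;\leq\;\prod_{j=i+1}^t\Bigl(1-\tfrac{1}{j}\Bigr)\;=\;\tfrac{i}{t}.
\]
Hence each increment has size $O\bigl((\theta/i)\cdot(i/t)\bigr)=O(1/t)$ \emph{uniformly in $i$}, and summing $t$ squared increments of size $O(1/t^2)$ gives the required $O(1/t)$. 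Note this step genuinely needs mini-batches: with batch size one, $h_jh_j^\top$ is rank one and the product cannot contract in all $k$ directions, so $\lambda_k$ of the averaged matrix is what must be bounded below.

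Two smaller points. First, your $a_i$ involves $\prod_{j>i}(\cdots)$ and so is \emph{not} $\mathcal F_{i-1}$-measurable; the martingale structure instead comes from the fact that, conditioning on all data and on $\omega^{i-1}$, the expectation over $\omega_i$ of the $i$-th increment vanishes. Second, the perturbation $h_t-\gtil_t$ that you feed back into the recurrence for term (I) is itself of order $\sqrt{\ln(t/\delta)/t}$, so $1-\cos^2\theta(V,\Gtil_t)$ inherits a $\ln(t/\delta)$ factor (as stated in Lemma~\ref{lem:RKHSbound}), not the bare $O(1/t)$ you wrote.
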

The point-wise error scales as $\Otil(1/t)$ with the step $t$.  Besides the condition that $\nbr{A - A_i}$ is up to the order of the eigengap, we additionally need that the random features approximate the kernel function up to constant accuracy on all the data points up to time $t$, which eventually leads to $\Omega(\ln \frac{t}{\delta})$ mini-batch sizes. Finally, we need $\Gtil_i^\top \Gtil_i$ to be roughly isotropic, \ie, $\Gtil_i$ is roughly orthonormal. Intuitively, this should be true for the following reasons: $\Gtil_0$ is orthonormal; the update for $\Gtil_t$ is close to that for $G_t$, which in turn is close to $F_t$ that are orthonormal. 

\noindent 
{\bf Proof sketch} 
In order to bound term I in (\ref{eqn:error}), we show that  
\begin{lemma} 
  \label{lem:RKHSbound} 
  $1-\cos^2\theta(V, \Gtil_t) = O\rbr{\frac{1}{t} \ln \frac{t}{\delta}}$.
\end{lemma}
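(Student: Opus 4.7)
The plan is to re-run the potential-increase argument of Theorem~\ref{thm:stoch} for the companion sequence $\Gtil_t$, treating the substitution of $h_t$ (drawn from the doubly-stochastic iterate $H_t$) in place of the self-evaluation $\sbr{\gtil_t^1(x_t),\ldots,\gtil_t^k(x_t)}^\top$ as a controlled perturbation. Since $\Gtil_t$ still lives in the RKHS, the potential $c_t^2 := \cos^2\theta(V, \Gtil_t)$ is well-defined, and the update
\begin{align*}
\Gtil_{t+1} = \Gtil_t\bigl(I - \eta_t h_t h_t^\top\bigr) + \eta_t\, k(x_t,\cdot)\, h_t^\top
\end{align*}
differs from the $G_t$-update only in that $h_t$ replaces the self-evaluation vector in both the projector and the outer product.

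First I would set $\xi_t := h_t - \sbr{\gtil_t^1(x_t),\ldots,\gtil_t^k(x_t)}^\top$ and Taylor-expand the potential exactly as in the proofs of Lemma~\ref{lem:Ft} and Lemma~\ref{lem:equivalence}, using the identity $\cos^2\theta(V, X) = \lambda_{\min}\bigl(V^\top X (X^\top X)^{-1} X^\top V\bigr)$. The terms that differ from Theorem~\ref{thm:stoch} are either linear or quadratic in $\xi_t$; the linear ones carry a factor $\eta_t$ and the quadratic ones carry $\eta_t^2$. Using the hypothesis $\lambda_k(\Gtil_t^\top \Gtil_t) = \Omega(1)$, $\lambda_1(\Gtil_t^\top \Gtil_t) = O(1)$ to keep $(\Gtil_t^\top\Gtil_t)^{-1}$ bounded in operator norm, the argument yields a perturbed recurrence of the form
\begin{align*}
c_{t+1}^2 \geq c_t^2\Bigl(1 + 2\eta_t\bigl(\lambda_k - \lambda_{k+1} - 2\nbr{A - A_t}\bigr)(1-c_t^2)\Bigr) - O(\eta_t^2) - O\bigl(\eta_t\, \nbr{\xi_t}\bigr).
\end{align*}

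Next I would bound $\nbr{\xi_i}$ uniformly over $i\leq t$ via the martingale structure underlying $H_i - \Gtil_i$. Conditioned on the shared history of data points and on $h_j$, the increment $\phi_{\omega_j}(x_j)\phi_{\omega_j}(\cdot) - k(x_j,\cdot)$ that drives the $H$/$\Gtil$ discrepancy is mean-zero and uniformly bounded, so $H_i(x) - \Gtil_i(x)$ is a scalar martingale at every fixed $x$; this is precisely the observation that Lemma~\ref{lem:bound} exploits to handle term II of~(\ref{eqn:error}). Applying Azuma--Hoeffding at $x=x_i$ with mini-batch sizes $\Omega(\ln(t/\delta))$ and taking a union bound over $i\leq t$ gives $\nbr{\xi_i}^2 = O\bigl(\ln(t/\delta)/i\bigr)$ with probability at least $1-\delta$. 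Plugging this into the perturbed recurrence with $\eta_i = \Theta(1/i)$ and telescoping exactly as in Theorem~\ref{thm:stoch} (initialized at $c_0^2\geq 1/2$ by~(\ref{eqn:init})) delivers the stated rate $1 - c_t^2 = O\bigl(\ln(t/\delta)/t\bigr)$.

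The main obstacle is that $\xi_t$ is not independent of the $\Gtil$-history: $h_t$ depends on $H_t$, which is coupled to $\Gtil_t$ through the data and features used in every prior step. The perturbed recurrence must therefore be closed inductively on a single good event on which, simultaneously for all $i\leq t$, (a) $\nbr{A - A_i}<(\lambda_k-\lambda_{k+1})/8$, (b) $\Gtil_i^\top\Gtil_i$ is well-conditioned, and (c) the feature-driven martingale $\xi_i$ is controlled; a union bound over $i$ converts the per-step $\delta/t$ failure probability into the overall $\delta$ in the statement. Maintaining (b) along the induction is the most delicate book-keeping --- it has to be bootstrapped from the first-order equivalence with $F_i$ used in Lemma~\ref{lem:equivalence} together with the already-proven closeness of $\Gtil_i$ to $G_i$ --- and is precisely where the isotropy hypotheses on $\Gtil_i^\top\Gtil_i$ enter essentially.
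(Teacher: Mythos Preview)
Your proposal is essentially the paper's own argument: derive a perturbed version of the recurrence~(\ref{eqn:recurrence}) for $c_t^2 = \cos^2\theta(V,\Gtil_t)$, with the extra error term coming from $\xi_t = h_t(x_t) - \gtil_t(x_t)$, control $\nbr{\xi_t}$ via the random-feature martingale of Lemma~\ref{lem:bound} (Azuma plus a union bound over $i\leq t$), and then solve the recurrence with $\eta_i = \Theta(1/i)$.

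The one place where your write-up diverges from the paper is your closing paragraph on maintaining the well-conditioning of $\Gtil_i^\top\Gtil_i$. You flag this as ``the most delicate book-keeping'' and propose to bootstrap it from the first-order equivalence with $F_i$; the paper does not attempt this. Instead, $\Omega(1) = \lambda_k(\Gtil_i^\top\Gtil_i) \leq \lambda_1(\Gtil_i^\top\Gtil_i) = O(1)$ is taken as a standing hypothesis in Theorem~\ref{thm:doubly}, and Lemma~\ref{lem:RKHSbound} is proved under that hypothesis. What the paper \emph{does} carry inductively (alongside the recurrence for $c_t^2$) is the boundedness of the step-size parameter $\theta$, via showing $\lambda_k\bigl(\EE_x[\gtil_i(x)^\top\gtil_i(x)]\bigr) \geq \lambda_k c_i^2 \cdot \lambda_k(\Gtil_i^\top\Gtil_i) = \Omega(1)$. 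So your instinct that an induction is needed is correct, but the quantity being inducted on is slightly different, and the isotropy of $\Gtil_i$ itself is assumed rather than proved.
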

This is proved by following similar arguments to get the recurrence~(\ref{eqn:recurrence}), except with an additional error term, which is caused by the fact that the update rule for $\Gtil_{t+1}$ is using the evaluation $h_t(x_t)$ rather than $\gtil_t(x_t)$. Bounding this additional term thus relies on bounding the difference between $h_t(x) - \gtil_t(x)$, which is also what we need for bounding term II in (\ref{eqn:error}). For this, we show:
\begin{lemma} 
  \label{lem:bound} 
	For any  $x$ and unit vector $w$, with probability $\geq 1-\delta$ over $(\Dcal^t,\omega^t)$, 
$
  |\gtil_t(x)w - h_t(x) w|^2= O\rbr{\frac{1}{t}\ln\rbr{\frac{t}{\delta}}}.
$
\end{lemma}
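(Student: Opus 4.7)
The plan is to track the row-vector evaluation error $\delta_t(x) := \tilde{g}_t(x) - h_t(x) \in \mathbb{R}^{1\times k}$ through a linear stochastic recursion driven by the random-feature noise, reduce $|\delta_t(x) w|^2$ to a second-moment analysis using the martingale structure, and then upgrade to a high-probability bound via Freedman's inequality.

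Subtracting the update rules for $\tilde{G}_t$ and $H_t$ — both of which use the same evaluation $h_t = H_t(x_t)$ — and evaluating at the test point $x$ gives
\begin{equation*}
\delta_{t+1}(x) = \delta_t(x)\bigl(I - \eta_t h_t h_t^\top\bigr) + \eta_t\, \epsilon_t(x)\, h_t^\top,
\end{equation*}
where $\epsilon_t(x) := k(x_t,x) - \phi_{\omega_t}(x_t)\phi_{\omega_t}(x)$ is scalar, bounded by $\kappa + \phi^2$, and satisfies $\mathbb{E}[\epsilon_t(x) \mid \mathcal{F}_{t-1}, x_t] = 0$ by Bochner's theorem. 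Since $\tilde{G}_0 = H_0 = V_0$, we have $\delta_0 \equiv 0$; unrolling gives
\begin{equation*}
\delta_{t+1}(x)\, w \;=\; \sum_{i=0}^{t} \eta_i\, \epsilon_i(x)\, h_i^\top M_i\, w, \qquad M_i := \prod_{j=i+1}^{t}\bigl(I - \eta_j h_j h_j^\top\bigr).
\end{equation*}

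Next I would control the second moment via the Doob martingale $Z_s := \mathbb{E}[\delta_{t+1}(x) w \mid \mathcal{F}_s]$. Orthogonality of martingale differences gives
\begin{equation*}
\mathbb{E}[(\delta_{t+1}(x) w)^2] - Z_0^2 \;=\; \sum_{s=1}^{t+1} \mathbb{E}[(Z_s - Z_{s-1})^2] \;\lesssim\; \sum_{i=0}^{t} \eta_i^2\, \mathbb{E}[(h_i^\top M_i w)^2].
\end{equation*}
The crucial ingredient is an expected contraction of $M_i$. From Lemma~\ref{lem:RKHSbound} and the hypothesis $\lambda_k(\tilde{G}_i^\top \tilde{G}_i) = \Omega(1)$, the subspace $H_i$ stays close to the top-$k$ eigen subspace of $A$, so $\mathbb{E}[h_j h_j^\top \mid \mathcal{F}_{j-1}] \succeq c\, I_k$ for a constant $c$ of order $\lambda_k$. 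A scalar sub-martingale argument on $\|M_i w\|^2$ then yields the expected geometric decay
\begin{equation*}
\mathbb{E}\|M_i w\|^2 \;\lesssim\; \prod_{j=i+1}^{t}\bigl(1 - c\eta_j\bigr) \;\lesssim\; (i/t)^{c\gamma} \quad \text{for } \eta_j = \gamma/j,
\end{equation*}
so choosing $\gamma$ with $c\gamma > 1$ makes $\sum_{i=1}^{t} i^{-2}(i/t)^{c\gamma} = O(1/t)$. Finally, Freedman's inequality applied to $(Z_s)$ — whose increments are $O(\eta_s)$ and whose predictable quadratic variation matches the sum above — converts the $O(1/t)$ variance into the desired tail bound $|\delta_{t+1}(x) w|^2 = O(t^{-1} \ln(t/\delta))$ with probability at least $1-\delta$.

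The main obstacle is the contraction estimate for $M_i$. Individually, each factor $I - \eta_j h_j h_j^\top$ is only a rank-one perturbation of the identity, the factors do not commute, and $h_j$ is correlated with all past randomness through $H_j$; a naive $\|M_i\| \leq 1$ bound only yields the trivial $\tilde{O}(1)$ rate. Extracting genuine contraction in every direction $w$, rather than merely along $h_j$, requires a matrix-martingale argument (e.g.\ matrix Freedman or Ahlswede--Winter) leveraging the spectral lower bound on $\mathbb{E}[h_j h_j^\top \mid \mathcal{F}_{j-1}]$ supplied by the isotropy hypothesis. A further subtlety is that the noise $\epsilon_i(x)$ at time $i$ and the later factors in $M_{i'}$ for $i' < i$ share randomness through $h_j,\, j > i'$; routing the analysis through the Doob decomposition (rather than a direct term-by-term martingale) is what preserves the required cross-term cancellation.
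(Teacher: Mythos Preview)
Your unrolled decomposition and the martingale idea are exactly how the paper begins, but you have overlooked the simplification that actually drives the paper's argument: the lemma is proved under the mini-batch hypotheses of Theorem~\ref{thm:doubly}, so the linear map appearing in the recursion is not the rank-one $I-\eta_j h_j h_j^\top$ but the mini-batch average $I-\eta_j\,\EE_j[h_j(x_j)^\top h_j(x_j)]$. The full statement (Lemma~\ref{lem:bound1} in the appendix) assumes $\theta\,\lambda_k\bigl(\EE_j[h_j h_j^\top]\bigr)\ge 1$, which with $\eta_j=\theta/j$ gives a \emph{pathwise spectral-norm} contraction $\bigl\|I-\eta_j\,\EE_j[h_j h_j^\top]\bigr\|\le 1-1/j$. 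Hence $\bigl\|a_{t,i}w\bigr\|\le \eta_i\prod_{j>i}(1-1/j)=O(1/t)$ deterministically, every increment is $O(1/t)$, and a plain Azuma bound finishes. No expected-contraction argument, no matrix Freedman, and no Doob martingale are needed.

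In the rank-one setting you analyze, the ``scalar sub-martingale argument on $\|M_i w\|^2$'' does not go through as written. The recursion $M_{i-1}=(I-\eta_i h_i h_i^\top)M_i$ runs \emph{backward} in time, so at the moment you wish to invoke $\EE[h_i h_i^\top\mid\mathcal{F}_{i-1}]\succeq cI$, the vector $M_i w$ already depends on $h_{i+1},\dots,h_t$, which in turn depend on $(x_i,\omega_i)$ through $H_{i+1}$; you therefore cannot lower-bound $\EE[(h_i^\top M_i w)^2\mid\mathcal{F}_{i-1}]$ by $c\|M_i w\|^2$. If one insists on avoiding mini-batches, a cleaner route is to track $\|\delta_j(x)\|^2$ \emph{forward}: conditioning on $\mathcal{F}_{j-1}$ and $x_j$ kills the $\epsilon_j$-cross term, and then averaging over $x_j$ gives $\EE[\|\delta_{j+1}\|^2\mid\mathcal{F}_{j-1}]\le (1-c\eta_j)\|\delta_j\|^2+O(\eta_j^2)$ directly.

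Finally, appealing to Lemma~\ref{lem:RKHSbound} to establish $\EE[h_j h_j^\top\mid\mathcal{F}_{j-1}]\succeq cI$ is circular, since that lemma already consumes the present one. The paper breaks the loop by an induction on $t$ (statement~(1) of Lemma~\ref{lem:bound1}): the inductive hypothesis first controls $\|h_i(x_i)\|=O(1)$, this feeds into the pathwise contraction and Azuma bounds at step $t+1$, and only afterwards is the result combined with Lemma~\ref{lem:RKHSbound1}.
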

The key to prove this lemma is that our construction of $\Gtil_t$ makes sure that the difference between $\gtil_t(x)w$ and $h_t(x)w$ consists of their difference in each time step. Furthermore, the difference forms a martingale and thus can be bounded by Azuma's inequality. See the supplementary for the details.

\section{Extensions} 
\label{sec:extensions}
The proposed algorithm is a general technique for solving eigenvalue problems in the functional space. Numerous machine learning algorithms boil down to this fundamental operation. Therefore, our method can be easily extended to solve many related tasks, including latent variable estimation, kernel CCA, spectral clustering, \etc.

We briefly illustrate how to extend to different machine learning algorithms in the following subsections.

\subsection{Locating individual eigenfunctions}
The proposed algorithm finds the subspace spanned by the top $k$ eigenfunctions, but it does not isolate the individual eigenfunctions. When we need to locate these individual eigenfunctions, we can use a modified version, called Generalized Hebbian Algorithm (GHA) \cite{Sanger89b}. Its update rule is
\begin{align}
G_{t+1} = G_{t} + \eta_t A_tG_{t}  - \eta_t    G_{t} \operatorname{UT}\sbr{ G_{t} ^\top A_tG_{t}},
\end{align}
where $\operatorname{UT}\sbr{\cdot}$ is an operator that sets the lower triangular parts to zero.

To understand the effect of the upper triangular operator, we can see that $\operatorname{UT}\sbr{\cdot}$ forces the update rule for the first function of $G_t$ to be exactly the same as that of one-dimensional subspace; all the contributions from the other functions are zeroed out. 
\begin{align}
g_{t+1}^1 = g_{t}^1 + \eta_t A_t g_{t}^1  - \eta_t    g_{t}^1 { g_{t}^1} ^\top A_t g^1_{t},
\end{align}

Therefore, the first function will converge to the eigenfunction corresponding to the top eigenvalue.

For all the other functions, $\operatorname{UT}\sbr{\cdot}$ implements a Gram-Schmidt-like orthogonalization that subtracts the contributions from other eigenfunctions.

\subsection{Latent variable models  and kernel SVD}
Latent variable models are probabilistic models that assume unobserved or latent structures in the data. It appears in specific forms such as Gaussian Mixture Models (GMM), Hidden Markov Models (HMM) and Latent Dirichlet Allocations (LDA), \etc. 

The EM algorithm \cite{DemLaiRub77} is considered the standard approach to solve such models. Recently, spectral methods have been proposed to estimate latent variable models with provable guarantees \cite{AnaFosHsuKakLiu12,SonAnaDaiXie14}.
Compared with the EM algorithm , spectral methods are faster to compute and do not suffer from local optima.

\begin{figure}[t]
  \hrule\vspace{1mm}
  \text{\bf Algorithm 3: $\cbr{ \alpha_i, \beta_i }_1^t = \text{\bf{DSGD-KSVD}}(\PP(x), \PP(y), k)$}\vspace{1mm}
  \hrule\vspace{1mm}
  \text{\bf Require: $\PP(\omega),\, \phi_{\omega}(x) .$}\\[-4mm]
  \begin{algorithmic}[1]  \label{alg:kpca}
    \FOR{$i=1,\ldots, t$}
      \STATE Sample $x_i \sim \PP(x)$. Sample $y_i \sim \PP(y)$.
      \STATE Sample $\omega_i \sim \PP(\omega)$ with {\color{red}seed $i$}.
      \STATE $u_i = \text{\bf Evaluate}(x_i,\cbr{\alpha_j}_{j=1}^{i-1}) \, \in \RR^k$.
      \STATE $v_i = \text{\bf Evaluate}(y_i,\cbr{\beta_j}_{j=1}^{i-1}) \, \in \RR^k$.
      \STATE $W = u_i v_i^\top + v_i u_i^\top$
      \STATE $\alpha_i = \eta_i \phi_{\omega_i}(x_i) {v_i}$.
      \STATE $\beta_i = \eta_i \phi_{\omega_i}(y_i) {u_i}$.
      \STATE $\alpha_j = \alpha_j - \eta_i  W\alpha_j ,$ for $j =1,\ldots,i-1$.
      \STATE $\beta_j = \beta_j - \eta_i  W\beta_j, $ for $j =1,\ldots,i-1$.
    \ENDFOR
  \end{algorithmic}
  \hrule
\end{figure}

The key algorithm behind  spectral methods is the SVD. However, kernel SVD scales quadratically with the number of data points.
Our algorithm can be straightforwardly extended to solve kernel SVD. The extension hinges on the following relation
\begin{align*}
 \left[ \begin{array}{cc}
0 & A^\top\\
A & 0
\end{array} \right]  
\left [\begin{array}{c}
V \\
U
\end{array} \right] 
= 
\left[ \begin{array}{c}
A^\top U\\
A V
\end{array} \right]  
= 
\left[ \begin{array}{c}
 V \\
  U
\end{array} \right] \Sigma,
\end{align*}
where $U\Sigma V^\top$ is the SVD of $A$.

It is therefore reduced to the eigenvalue problem. Plugging it into the update rule and treating the two blocks separately, we thus get  two 
simultaneous update rules
 \begin{align}
 W_t & = U_t^\top AV_t + V_t^\top A^\top U_t \\
  U_{t+1} &= U_{t} + \eta_t \rbr{A V_t  -  U_t W_t  },  \\
V_{t+1} &= V_{t} + \eta_t \rbr{ A^\top U_t -  V_t W_t }.
\end{align} 

The algorithm for updating the coefficients is summarized in Algorithm~3.

\subsection{Kernel CCA and generalized eigenvalue problem}
Kernel CCA and ICA \cite{BacJor02} can also be solved under the proposed framework because they can be viewed as generalized eigenvalue problem.

Given two variables $X$ and $Y$, CCA finds two projections such that the correlations between the two projected variables are maximized. 
Given the covariance matrices $C_{XX}$, $C_{YY}$, and $C_{XY}$, CCA is equivalent to the following problem
\begin{align*}
\sbr{
\begin{array}{cc}
C_{XX} & C_{XY} \\
C_{YX} & C_{YY}
\end{array}}
\sbr{\begin{array}{c}
g_X \\
g_Y
\end{array}}
=
\rbr{1 + \sigma^2}
\sbr{
\begin{array}{cc}
C_{XX} & \\
& C_{YY}
\end{array}
}
\sbr{\begin{array}{c}
g_X \\
g_Y
\end{array}},
\end{align*}
where $g_X$ and $g_Y$ are the top canonical correlation functions for variables $X$ and $Y$, respectively, and $\sigma$ is the corresponding canonical correlation.

This is a generalized eigenvalue problem. It can reformulated as the following non-convex optimization problem
\begin{align}
\max_{G} \tr \left( G^\top A G \right) ,\\
\text{s.t.} \;\; G^\top B G = I .
\end{align}

Following the derivation for the standard eigenvalue problem, we get the foliowing update rules
\begin{align}
G_{t+1} = G_t + \eta_t \rbr{ I -  B G_t G_t^\top }A G_t.
\end{align}

Denote $G_t^X$ and $G_t^Y$ the canonical correlation functions for $X$ and $Y$, respectively. We can 
rewrite the above update rule as two simultaneous rules
\begin{align}
W_t &=  { {G_t^Y}^\top C_{YX}G_t^X + {G_t^X}^\top C_{XY} G_t^Y } \\
G_{t+1}^X &= G_t^X + \eta_t \sbr{ C_{XY}G_t^Y - C_{XX}G_t^X  W }\\
G_{t+1}^Y &= G_t^Y + \eta_t \sbr{C_{YX}G_t^X - C_{YY}G_t^Y  W}.
\end{align}
We present the detailed updates for coefficients in Algorithm~4.

\begin{figure}[t]
  \hrule\vspace{1mm}
  \text{\bf Algorithm 4: $\cbr{ \alpha_i, \beta_i }_1^t = \text{\bf{DSGD-KCCA}}(\PP(x), \PP(y), k)$}\vspace{1mm}
  \hrule\vspace{1mm}
  \text{\bf Require: $\PP(\omega),\, \phi_{\omega}(x) .$}\\[-4mm]
  \begin{algorithmic}[1]  \label{alg:kpca}
    \FOR{$i=1,\ldots, t$}
      \STATE Sample $x_i \sim \PP(x)$. Sample $y_i \sim \PP(y)$.
      \STATE Sample $\omega_i \sim \PP(\omega)$ with {\color{red}seed $i$}.
      \STATE $u_i = \text{\bf Evaluate}(x_i,\cbr{\alpha_j}_{j=1}^{i-1}) \, \in \RR^k$.
      \STATE $v_i = \text{\bf Evaluate}(y_i,\cbr{\beta_j}_{j=1}^{i-1}) \, \in \RR^k$.
      \STATE $W = u_i v_i^\top + v_i u_i^\top$
      \STATE $\alpha_i = \eta_i  \phi_{\omega_i}(x_i) \sbr{ v_i -  W u_i }$.
      \STATE $\beta_i = \eta_i  \phi_{\omega_i}(y_i) \sbr{ u_i -  W v_i }$.
    \ENDFOR
  \end{algorithmic}
  \hrule
\end{figure}

\subsection{Kernel sliced inverse regression}
Kernel sliced inverse regression~\cite{KimPav09} aims to do sufficient dimension reduction in which the found low dimension representation preserves the statistical correlation with the targets. It also reduces to a generalized eigenvalue problem, and has been shown to find the same subspace as KCCA~\cite{KimPav09}.

\begin{figure}
\centering{
\includegraphics[width=0.45\columnwidth]{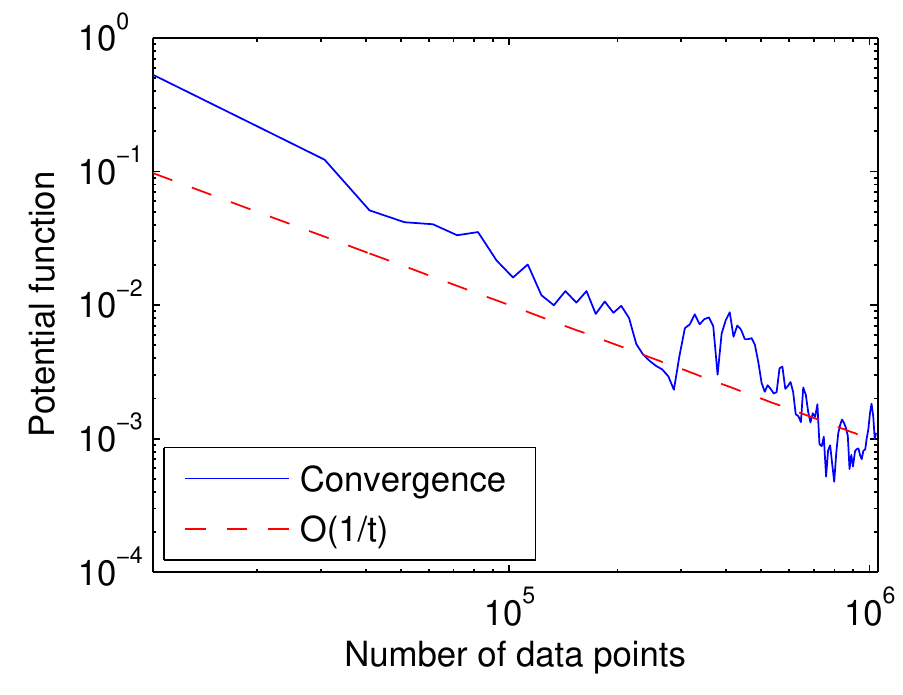}
\caption{Convergence for DSGD-KPCA on the dataset with analytical solution.}
\label{fig:converge_analytical}
}
\end{figure}

\section{Experiments}
\label{sec:experiments}

We demonstrate the effectiveness and scalability of our algorithm on both synthetic and real world datasets.

\subsection{Synthetic dataset with analytical solution}
We first verify the convergence rate of DSGD-KPCA on a synthetic dataset with analytical solution of eigenfunctions \cite{WilSee00}. If the data follow a Gaussian distribution, and we use a Gaussian kernel, then the eigenfunctions are given by the Hermite polynomials.

We generated 1 million data points, and ran DSGD-KPCA with a total of 262,144 random features. In each iteration, we use a data mini-batch of size 512, and a random feature mini-batch of size 128. After all random features are generated, we revisit and adjust the coefficients of existing random features. The kernel bandwidth is set as the true bandwidth of the data.

The step size is scheduled as
\begin{align}
\eta_t = \frac{\theta_0}{1 + \theta_1 t},
\end{align}
where $\theta_0$ and $\theta_1$ are two parameters. We use a small $\theta_1 \approx 0.01$ such that in early stages the step size is large enough to arrive at a good initial solution.

\textbf{Convergence} 
Figure~\ref{fig:converge_analytical} shows the convergence rate of the proposed algorithm seeking top $k=3$ subspace. The potential function is calculated as the squared sine function of the subspace angle between the current solution and the ground-truth. We can see the algorithm indeed converges at the rate $O(1/t)$.

\textbf{Eigenfunction Recovery}
Figure~\ref{fig:eigen_recover} demonstrate the recovered top $k=3$ eigenfunctions compared with the ground-truth. We can see the found solution coincides with one eigenfunction, and only disagree slightly on two others.

\begin{figure}
\centering{
\includegraphics[width=0.45\columnwidth]{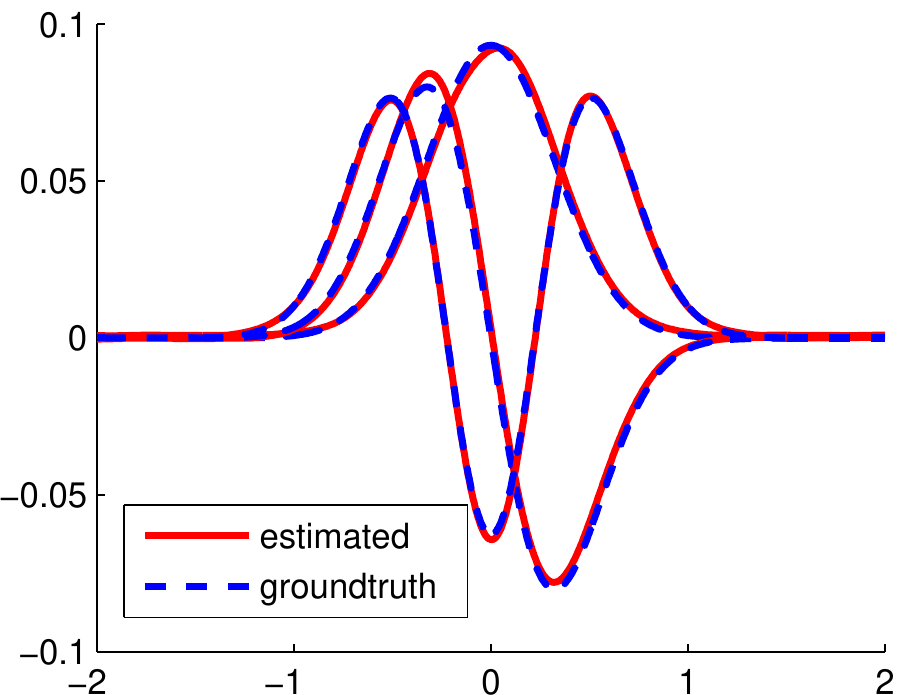}
\caption{Recovered top 3 eigenfunctions using DSGD-KPCA on the dataset with analytical solution.}
\label{fig:eigen_recover}
}
\end{figure}

%
%
%
%
%
%

\subsection{Nonparametric Latent Variable Model}
In \cite{SonAnaDaiXie14}, the authors proposed a multiview nonparametric latent variable model that is solved by kernel SVD followed by tensor power iterations. The algorithm can separate latent variables without imposing specific parametric assumptions of the conditional probabilities. However, the scalability of the algorithm was limited by kernel SVD.

Here, we demonstrate that with DSGD-KSVD, we can learn latent variable models with one million data, achieving higher quality of learned components compared with two other approaches.  

DSGD-KSVD uses a total of 8192 random features, and in each iteration, it uses a feature mini-batch of size 256 and a data mini-batch of size 512.

We compare with 1) random Fourier features with fixed 2048 functions, and 2) random Nystrom features with fixed 2048 functions. The Nystrom features are calculated by first uniformly sampling 2048 data points, and then evaluate kernel function values on these data points \cite{LopSraSmoGhaSch14}.

The dataset consists of two latent components, one is a Gaussian distribution and the other follows a Gamma distribution with shape parameter $\alpha=1.2$. One million data point are generated from this mixture distribution.

Figures~\ref{fig:latent_components} shows the learned conditional distributions for each component. We can see DSGD-KSVD achieves almost perfect recovery, while Fourier and Nystrom random feature methods either confuse high density areas or incorrectly estimate the spread of conditional distributions.

\begin{figure}
\centering{
\includegraphics[width=0.4\columnwidth]{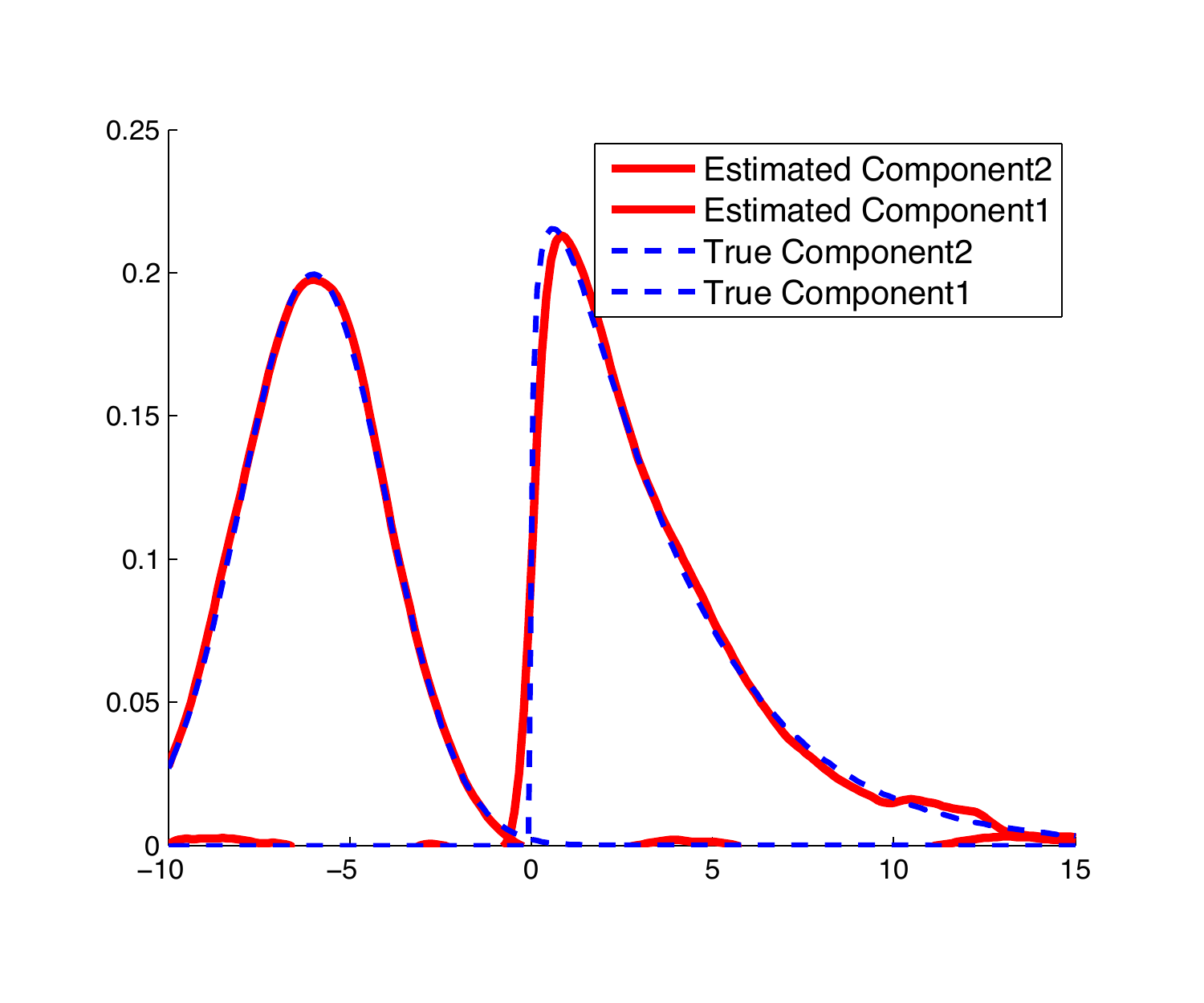}\\
{(a)}\\
\includegraphics[width=0.4\columnwidth]{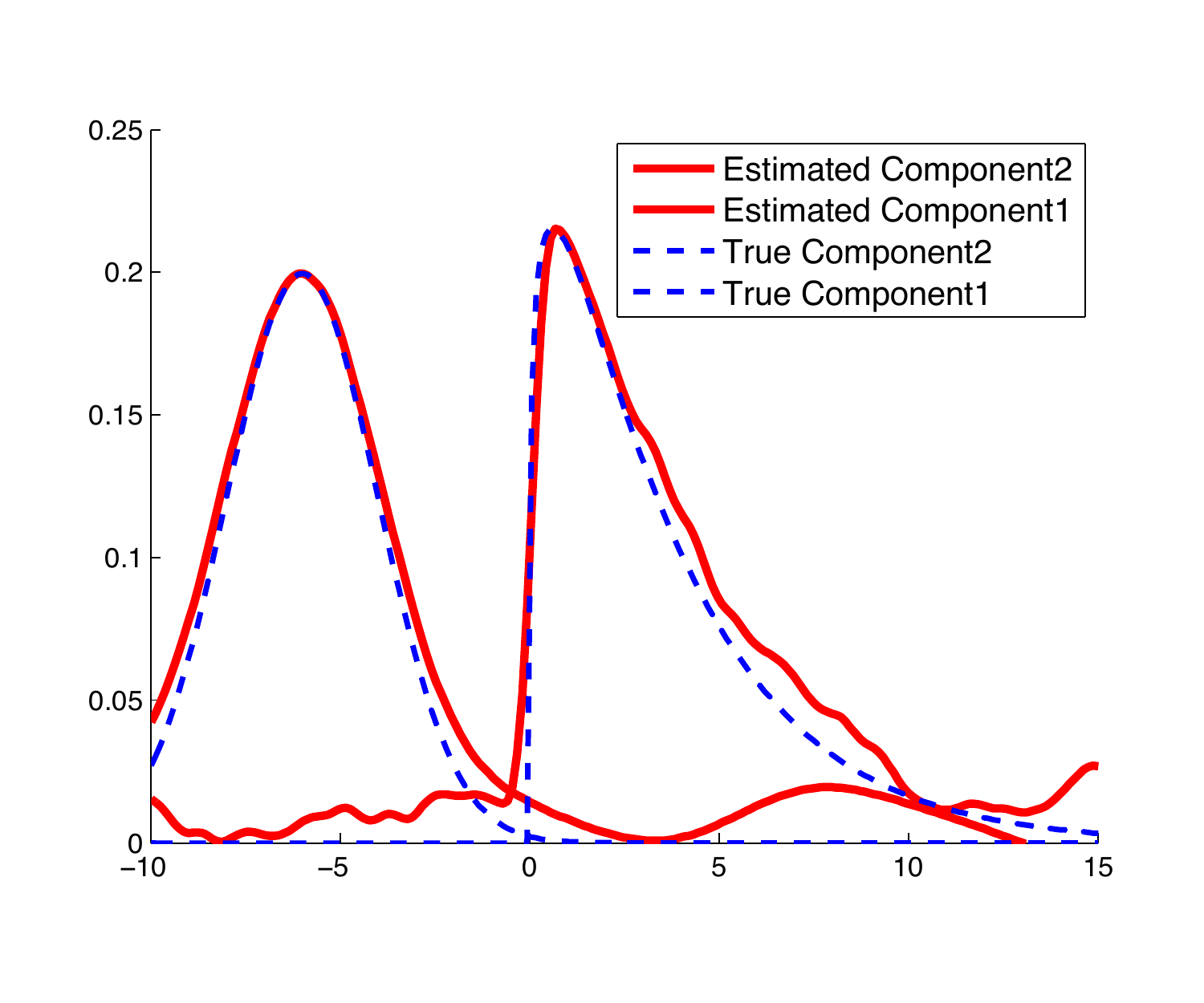}\\
(b)\\
\includegraphics[width=0.4\columnwidth]{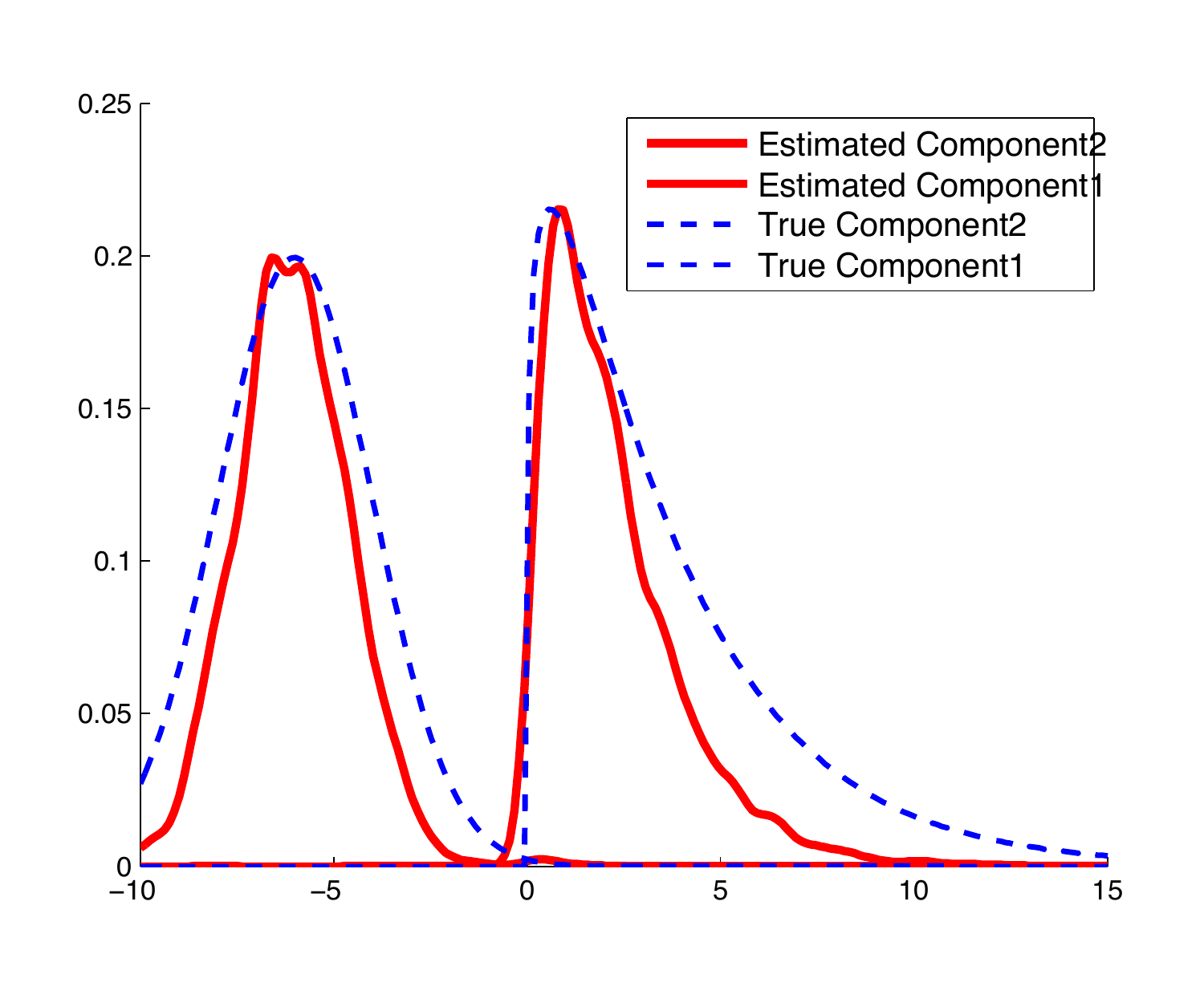}\\
(c)\\
\caption{Recovered latent components (a) DSGD-KSVD, (b) 2048 random features, (c) 2048 Nystrom features.}
\label{fig:latent_components}
}
\end{figure}

\begin{table}
  \vspace{-4pt}
  \setlength{\tabcolsep}{10pt}
  \centering
  \caption{KCCA results on MNIST 8M (top 50 largest correlations)}\label{table:cca_mnist}\vspace{-4pt}
  \begin{tabular}{c|c|c|c|c}
  \hline
 \multirow{2}{*}{ \# of feat} & \multicolumn{2}{c|}{Random features} & \multicolumn{2}{c}{Nystrom features} \\
  \cline{2-5}
   & corrs. & minutes & corrs. & minutes\\
    \hline
  \hline
  256 & 25.2 & 3.2 & 30.4 & 3.0 \\\hline
  512 & 30.7 & 7.0 & 35.3 & 5.1 \\\hline
  1024 & 35.3 & 13.9 & 38.0 & 10.1 \\\hline
  2048 & 38.8 & 54.3 & 41.1 & 27.0 \\\hline
  4096 & 41.5 & 186.7 & 42.7 & 71.0 \\\hline
  \end{tabular}\\
  \bigskip
  \begin{tabular}{c|c|c|c}
  \hline
   \multicolumn{2}{c|}{DSGD-KCCA} & \multicolumn{2}{c}{linear CCA}\\
\hline
corrs. & minutes & corrs. & minutes\\
    \hline
  \hline
  43.5 & 183.2 & 27.4 & 1.1\\\hline
  \end{tabular}
  \vspace{-4pt}
\end{table}

\subsection{KCCA MNIST8M}
We then demonstrate the scalability and effectiveness of our algorithm on a large-scale real world dataset. MNIST8M consists of 8.1 million hand-written digits and their transformations. Each digit is of size $28 \times 28$. We divide each image into the left and right parts, and learn their correlations using KCCA. Thus the input feature dimension is 392. 

The evaluation criteria is the total correlations on the top $k=50$ canonical correlation directions calculated on a separate test set of size 10000. Out of the 8.1 million training data, we randomly choose 10000 as an evaluation set.

We compare with 1) random Fourier and 2) random Nystrom features on both total correlation and running time. We vary the number of random features used for both methods. Our algorithm uses a total of 20480 features. In each iteration, we use feature mini-batches of size 2048 and data mini-batches of size 1024, and we run 3000 iterations. The kernel bandwidth is set using the ``median'' trick and is the same for all methods. Due to randomness, all algorithms are run 5 times, and the mean is reported.

The results are presented in Table~\ref{table:cca_mnist}. We can see Nystrom features generally achieve better results than Fourier features. Note that for Fourier features, we are using the version with $\sin$ and $\cos$ pairs, so the real number of parameters is twice the number in the table, as a result the computational time is almost twice of that for Nystrom features.

Our algorithm achieves the best test-set correlations in comparable run time with random Fourier features. This is especially significant for random Fourier features, since the run time would increase by almost four times if double the number of features were used. We can also see that for large datasets, it is important to use more random features for better performance. Actually, the number of random features required should grow linearly with the number of data points. Therefore, our algorithm provides a good balance between the number of random features used and the number of data points processed.

%

\subsection{Kernel PCA visualization on molecular space dataset}
MolecularSpace dataset contains 2.3 million molecular motifs~\cite{DaiXieHe14}. We are interested in visualizing the dataset with KPCA. The data are represented by sorted Coulomb matrices of size $75 \times 75$ \cite{MonHanFazRupetal12}. Each molecule also has an attribute called power conversion efficiency (PCE). We use a Gaussian kernel with bandwidth chosen by the ``median trick''.
We ran kernel PCA with a total of 16384 random features, with a feature mini-batch size of 512, and data mini-batch size of 1024. We ran 4000 iterations with step size $\eta_t = 1/(1+0.001*t)$.

Figure~\ref{fig:mol_kpca_vis} presents visualization by projecting the data onto the top two principle components.
Compared with linear PCA, KPCA shrinks the distances between the clusters and brings out the important structures in the dataset.
We can also see although the PCE values do not necessarily correspond to the clusters, higher PCE values tend to lie towards the center of the ring structure.

\begin{figure}
\centering{
\begin{subfigure}[b]{0.4\textwidth}
\includegraphics[width=\columnwidth]{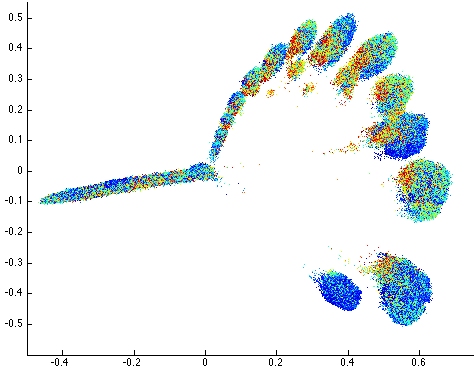}
\caption{}\vspace{-6pt}
\end{subfigure}\hspace{20pt}
\begin{subfigure}[b]{0.4\textwidth}
\includegraphics[width=\columnwidth]{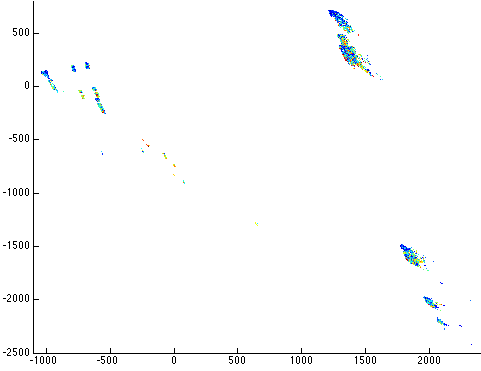}
\caption{}\vspace{-6pt}
\end{subfigure}
\caption{Visualization of the molecular space dataset by the first two principal components. The color corresponds to the PCE value: bluer dots represent lower PCE values while redder dots are for higher PCE values. (a) Kernel PCA; (b) linear PCA. (Best viewed in color)}
\label{fig:mol_kpca_vis}
}
\end{figure}

\begin{figure}
\centering{
\includegraphics[width=0.5\columnwidth]{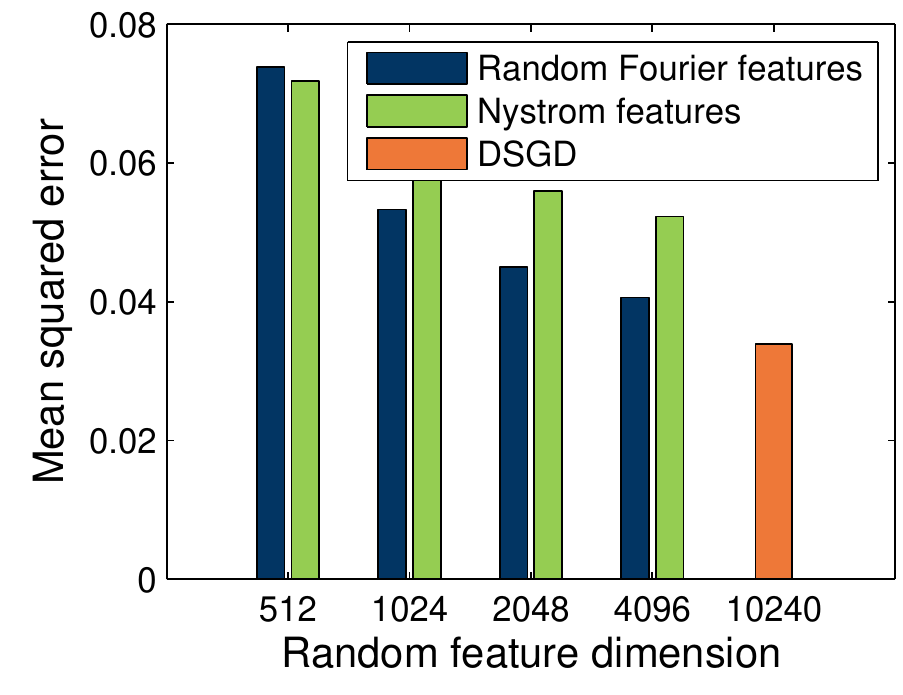}
\caption{Comparison on KUKA dataset.}
\label{fig:kuka_exp}
}
\end{figure}

\subsection{Kernel sliced inverse regression on KUKA dataset}
We evaluate our algorithm under the setting of kernel sliced inverse regression~\cite{KimPav09}, a way to perform sufficient dimension reduction (SDR) for high dimension regression. After performing SDR, we fit a linear regression model using the projected input data, and evaluate mean squared error (MSE).
The dataset records rhythmic motions of a KUKA arm at various speeds, representing realistic settings for robots~\cite{MeiHenSch14}. We use a variant that contains 2 million data points generated by the SL simulator. The KUKA robot has 7 joints, and the high dimension regression problem is to predict the torques from positions, velocities and accelerations of the joints. The input has 21 dimensions while the output is 7 dimensions. Since there are seven independent joints, we set the reduced dimension to be seven. We randomly select 20\% as test set and out of the remaining training set, we randomly choose 5000 as validation set to select step sizes.
The total number of random features is 10240, with mini-feature batch and mini-data batch both equal to 1024. We run a total of 2000 iterations using step size $\eta_t = 15 / (1 + 0.001*t)$.

Figure~\ref{fig:kuka_exp} shows the regression errors for different methods. The error decreases with more random features, and our algorithm achieves lowest MSE by using 10240 random features. Nystrom features do not perform as well in this setting probably because the spectrum decreases slowly (there are seven independent joints) as Nystrom features are known to work well for fast decreasing spectrum.

\section{Conclusions}
We have proposed a general and scalable approach to solve nonlinear component analysis based on doubly stochastic gradients. It is simple, efficient and scalable. In addition, we have theoretical guarantees that the whole subspace converges at the rate $\Otil(1/t)$ to the true subspace. Moreover, since its core is an algorithm for eigenvalue problems in the functional space, it can be applied to various other tasks and models. Finally, we demonstrate the scalability and effectiveness of our algorithm on both synthetic and real world datasets.

\bibliographystyle{abbrv}
  \bibliography{kpca_arxiv}

\newpage
\begin{center}
{\Large Appendix}
\end{center}

\begin{appendix}
\label{sec:appendix}

The appendix is organized as follows.
Section~\ref{sec:setting} reviews notations, the definition of Kernel PCA and the update rules considered. Section~\ref{sec:analysis} provides the sketch of the proof as in the paper. Section~\ref{sec:stoch} provides the proof for the stochastic update rule, and Section~\ref{sec:doubly} provides the proof for the doubly stochastic update rule.

\section{Setting} 
\label{sec:setting}

\paragraph{Notations} Given a distribution $\PP(x)$, a kernel function $k(x,x')$ with RKHS $\Fcal$, the covariance operator $A:\Fcal\mapsto \Fcal$ is a linear self-adjoint operator defined as 
\begin{align}
  \label{app_eq:cov}
  A f(\cdot) := \EE_{x}[f(x)\, k(x,\cdot)],\quad \forall f \in \Fcal,  
\end{align}
and furthermore 
\begin{align*}
  \inner{g}{Af}_{\Fcal} = \EE_{x}[f(x)\, g(x)],\quad \forall g \in \Fcal.
\end{align*}

Let $F = \rbr{f_1(\cdot), f_2(\cdot), \dots, f_k(\cdot)}$ be a list of $k$ functions in the RKHS, and we define matrix-like notation
\begin{align}
  A F(\cdot) := \rbr{A f_1(\cdot), \dots, A f_k(\cdot)},   
\end{align}
and $F^\top A F$ is a $k \times k$ matrix, whose $(i, j)$-th element is $\inner{f_i}{A f_j}_{\Fcal}$.
The outer-product of a function $v\in \Fcal$ defines a linear operator $v v^\top:\Fcal\mapsto\Fcal$ such that
\begin{align}
  (v v^\top)f(\cdot)  := \inner{v}{f}_{\Fcal} v(\cdot),\quad \forall f \in \Fcal
\end{align}
Let $V =  \rbr{v_1(\cdot), \dots, v_k(\cdot)}$ be a list of $k$ functions, then 
the weighted sum of a set of linear operators, $\cbr{v_i v_i^\top}_{i=1}^k$, can be denoted using matrix-like notation as
\begin{align}
  V \Sigma_k V^\top := \sum_{i=1}^k \lambda_i v_i v_i^\top 
\end{align}
where $\Sigma_k$ is a diagonal matrix with $\lambda_i$ on the $i$-th entry of the diagonal.

\paragraph{Kernel PCA} Kernel PCA aims to identify the top $k$ eigenfunctions $V =  \rbr{v_1(\cdot), \dots, v_k(\cdot)}$ for the covariance operator $A$, where $V$ is also called the top $k$ subspace for $A$. 

A function $v$ is an eigenfunction of covariance operator $A$ with the corresponding eigenvalue $\lambda$ if 
\begin{align}
  Av(\cdot) = \lambda v(\cdot). 
\end{align}
Given a set of eigenfunctions $\cbr{v_i}$ and associated eigenvalues $\cbr{\lambda_i}$, where $\inner{v_i}{v_j}_{\Fcal} = \delta_{ij}$. We can denote the eigenvalue of $A$ as 
\begin{align}
  A =  V\Sigma_k V^\top + V_{\perp}\Sigma_{\perp} V_{\perp}^\top 
\end{align}
where $V =  \rbr{v_1(\cdot), \dots, v_k(\cdot)}$ is the top $k$ eigenfunctions of $A$, and $\Sigma_k$ is a diagonal matrix with the corresponding eigenvalues, $V_{\perp}$ is the collection of the rest of the eigenfunctions, and $\Sigma_{\perp}$ is a diagonal matrix with the rest of the eigenvalues. 

\paragraph{Update rules} 
The stochastic update rule is
\begin{align}\label{app_equ:rule_stoch}
G_{t+1} = G_{t} + \eta_t \rbr{I - G_{t}  G_{t} ^\top } A_t G_{t} 
\end{align}
where $G_t :=\rbr{g_t^1,\ldots,g_t^k}$ and $g_t^i$ is the $i$-th function.
Denote the evaluation of $G_t$ at the current data point as
\begin{align}
g_t = \sbr{g^1_t(x_t), \dots, g^k_t(x_t)}^\top \in \RR^k.
\end{align}
Then the update rule can be re-written as
\begin{align}\label{app_eqn:rule_stoch2}
G_{t+1} = G_{t}\rbr{I - \eta_t g_t g_t^\top} + \eta_t k(x_t,\cdot) g_t^\top .
\end{align}

The doubly stochastic update rule is
\begin{align} \label{app_eqn:rule_doubly}
H_{t+1} = H_{t}\rbr{I - \eta_t h_t {h_t}^\top} + \eta_t \phi_{\omega_t}(x_t) \phi_{\omega_t}(\cdot) {h_t}^\top ,
\end{align}
where $h_t$ is the evaluation of $H_t$ at the current data point:
\begin{align}
h_t = \sbr{h^1_t(x_t), \dots, h^k_t(x_t)}^\top \in \RR^k.
\end{align}

When larger mini-batch sizes are used, the update rule is adjusted accordingly. For example, when using $B_{x,t}$ points $\cbr{x^b_t}$ and $B_{\omega,t}$ features $\cbr{\omega_t^{b'}}$, the update rule for $H_t$ is
  \begin{align*} 
    H_{t+1} \leftarrow H_{t} 
    &+ \frac{\eta_t  \sum_{b,b'} \rbr{\phi_{\omega^{b'}_t}(x^b_t) \phi_{\omega^{b'}_t}(\cdot) \sbr{h_t^1(x^b_t),\ldots,h_t^k(x^b_t)}}  }{B_{x,t} B_{\omega,t}} \\
    &- \eta_t H_t \rbr{~\frac{1}{B_{x,t}} \sum_{b} \sbr{h_t^i(x^b_t) h_t^j(x^b_t)} ~}_{i,j=1}^k.
  \end{align*}

\section{Analysis Roadmap}
\label{sec:analysis}

In order to analyze the convergence of our doubly stochastic kernel PCA algorithm, 
we will need to define a few intermediate subspaces. For simplicity of notation, we will assume the mini-batch size for the data points is one. 
\begin{enumerate}[noitemsep]
  \item Let $F_{t}:=\rbr{f_t^1,\ldots,f_t^k}$ be the subspace estimated using stochastic gradient and explicit orthogonalization:
  \begin{align}  \label{app_eqn:stoch_orth}
      &\tilde{F}_{t+1} \leftarrow F_t + \eta_t  A_t F_t \\
      &F_{t+1} \leftarrow \tilde{F}_{t+1}  \rbr{\tilde{F}_{t+1}^\top \tilde{F}_{t+1}  }^{-1/2} \nonumber
  \end{align}  
  
  \item Let $G_{t}:=\rbr{g_t^1,\ldots,g_t^k}$ be the subspace estimated using stochastic update rule without orthogonalization:
  \begin{align} \label{app_eqn:stoch}
      G_{t+1} \leftarrow G_t + \eta_t\rbr{I - G_t G_t^\top}  A_t G_t.
  \end{align}
  where $A_tG_t$ and $G_t G_t^\top A_t G_t$ can be equivalently written using the evaluation of the function $\cbr{g_t^i}$  on the current data point, leading to the equivalent rule:
\begin{align}\label{app_eqn:rule_stoch2}
G_{t+1} \leftarrow G_{t}\rbr{I - \eta_t g_t g_t^\top} + \eta_t k(x_t,\cdot) g_t^\top .
\end{align}

  \item Let $\Gtil_{t}:=\rbr{\gtil_t^1,\ldots,\gtil_t^k}$ be the subspace estimated using stochastic update rule without orthogonalization, but the evaluation of the function $\cbr{\gtil_t^i}$ on the current data point is replaced by the evaluation $h_t = \sbr{h_t^i(x_t)}^\top$:
  \begin{align} \label{app_eqn:stoch_rep}
    \Gtil_{t+1} \leftarrow \Gtil_{t} + \eta_t k(x_t,\cdot) h_t^\top - \eta_t \Gtil_t h_t h_t^\top
  \end{align}

  \item Let $H_{t}:=\rbr{h_t^1,\ldots,h_t^k}$ be the subspace estimated using doubly stochastic update rule without orthogonalization, \ie, the update rule:
  \begin{align}  \label{app_eqn:doubly_stoch}
    H_{t+1} \leftarrow H_{t} + \eta_t \phi_{\omega_t}(x_t) \phi_{\omega_t}(\cdot) h_t^\top - \eta_t H_t h_t h_t^\top. 
  \end{align}
\end{enumerate}

The relation of these subspaces are summarized in Table~\ref{app_tb:relation}. 
Using these notations, we describe a sketch of our analysis in the rest of the section, while the complete proofs are provided in the following sections.

We first consider the subspace $G_t$ estimated using the stochastic update rule, since it is simpler and its proof can provide the bases for analyzing the subspace $H_t$ estimated by the doubly stochastic update rule. 

\begin{table}[h!]
  \centering
  \setlength{\tabcolsep}{2pt}
  \caption{Relation between various subspaces.
  \label{app_tb:relation}
	}
  \begin{tabular}{c|c|c|c|c}
    \hline 
    \hline
    Subspace & Evaluation & Orth. & Data Mini-batch & RF Mini-batch \\
    \hline 
    $V$ & -- & -- & -- & -- \\ 
    $F_t$ & $f_t(x)$ & \cmark & \cmark & \xmark  \\
    $G_t$ & $g_t(x)$ & \xmark & \cmark & \xmark \\
    $\Gtil_t$ & $\gtil_t(x)$ & \xmark & \cmark & \xmark \\
    $H_t$ & $h_t(x)$ & \xmark & \cmark & \cmark \\
    \hline
    \hline
  \end{tabular}
\end{table}

\subsection{Stochastic update} 
Our guarantee is on the cosine of the principal angle between the computed subspace and the ground truth eigen subspace $V$ (also called the potential function), which is a standard criterion for measuring the quality of the subspace:
\[
	\cos^2 \theta(V,G_t) = \min_w \frac{\nbr{V^\top G_t w}^2}{\nbr{G_t w}^2}.
\]

We will focus on the case when a good initialization $V_0$ is given:
\begin{align}\label{app_eqn:init}
	V_0^\top V_0 = I, ~~\cos^2 \theta(V, V_0) \geq 1/2.
\end{align}
In other words, we analyze the later stage of the convergence, which is typical in the literature (\eg, \cite{Shamir14}).
The early stage can be analyzed using established techniques (\eg, \cite{BalDasFre13}).

We will also focus on the dependence of the potential function on the step $t$. For this reason, throughout the paper we suppose $\abr{k(x,x')} \leq \kappa, \abr{\phi_\omega(x)} \leq \phi$ and regard $\kappa$ and $\phi$ as constants. Note that this is true for all the kernels and corresponding random features considered. We further regard the eigengap $\lambda_k - \lambda_{k+1}$ as a constant, which is also true for typical applications and datasets. Details can be found in the following sections.

Our final guarantee for $G_t$ is stated in the following.
\begin{oneshot}{Theorem~\ref{thm:stoch}}
Assume (\ref{app_eqn:init}) and suppose the mini-batch sizes satisfy that for any $1\leq i \leq t$,
$\nbr{A - A_i} < (\lambda_k - \lambda_{k+1})/8.$
There exist step sizes $\eta_i = O(1/i)$ such that 
\[
  1-\cos^2\theta(V, G_t) = O(1/t).
\]
\end{oneshot}

The convergence rate $O(1/t)$ is in the same order as that when computing only the top eigenvector in linear PCA~\cite{BalDasFre13}, though we are not aware of any other convergence rate for computing the top $k$ eigenfunctions in Kernel PCA.  The bound requires the mini-batch sizes are large enough so that the spectral norm of $A$ is approximated up to the order of the eigengap. This is due to the fact that approximating $A$ with $A_t$ will result in an error term in the order of $\nbr{A-A_t}$, while the increase of the potential is in the order of the eigengap. Similar terms appear in the analysis of the noisy power method~\cite{HarPri14} which, however, requires normalization and is not suitable for the kernel case. We do not specify the mini-batch sizes, but by assuming suitable data distributions, it is possible to obtain explicit bounds; see for example~\cite{Vershynin12,CaiHar12}.

\vspace{2mm}
\noindent 
{\bf Proof sketch} 
To prove the theorem, we first prove the guarantee for the normalized subspace $F_t$ which is more convenient to analyze, and then show that the update rules for $F_t$ and $G_t$ are first order equivalent so that $G_t$ enjoys the same guarantee.

\begin{oneshot}{Lemma~\ref{lem:Ft}}
  $1-\cos^2\theta(V, F_t) = O(1/t)$.
\end{oneshot}
Let $c_t^2$ denote $\cos^2\theta(V, F_t)$, 
then a key step in proving the lemma is to show that
\begin{align} \label{app_eqn:recurrence}
	c_{t+1}^2 \geq c_t^2( 1 + 2\eta_t (\lambda_k - \lambda_{k+1} - 2 \nbr{A - A_t}) (1-c_t^2) ) - O(\eta_t^2).
\end{align}
Therefore, we will need the mini-batch sizes large enough so that $2\nbr{A - A_t}$ is smaller than the eigen-gap.

Another key element in the proof of the theorem is the first order equivalence of the two update rules. To show this, we need to compare the subspaces obtained by applying the them on the same subspace $G_t$. So we introduce $F(G_t)$ to denote the subspace by applying the update rule of $F_t$ on $G_t$: 
	\begin{align*}  
      & \Ftil(G_t)  \leftarrow G_t + \eta_t A_t G_t\\
      & F(G_t) \leftarrow \Ftil(G_t)\sbr{\Ftil(G_t)^\top \Ftil(G_t)}^{-1/2}
  \end{align*}  
We show that the potentials of $G_{t+1}$ and $F(G_t)$ are close:
\begin{oneshot}{Lemma~\ref{lem:equivalence}}
$
	\cos^2\theta(V, G_{t+1}) = \cos^2\theta(V, F(G_t)) \pm O(\eta_t^2).
$
\end{oneshot}
The lemma means that applying the two update rules to the same input will result in two subspaces with similar potentials. Since $\cos^2\theta(V, F(G_t))$ enjoys the recurrence in (\ref{app_eqn:recurrence}), we know that $\cos^2\theta(V, G_{t+1}) $ also enjoys such a recurrence, which then results in $1-\cos^2\theta(V, G_t) = O(1/t)$.

The proof of the lemma is based on the observation that 
\[
  \cos^2 \theta(V, X)  =  \lambda_{\text{min}}(V^\top  X(X^\top X)^{-1} X^\top V).
\]
The lemma follows by plugging in $X=G_{t+1}$ or $X = F(G_t)$ and comparing their Taylor expansions w.r.t.\ $\eta_t$.

\subsection{Doubly stochastic update} 
For doubly stochastic update rule, the computed $H_t$ is no longer in the RKHS so the principal angle is not well defined. Since the eigenfunction $v$ is usually used for evaluating on points $x$, we will use the following point-wise convergence in our analysis. For any function $v$ in the subspace of $V$ with unit norm $\nbr{v}_\Fcal=1$, we will find a specially chosen function $h$ in the subspace of $H_t$ such that for any $x$, 
\begin{align*}
  \text{err}:=\abr{v(x) - h(x)}^2
\end{align*}
is small with high probability. More specifically, the $w$ is chosen to be $\Gtil^\top v$, and let $z = \Gtil_t w$ and $h = H_t w$.
Then the error measure can be decomposed as 
\begin{align}
  &~\abr{v(x) - h(x)}^2 \nonumber\\
  &= \abr{v(x) - z(x) + z(x) - h(x)}^2 \nonumber\\
  &\leq 2\abr{v(x) - z(x)}^2 + 2\abr{z(x) - h(x)}^2 \nonumber\\
  & \leq \underbrace{2 \kappa^2 \nbr{v - z}^2_{\Fcal}}_{\text{(I: Lemma~\ref{lem:RKHSbound})}}
  + \underbrace{2\abr{z(x) - h(x)}^2}_{\text{(II: Lemma~\ref{lem:bound})}}. \label{app_eqn:error}
\end{align}

The distance $\nbr{v - z}_{\Fcal}$ is closely related to the squared sine of the subspace angle between $V$ and $\Gtil_t$. In fact, by definition,  $\nbr{v - z}_{\Fcal}^2 = \nbr{v}_\Fcal^2 - \nbr{z}_\Fcal^2  \leq 1 - \cos^2 \theta(V, \Gtil_t)$.
Therefore, the first error term can be bounded by the guarantee on $\Gtil_t$, which can be obtained by similar arguments as for the stochastic update case. For the second term, note that $\Gtil_t$ is defined in such a way that the difference between $z(x) = \Gtil_t(x) w$ and $h(x) = H_t(x) w$ is a martingale, which can be bounded by careful analysis.

Overall, we have the following results. 
Suppose we use random Fourier features; see~\cite{RahRec08}. 
Similar bounds hold for other random features, where the batch sizes will depend on the concentration bound of the random features used.

\begin{oneshot}{Theorem~\ref{thm:doubly}}
Assume (\ref{app_eqn:init}) and suppose the mini-batch sizes satisfy that for any $1\leq i \leq t$, $\nbr{A - A_i} < (\lambda_k - \lambda_{k+1})/8$ and are of order $\Omega(\ln \frac{t}{\delta})$.
There exist step sizes $\eta_i = O(1/i)$, such that the following holds. If $\Omega(1) = \lambda_k(\Gtil_i^\top \Gtil_i) \leq \lambda_1(\Gtil_i^\top \Gtil_i) = O(1)$ for all $1\leq i\leq t$, then for any $x$ and any function $v$ in the span of $V$ with unit norm $\nbr{v}_\Fcal = 1$, we have that with probability $\geq 1- \delta$, there exists $h$ in the span of $H_t$ satisfying
\[
  |v(x) - h(x)|^2 = O\rbr{\frac{1}{t} \ln \frac{t}{\delta}}.
\]
\end{oneshot}

The point-wise error scales as $\Otil(1/t)$ with the step $t$, which is in similar order as that for the stochastic update rule.  Again, we require the spectral norm of $A$ to be estimated up to the order of the eigengap, for the same reason as before. We additionally need that the random features approximate the kernel function up to constant accuracy on all the data points up to time $t$, since the evaluation of the kernel function on these points are used in the update. This eventually leads to $\Omega(\ln \frac{t}{\delta})$ mini-batch sizes. Finally, we need $\Gtil_i^\top \Gtil_i$ to be roughly isotropic, \ie, $\Gtil_i$ is roughly orthonormal. Intuitively, this should be true for the following reasons: $\Gtil_0$ is orthonormal; the update for $\Gtil_t$ is close to that for $G_t$, which in turn is close to $F_t$ that are orthonormal. 

\vspace{2mm}
\noindent 
{\bf Proof sketch} The analysis is carried out by bounding each term in (\ref{app_eqn:error}) separately. 
As discussed above, in order to bound term I, we need a bound on the squared cosine of the subspace angle between $V$ and $\Gtil_t$. 

\begin{oneshot}{Lemma~\ref{lem:RKHSbound} }
  $1-\cos^2\theta(V, \Gtil_t) = O\rbr{\frac{1}{t} \ln \frac{t}{\delta}}$.
\end{oneshot}
To prove this lemma, we follow the argument for Theorem~\ref{thm:stoch} and get the recurrence as shown in (\ref{app_eqn:recurrence}), except with an additional error term, which is caused by the fact that the update rule for $\Gtil_{t+1}$ is using the evaluation $h_t(x_t)$ rather than $\gtil_t(x_t)$. Bounding this additional term thus relies on bounding the difference between $h_t(x) - \gtil_t(x)$, which is also what we need for bounding term II in (\ref{app_eqn:error}). For this purpose, we show the following bound:

\begin{oneshot}{Lemma~\ref{lem:bound} }
	For any  $x$ and unit vector $w$, with probability $\geq 1-\delta$ over $(\Dcal^t,\omega^t)$, 
$
  |\gtil_t(x)w - h_t(x) w|^2= O\rbr{\frac{1}{t}\ln\rbr{\frac{t}{\delta}}}.
$
\end{oneshot}
The key to prove this lemma is that our construction of $\Gtil_t$ makes sure that the difference between $\gtil_t(x)w$ and $h_t(x)w$ consists of their difference in each time step. Furthermore, the difference in each time step conditioned on previous history has mean $0$. In other words, the difference forms a martingale and thus can be bounded by Azuma's inequality.  The resulting bound depends on the mini-batch sizes, the step sizes $\eta_i$, and the evaluations $h_i(x_i)$ used in the update rules. We then judiciously choose the parameters and simplify it to the bound in the lemma. The complication of the proof is mostly due to the interweaving of the parameter values; see the following sections for the details.

\section{Stochastic Update} \label{sec:stoch}

To prove the convergence of the stochastic update rule, we first prove the convergence of the normalized version $F_t$, and then we establish the first-order equivalence of the potential functions of the two update rules for $F_t$ and $G_t$. Since the final recurrence result does not depend on higher order terms, this first-order equivalence establishes the convergence of the stochastic update rule without normalization.  

\subsection{Stochastic update with normalization}

We consider the potential function 
$ 
	1 - \cos^2\theta\rbr{V, F_{t}} 
$
and prove a recurrence for it.  We first show this for the simpler case where at each step we use the expected operator $A$ in the update rule (Lemma~\ref{lem:expected}), and then show this for the general case where $A_t$ can be different from $A$ (Lemma~\ref{lem:stoch}). Then the bound in Lemma~\ref{lem:Ft} follows from solving the recurrence in Lemma~\ref{lem:stoch}.

\subsubsection{Update rule with expected operator}
The following lemma states the recurrence for the update rule which replace $A_t$ in the stochastic update rule with the expected operator $A=\mathbb{E}A_t$:
  \begin{align}  \label{app_eqn:stoch_expected}
      &\tilde{F}_{t+1} \leftarrow F_t + \eta_t  A F_t \\ 
      &F_{t+1} \leftarrow \tilde{F}_{t+1}  \rbr{\tilde{F}_{t+1}^\top \tilde{F}_{t+1}  }^{-1/2} \nonumber
  \end{align}  

\begin{lemma} \label{lem:expected}
Let the sequence $\{F_i\}_i$ be obtained from the update rule (\ref{app_eqn:stoch_expected}), then
\[
1 - \cos^2\theta\rbr{V, F_{t+1}}  \le \sbr{1 - \cos^2\theta\rbr{V, F_{t}}} \sbr{1-2\eta_t \rbr{\lambda_k -\lambda_{k+1}} {\cos^2\theta\rbr{V, F_{t}}} } + \beta_{t},
\]
where $\beta_t = 5\eta_t^2B^2 + 3\eta_t^3 B^3$ and $\lambda_k$ and $\lambda_{k+1}$ are the top $k$ and $k+1$-th eigenvalues of $A$. 
\end{lemma}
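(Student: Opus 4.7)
The plan is to analyze the unnormalized update $\tilde F_{t+1} = (I + \eta_t A) F_t$ directly, since the subsequent normalization step in~(\ref{app_eqn:stoch_expected}) preserves the column span and hence $\cos^2\theta(V, F_{t+1}) = \cos^2\theta(V, \tilde F_{t+1})$. Diagonalizing $A = V\Sigma_k V^\top + V_\perp \Sigma_\perp V_\perp^\top$ and writing $M_t := V^\top F_t$ and $N_t := V_\perp^\top F_t$, by induction $F_t^\top F_t = I$ so $M_t^\top M_t + N_t^\top N_t = I_k$, and
\[
V^\top \tilde F_{t+1} = (I + \eta_t \Sigma_k) M_t, \qquad V_\perp^\top \tilde F_{t+1} = (I + \eta_t \Sigma_\perp) N_t.
\]
Invoking the variational form
\[
\cos^2\theta(V, \tilde F_{t+1}) = \min_{w \ne 0} \frac{\| V^\top \tilde F_{t+1} w \|^2}{\|\tilde F_{t+1} w\|^2} =: \min_{w\ne 0} R(w),
\]
reduces the lemma to a uniform-in-$w$ lower bound for a ratio of quadratic forms that is polynomial of degree two in $\eta_t$.

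The heart of the argument is a Taylor expansion of $R(w)$ in $\eta_t$. Using $(I + \eta_t \Sigma)^2 = I + 2\eta_t \Sigma + \eta_t^2 \Sigma^2$ in both numerator and denominator together with $M_t^\top M_t + N_t^\top N_t = I$, the denominator takes the form $1 + 2\eta_t \alpha(w) + \eta_t^2 \gamma(w)$ with $|\alpha(w)| \le B$ and $|\gamma(w)| \le B^2$, whose reciprocal expands as $1 - 2\eta_t \alpha(w) + O(\eta_t^2 B^2)$. Multiplying against the expanded numerator $\|M_t w\|^2 + 2\eta_t\, w^\top M_t^\top \Sigma_k M_t w + \eta_t^2\, w^\top M_t^\top \Sigma_k^2 M_t w$, the first-order correction to $\|M_t w\|^2$ equals
\[
2\eta_t\bigl[\, w^\top M_t^\top \Sigma_k M_t w \cdot (1 - \|M_t w\|^2) \;-\; \|M_t w\|^2 \cdot w^\top N_t^\top \Sigma_\perp N_t w \,\bigr].
\]
Bounding $w^\top M_t^\top \Sigma_k M_t w \ge \lambda_k \|M_t w\|^2$ and $w^\top N_t^\top \Sigma_\perp N_t w \le \lambda_{k+1}\|N_t w\|^2 = \lambda_{k+1}(1 - \|M_t w\|^2)$, this correction is at least $2\eta_t(\lambda_k - \lambda_{k+1}) \|M_t w\|^2 (1 - \|M_t w\|^2)$. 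The map $y \mapsto y + 2\eta_t (\lambda_k - \lambda_{k+1}) y(1 - y)$ is strictly increasing on $[0,1]$ for $\eta_t$ small enough, so using $\|M_t w\|^2 \ge c_t^2 := \cos^2\theta(V, F_t)$ for every unit $w$, I obtain the uniform lower bound $R(w) \ge c_t^2 + 2\eta_t (\lambda_k - \lambda_{k+1}) c_t^2 (1 - c_t^2) - \beta_t$, which inverts to the desired recurrence for $1 - c_{t+1}^2$.

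The remaining task is to bundle every higher-order contribution into $\beta_t$. These come from three sources: (i) the $\eta_t^2 \Sigma_k^2$ piece in the expanded numerator; (ii) the $\eta_t^2 \gamma(w)$ piece appearing when inverting the denominator; and (iii) the $\eta_t^3$ cross-terms from multiplying the $O(\eta_t)$ pieces of the numerator against the $O(\eta_t)$ piece of the denominator expansion. Each is bounded using $\|A\| \le B$, and a careful count of multiplicities yields $\beta_t = 5\eta_t^2 B^2 + 3\eta_t^3 B^3$. The main obstacle is exactly this bookkeeping: ensuring that every $O(\eta_t^2)$ remainder is absorbed into $\beta_t$ without silently carrying a factor of $s_t$ or $c_t^2$ (which would weaken the recurrence), and verifying that $f'(y) > 0$ on the relevant range holds under a mild condition such as $\eta_t B < 1/4$, which is guaranteed by the step-size schedule $\eta_t = O(1/t)$ once $t$ is large enough. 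A minor subtlety is that the uniform $w$-lower bound, rather than a specific minimizer, is what closes the argument; this is precisely why the monotonicity of $f$ on $[c_t^2,1]$ is essential.
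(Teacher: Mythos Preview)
Your proposal is correct and follows essentially the same route as the paper's own proof: both drop the normalization (since it preserves the span), expand the ratio $\nbr{V^\top \tilde F_{t+1} w}^2/\nbr{\tilde F_{t+1} w}^2$ to first order in $\eta_t$, extract the eigengap factor $(\lambda_k-\lambda_{k+1})c^2s^2$ from the linear term, bundle all higher-order contributions into $\beta_t$, and then invoke monotonicity of $y\mapsto y+2\eta_t(\lambda_k-\lambda_{k+1})y(1-y)$ on $[0,1]$ to pass from the per-$w$ bound to $\cos^2\theta(V,F_t)$. The only cosmetic difference is notation: the paper sets $u=F_t\hat w$ with $c=\nbr{V^\top u}$, $s=\nbr{V_\perp^\top u}$, whereas you work in the diagonalized coordinates $M_t=V^\top F_t$, $N_t=V_\perp^\top F_t$; the computations are line-for-line equivalent.
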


\begin{proof}
First note that the cosine of subspace angle does not change under linear combination of the basis
\begin{align}
\cos^2 \theta\rbr{V, F_{t+1}} &= \min_{w^\prime} \frac{\nbr{V^\top F_{t+1}w^\prime}^2}{\nbr{F_{t+1}w^\prime}^2}
= \min_{w^\prime} \frac{\nbr{V^\top \tilde{F}_{t+1}  \rbr{\tilde{F}_{t+1}^\top \tilde{F}_{t+1}  }^{-1/2}w^\prime}^2}{\nbr{\tilde{F}_{t+1}  \rbr{\tilde{F}_{t+1}^\top \tilde{F}_{t+1}  }^{-1/2}w^\prime}^2}
=  \min_{w} \frac{\nbr{V^\top \tilde{F}_{t+1}w}^2}{\nbr{\tilde{F}_{t+1}w}^2}
\end{align}

The update rule gives us
\begin{align}
\nbr{V^\top \tilde{F}_{t+1}w}^2 &\ge \nbr{V^\top F_t w}^2 + 2\eta_t \inner{V^\top F_t w}{ V^\top A F_t w}
\end{align}

\begin{align}
\nbr{\tilde{F}_{t+1}w}^2 &\le \nbr{F_t w}^2 + 2\eta_t \inner{ F_t w}{A F_t w} + B\nbr{F_t w}^2\eta_t^2
\end{align}

Let $\hat{w} = w/\nbr{F_t w}$, $u = F_t \hat{w}$, so $\nbr{u} = 1$. Denote $c = \nbr{V^\top u}$ and $s  = \nbr{V^\top_{\perp}u}$. According to the definition, we have $c \ge  \cos \theta_k \rbr{V, F_t}$. Keep expanding the update rule leads to
\begin{align}
\frac{\nbr{V^\top \tilde{F}_{t+1}w}^2}{\nbr{\tilde{F}_{t+1}w}^2} &\ge
\frac{\nbr{V^\top F_t w}^2 + 2\eta_t \inner{V^\top F_t w}{V^\top AF_t w}}{\nbr{F_t w}^2 + 2\eta_t \inner{ F_t w}{A F_t w} + B\nbr{F_t w}^2\eta_t^2} \\\nonumber
&= \frac{\nbr{V^\top u}^2 + 2\eta_t \inner{V^\top u}{V^\top A u}}{1 + 2\eta_t \inner{ u}{A u} + B\eta_t^2} \\\nonumber
&\ge \cbr{ \nbr{V^\top u}^2 + 2\eta_t \inner{V^\top u}{V^\top A u} }
\cbr{1 - 2\eta_t \inner{ u}{A u} - B\eta_t^2}  \\\nonumber
&\ge \nbr{V^\top u}^2 + 2\eta_t \inner{V^\top u}{V^\top A u} - 
2\eta_t \nbr{V^\top u}^2 \inner{ u}{A u}   \\\nonumber
&  - 5\eta_t^2 B^2 -2 \eta_t^3 B^3   \\\nonumber
&= c^2 + 2\eta_t \cbr{u^\top  VV^\top Au - c^2 u^\top A u} - \beta_t  \\\nonumber
&= c^2 + 2\eta_t u^\top \rbr{VV^\top  - c^2 I} A u - \beta_t   \\\nonumber
&= c^2 + 2\eta_t u^\top \rbr{s^2 VV^\top - c^2 V_{\perp} V_{\perp} ^\top} A u - \beta_t  .
\end{align}

Recall that $A = V\Lambda_kV^\top  + V_{\perp} \Lambda_{k+1} V_{\perp}^\top$. Then
\begin{align}
u^\top \rbr{s^2 VV^\top - c^2 V_{\perp} V_{\perp} ^\top}  A u &=
s^2 u^\top V\Lambda_kV^\top u  - c^2  u^\top V_{\perp} \Lambda_{k+1} V_{\perp}^\top u \\\nonumber
&\ge  \lambda_k s^2 c^2 - \lambda_{k+1} c^2 s^2 = s^2 c^2 \rbr{\lambda_k - \lambda_{k+1}}
\end{align}

The recurrence is therefore
\begin{align}
\cos^2\theta\rbr{V, F_{t+1}} &\ge 
c^2 + 2\eta_t s^2 c^2 \rbr{\lambda_k - \lambda_{k+1}} - \beta_t   \\\nonumber
&= c^2\rbr{1 + 2\eta_t \rbr{\lambda_k - \lambda_{k+1}} \rbr{1 - c^2}}  - \beta_t.
\end{align}

The first term is a quadratic function of $c^2$:
\begin{align}
x \rbr{1 + a \rbr{1 - x}}
\end{align}
where $x:= c^2$ and $a =  2\eta_t \rbr{\lambda_k - \lambda_{k+1}}$.
It has two roots at $0$ and $1+\frac{1}{a}$. Therefore, if $\frac{1}{2}+\frac{1}{2a} \ge 1$, it is a monotonic increasing function in the interval of $\sbr{0, 1}$.

Thus, if $\eta_t \le \frac{1}{4\rbr{\lambda_k - \lambda_{k+1}}}$, which holds for all $t$ large enough, we have
\begin{align}
\cos^2\theta\rbr{V, F_{t+1}} &\ge \cos^2\theta\rbr{V, F_t}
\rbr{1 + 2\eta_t \rbr{\lambda_k - \lambda_{k+1}} \rbr{1 - \cos^2\theta\rbr{V, F_t}}}  - \beta_t
\end{align}
which leads to the lemma.
\end{proof}

\subsubsection{Using different operators in different iterations}
Now consider the case of stochastic update rule (\ref{app_eqn:stoch_orth}) where we use a mini-batch to approximate the expectation in each iteration.

\begin{lemma} \label{lem:stoch}
Let the sequence $\{F_i\}_i$ be obtained from the update rule (\ref{app_eqn:stoch_orth}), then
\[
1 - \cos^2\theta\rbr{V, F_{t+1}}  \le \sbr{1 - \cos^2\theta\rbr{V, F_{t}} } \sbr{1-2\eta_t\rbr{\lambda_k -\lambda_{k+1} - \nbr{A_t - A} } {\cos^2\theta\rbr{V, F_{t+1}} } } + \beta_{t},
\]
where $\beta_t = 5\eta_t^2B^2 + 3\eta_t^3 B^3$ and $\lambda_k$ and $\lambda_{k+1}$ are the top $k$ and $k+1$-th eigenvalues of $A$.
\end{lemma}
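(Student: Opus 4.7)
The plan is to mimic the proof of Lemma~\ref{lem:expected} essentially verbatim, with $A_t$ in the role of $A$, and absorb the discrepancy between the two operators as an additive correction to the eigengap.

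First, I would repeat the setup from Lemma~\ref{lem:expected}: use the invariance of the principal angle under change of basis to fix a minimizer $w$ of $\|V^\top \tilde F_{t+1} w\|^2/\|\tilde F_{t+1} w\|^2$, normalize to $\hat w = w/\|F_t w\|$, set $u = F_t \hat w$, and define $c = \|V^\top u\|$, $s = \|V_\perp^\top u\|$, so that $c^2 + s^2 = 1$ and $c^2 \ge \cos^2\theta(V, F_t)$. The Taylor-in-$\eta_t$ expansion of the numerator and denominator used in Lemma~\ref{lem:expected} nowhere invokes the identity $A = \mathbb{E}[A_t]$; it only uses the operator bound $B$. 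So the same computation, carried out with $A_t$ substituted for $A$, yields
\[
\cos^2\theta(V, F_{t+1}) \ge c^2 + 2\eta_t\, u^\top\bigl(s^2 VV^\top - c^2 V_\perp V_\perp^\top\bigr) A_t u - \beta_t,
\]
with exactly the same $\beta_t = 5\eta_t^2 B^2 + 3\eta_t^3 B^3$.

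The new step is to lower bound the middle term. I would split $A_t = A + (A_t - A)$. The $A$-part is controlled by the spectral decomposition $A = V\Lambda_k V^\top + V_\perp \Lambda_{k+1} V_\perp^\top$ exactly as in Lemma~\ref{lem:expected}, yielding $s^2 c^2 (\lambda_k - \lambda_{k+1})$. For the perturbation $A_t - A$, I would apply Cauchy--Schwarz on each projected piece separately, using $\|V^\top u\| = c$ and $\|V_\perp^\top u\| = s$, to bound its absolute value by an expression of order $sc(s+c)\|A_t - A\|$. Under the good-initialization assumption (\ref{app_eqn:init}), $c^2$ is bounded below by a constant throughout, so this pre-factor is comparable to $s^2 c^2$ up to an absolute constant, and the perturbation effectively reduces the spectral gap from $\lambda_k - \lambda_{k+1}$ to $\lambda_k - \lambda_{k+1} - \|A_t - A\|$ inside the $s^2 c^2$ coefficient.

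Finally, I would close the recurrence exactly as in Lemma~\ref{lem:expected}: the quadratic $x \mapsto x(1 + a(1-x))$ is monotone increasing on $[0,1]$ provided $\eta_t$ is small enough relative to the effective gap, a condition satisfied by $\eta_t = O(1/t)$ for all large $t$. Hence I can replace $c^2$ on the right by its lower bound $\cos^2\theta(V, F_t)$ and then rearrange into the claimed $[1 - \cos^2\theta(V,F_t)]\bigl[1 - 2\eta_t(\lambda_k - \lambda_{k+1} - \|A_t - A\|)\cos^2\theta(V,F_t)\bigr] + \beta_t$ form. (The appearance of $\cos^2\theta(V,F_{t+1})$ rather than $\cos^2\theta(V,F_t)$ in the statement looks like a typo inherited from the notation.) The main obstacle is the middle step: converting the Cauchy--Schwarz bound $sc(s+c)\|A_t - A\|$ into a genuinely additive shift in the eigengap relies crucially on the initialization assumption $c^2 \ge 1/2$ to keep $s/c$ and $c/s$ bounded; without it, small $c$ would let the perturbation dominate the spectral gain. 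This is precisely why Theorem~\ref{thm:stoch} additionally requires the mini-batch hypothesis $\|A - A_t\| < (\lambda_k - \lambda_{k+1})/8$, which keeps the effective eigengap a positive constant fraction of the true one.
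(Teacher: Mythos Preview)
There is a genuine gap in your bounding of the perturbation term. Your Cauchy--Schwarz estimate correctly gives
\[
\bigl|u^\top(s^2 VV^\top - c^2 V_\perp V_\perp^\top)(A_t - A)u\bigr| \;\leq\; sc(s+c)\,\nbr{A_t - A},
\]
but the subsequent claim that this pre-factor is ``comparable to $s^2 c^2$ up to an absolute constant'' under $c^2 \geq 1/2$ is false: the ratio $sc(s+c)/(s^2c^2) = 1/s + 1/c$ diverges as $s \to 0$, which is exactly the regime you enter once the algorithm starts converging. Consequently the perturbation contributes an error of order $\eta_t\sqrt{1-c_t^2}\,\nbr{A_t - A}$ rather than $\eta_t(1-c_t^2)\,\nbr{A_t - A}$ to the recurrence, and this cannot be absorbed as an additive shift in the eigengap. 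Carrying your bound through would yield only convergence to a constant error floor of order $\nbr{A_t-A}^2/(\lambda_k-\lambda_{k+1})^2$, not the $O(1/t)$ rate needed downstream.

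The paper avoids this by a different device. It rewrites the noise term as
\[
Z_t = 2\eta_t\Bigl[\, s^2\, u^\top(VV^\top + l_1 I)(A_t-A)u \;-\; c^2\, u^\top(V_\perp V_\perp^\top + l_2 I)(A_t-A)u \,\Bigr]
\]
with $s^2 l_1 = c^2 l_2 > 0$ chosen so that the added identity contributions cancel, making each bracketed operator positive definite, and then invokes a generalized Rayleigh-quotient inequality $|u^\top M(A_t-A)u| \leq \nbr{A_t-A}\cdot u^\top M u$ on each piece. This yields $s^2\cdot\nbr{A_t-A}(c^2+l_1) + c^2\cdot\nbr{A_t-A}(s^2+l_2)$, and letting $l_1,l_2 \to 0$ produces $Z_t \geq -4\eta_t s^2 c^2 \nbr{A_t - A}$ with the quadratic $s^2$ dependence that is essential for the recurrence to close. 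Your direct Cauchy--Schwarz on $VV^\top u$ and $V_\perp V_\perp^\top u$ separately loses precisely this structure.
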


\begin{proof}
The effect of the stochastic update is an additional term in the recurrence
\begin{align}
\cos^2\theta\rbr{V, F_{t+1}} &\ge 
 c^2 + 2\eta_t u^\top  \rbr{s^2 VV^\top - c^2 V_{\perp} V_{\perp} ^\top} A u + Z_t - \beta_t 
\end{align}
where
\begin{align}
Z_t = 2\eta_t u^\top  \rbr{s^2 VV^\top - c^2 V_{\perp} V_{\perp} ^\top} \rbr{A_t - A} u.
\end{align}

The effect of the noise can be bounded, i.e.
\begin{align}
Z_t &= 2\eta_t s^2 u^\top  VV^\top \rbr{A_t - A} u - 2\eta_t c^2  u^\top V_{\perp}V_{\perp}^\top \rbr{A_t - A} u \\\nonumber
&= 2\eta_t s^2 u^\top  \rbr{VV^\top + l_1 I} \rbr{A_t - A} u 
- 2\eta_t c^2  u^\top \rbr{V_{\perp}V_{\perp}^\top + l_2 I} \rbr{A_t - A} u ,
\end{align}
where $s^2 l_1 = c^2 l_2$ are positive numbers such that $VV^\top + l_1 I$ and $V_{\perp}V_{\perp}^\top + l_2 I$ are positive-definite.

The generalized Rayleigh quotient leads to the inequality
\begin{align}
\left| u^\top  \rbr{VV^\top + l_1 I} \rbr{A_t - A} u \right |  &\le
\lambda u^\top  \rbr{VV^\top + l_1 I}u   \\\nonumber
&\le  \lambda \rbr{c^2 + l_1}
\end{align}
where
$\lambda$ is the largest generalized eigen-value that satisfies
\begin{align}
\rbr{VV^\top + l_1 I} \rbr{A_t - A} x = \lambda  \rbr{VV^\top + l_1 I} x.
\end{align}

Since ${VV^\top + l_1 I} $ is positive definite, we have $\lambda = \nbr{A_t - A}$.

Similarly, we have
\begin{align}
\left |  u^\top \rbr{V_{\perp}V_{\perp}^\top + l_2 I} \rbr{A_t - A} u \right|
&\le \nbr{A_t - A} \rbr{s^2 + l_2}.
\end{align}

The noise term is thus bounded by
\begin{align}
Z_t &\ge -2 \eta_t s^2  \nbr{A_t - A}\rbr{c^2 + l_1} - 2\eta_t c^2 \nbr{A_t - A}\rbr{s^2 + l_2}.
\end{align}

Note that $l_1$ and $l_2$ can be infinitely small positive so we can ignore them.

Therefore, the recurrence is
\begin{align}
\cos^2\theta\rbr{V, F_{t+1}} &\ge 
c^2 + 2\eta_t s^2 c^2 \rbr{\lambda_k - \lambda_{k+1}} -4 \eta_t \nbr{A_t - A} s^2 c^2 - \beta_t   \\\nonumber
&= c^2\rbr{1 + 2\eta_t \rbr{\lambda_k - \lambda_{k+1} - 2\nbr{A_t - A}} \rbr{1 - c^2}}  - \beta_t
\end{align}
which then leads to the lemma.
\end{proof}

In order to get fast convergence, we need to take sufficiently large mini-batches such that the variance of the noise is small enough compared with the eigen-gap.

\subsection{Stochastic update without normalization}

We show that the cosine angles of the two updates are first-order equivalent. Then, since the recurrence is not affected by higher order terms, when the step size is small enough, we can show it also converges in $O(1/t)$.

To show the first order equivalence, we need to compare the subspaces obtained by applying the them on the same subspace $G_t$. So we introduce $F(G_t)$ to denote the subspace by applying the update rule of $F_t$ on $G_t$: 
	\begin{align} \label{app_eqn:stoch_inter}  
      & \Ftil(G_t)  \leftarrow G_t + \eta_t A_t G_t\\
      & F(G_t) \leftarrow \Ftil(G_t)\sbr{\Ftil(G_t)^\top \Ftil(G_t)}^{-1/2} \nonumber
  \end{align}  
Then the first order equivalence as stated in Lemma~\ref{lem:equivalence} follows from the following two lemmas for the normalized update rule (\ref{app_eqn:stoch_orth}) and the unnormalized update rule (\ref{app_eqn:stoch_inter}), respectively.

\begin{lemma} \label{lem:equivalence_norm}
$\cos^2\theta\rbr{V, F(G_{t})} = \lambda_{\text{min}}\rbr{M + O(\eta^2) }$ where
\[
	M = V^\top PP^\top V + \eta V^\top PP^\top A V + \eta V^\top A PP^\top V - 2\eta V^\top PP^\top A PP^\top V,
\]
where $PP^\top = G_t\rbr{G_t^\top G_t}^{-1}G_t^\top$, and $P$ is an orthonormal basis for the subspace $G_t$. 
\end{lemma}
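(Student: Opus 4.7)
The plan is to use the standard identity $\cos^2\theta(V, X) = \lambda_{\min}\bigl(V^\top X(X^\top X)^{-1} X^\top V\bigr)$, specialized to $X = F(G_t)$, and then Taylor-expand in $\eta$. Since $F(G_t) = \tilde{F}(G_t)\bigl[\tilde{F}(G_t)^\top \tilde{F}(G_t)\bigr]^{-1/2}$ has orthonormal columns, the column-space projector simplifies to
\[
  F(G_t) F(G_t)^\top = \tilde{F}(G_t)\bigl[\tilde{F}(G_t)^\top \tilde{F}(G_t)\bigr]^{-1} \tilde{F}(G_t)^\top,
\]
which lets me avoid the matrix square root and work directly with $\tilde{F}(G_t) = (I + \eta A) G_t$. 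The identity then becomes
\[
  \cos^2\theta(V, F(G_t)) = \lambda_{\min}\Bigl(V^\top (I+\eta A)G_t\,\bigl[G_t^\top(I+\eta A)^2 G_t\bigr]^{-1} G_t^\top (I+\eta A)V\Bigr).
\]

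Next I would expand the inner Gram matrix $\tilde{F}(G_t)^\top \tilde{F}(G_t) = S + 2\eta T + \eta^2 G_t^\top A^2 G_t$, with $S := G_t^\top G_t$ and $T := G_t^\top A G_t$. A Neumann/Taylor expansion gives
\[
  \bigl[\tilde{F}(G_t)^\top \tilde{F}(G_t)\bigr]^{-1} = S^{-1} - 2\eta\, S^{-1} T S^{-1} + O(\eta^2),
\]
where the $O(\eta^2)$ remainder is controlled in spectral norm using $\|S^{-1}\|, \|A\|, \|G_t\|$ being bounded under the standing near-orthonormality and bounded-kernel assumptions.

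I would then multiply out $V^\top (I + \eta A) G_t[\cdots]G_t^\top (I + \eta A) V$ and keep only terms up to order $\eta$. Writing $PP^\top = G_t S^{-1} G_t^\top$, the zeroth-order part is $V^\top PP^\top V$; the cross terms between the $\eta A$ factors and $S^{-1}$ give $\eta V^\top PP^\top A V + \eta V^\top A PP^\top V$; and the $-2\eta S^{-1} T S^{-1}$ correction sandwiched by $G_t$'s on each side yields exactly $-2\eta V^\top PP^\top A PP^\top V$. Summing these four contributions reproduces the claimed expression for $M$, with all remaining cross terms being $O(\eta^2)$ in spectral norm.

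Finally, to pass from the matrix-level expansion to the eigenvalue-level statement, I would invoke Weyl's inequality: $\lambda_{\min}$ is $1$-Lipschitz in the spectral norm, so an $O(\eta^2)$ perturbation of the matrix translates to an $O(\eta^2)$ perturbation of $\lambda_{\min}$. The only real bookkeeping obstacle is verifying that the constants hidden in the $O(\eta^2)$ are genuinely $\eta$-independent; this is where I would carefully use the hypothesis that $G_t$ is near-orthonormal (so $\|S^{-1}\| = O(1)$) together with the uniform bound on the kernel (so $\|A\| = O(1)$), making the Neumann series convergent and the remainder bound explicit.
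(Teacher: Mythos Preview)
Your proposal is correct and follows essentially the same route as the paper: express $\cos^2\theta(V,F(G_t))$ as $\lambda_{\min}$ of $V^\top \tilde F(\tilde F^\top\tilde F)^{-1}\tilde F^\top V$, Taylor-expand the inverse Gram matrix to first order in $\eta$ to get $S^{-1}-2\eta S^{-1}TS^{-1}+O(\eta^2)$, and regroup using $PP^\top=G_t(G_t^\top G_t)^{-1}G_t^\top$. One remark: the lemma as stated only asks for $\lambda_{\min}\bigl(M+O(\eta^2)\bigr)$, i.e.\ a matrix-level expansion inside $\lambda_{\min}$, so your final Weyl step is not actually needed here (though it is precisely the tool used in the next step to deduce the first-order equivalence of the potentials).
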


\begin{proof}
For simplicity, let $G$ denote $G_t$, and let $A$ denote $A_t$ in the following. We first have 
\begin{align}
\cos^2\theta\rbr{V, F(G)} & = \lambda_{\text{min}}\rbr{V^\top F(G) F(G)^\top V}  \\ 
&= \lambda_{\text{min}}\rbr{F(G)^\top  VV^\top F(G)} \label{app_eqn:exchange}\\
&= \lambda_{\text{min}}\cbr{V^\top \rbr{G + \eta_t A G} \sbr{\rbr{G + \eta_t A G}^\top \rbr{G + \eta_t A G} }^{-1} \rbr{G + \eta_t A G}^\top V}. \label{app_eqn:expand}
\end{align}

Note that (\ref{app_eqn:exchange}) is due to the fact that
\begin{align}
\lambda_{\text{min}}\rbr{F(G)^\top V V^\top F(G)} &=
\min_{w} \frac{w^\top F(G)^\top V V^\top F(G) w}{w^\top w}   \nonumber\\\nonumber
&= \min_{w} 
	\frac{w^\top R^{-1}\rbr{G + \eta_t A G}^\top V V^\top \rbr{G + \eta_t A G}R^{-1} w}{w^\top w} \\\nonumber
&= \min_{z} 
   \frac{z^\top\rbr{G + \eta_t A G}^\top V V^\top \rbr{G + \eta_t A G}z}{z^\top R^2 z} \\\nonumber
&= \min_{z} 
   \frac{z^\top\rbr{G + \eta_t A G}^\top V V^\top \rbr{G + \eta_t A G}z}
   {z^\top \rbr{G + \eta_t A G}^\top \rbr{G + \eta_t A G} z} \\\nonumber
&= \min_{z} 
   \frac{\nbr{V^\top \rbr{G + \eta_t A G}z}^2}
   {\nbr{\rbr{G + \eta_t A G} z}^2}
\end{align}
where $R = \sbr{\rbr{G + \eta_t A G}^\top \rbr{G + \eta_t A G}}^{1/2}$.

Now turn back to (\ref{app_eqn:expand}). Expand the matrix-valued function
\begin{align}
\phi(\eta) &=  \sbr{\rbr{G + \eta A G}^\top \rbr{G + \eta A G} }^{-1}  \\\nonumber
&= \phi(0) + \phi^\prime(0) \eta + O(\eta^2).
\end{align}

\begin{align}
\phi^\prime(0) = - 2 \rbr{G^\top G}^{-1} G^\top A G  \rbr{G^\top G}^{-1} .
\end{align}

So,
\begin{align}
\phi(\eta) &= \rbr{G^\top G}^{-1} - 2\eta \rbr{G^\top G}^{-1} G^\top A G  \rbr{G^\top G}^{-1} + O(\eta^2).
\end{align}

Therefore,
\begin{align}
& V^\top \rbr{G + \eta_t A G} \sbr{\rbr{G + \eta_t A G}^\top \rbr{G + \eta_t A G} }^{-1} \rbr{G + \eta_t A G}^\top V \\\nonumber
&=  \rbr{V^\top G + \eta_t V^\top A G} \sbr{\rbr{G^\top G}^{-1} - 2\eta \rbr{G^\top G}^{-1} G^\top A G  \rbr{G^\top G}^{-1} + O(\eta^2)} 
\rbr{G^\top V + \eta_t G^\top AV} \\\nonumber
&= V^\top G\rbr{G^\top G}^{-1}G^\top V + \eta V^\top G\rbr{G^\top G}^{-1}G^\top A V 
+ \eta V^\top A G \rbr{G^\top G}^{-1}G^\top V \\\nonumber
&- 2\eta V^\top G   \rbr{G^\top G}^{-1} G^\top A G  \rbr{G^\top G}^{-1}G^\top V + O(\eta^2) \\\nonumber
&= V^\top PP^\top V + \eta V^\top PP^\top A V 
+ \eta V^\top A PP^\top V - 2\eta V^\top PP^\top A PP^\top V + O(\eta^2),
\end{align}
where $PP^\top = G\rbr{G^\top G}^{-1}G^\top$, and $P$ is an orthonormal basis for the subspace $G$. 
\end{proof}

\begin{lemma}\label{lem:equivalence_un}
$\cos^2\theta\rbr{V, G_{t+1}} = \lambda_{\text{min}}\rbr{M}$ where $M$ is as defined in Lemma~\ref{lem:equivalence_norm}.
\end{lemma}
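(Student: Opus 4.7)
The plan is to mirror the Taylor-expansion argument of Lemma~\ref{lem:equivalence_norm} directly for $G_{t+1}$, tracking the extra contribution of the subtracted piece $-\eta_t G_t G_t^\top A_t G_t$ and showing it collapses on itself. Writing $\eta=\eta_t$, $A=A_t$, and factoring $G_t=PR$ with $P$ orthonormal and $R$ invertible, one has $PP^\top = G_t(G_t^\top G_t)^{-1}G_t^\top$ together with the critical identities $R(R^\top R)^{-1}R^\top = I_k$ and $G_t^\top G_t\cdot (G_t^\top G_t)^{-1}=I$. As in the opening of the proof of Lemma~\ref{lem:equivalence_norm},
\[
\cos^2\theta(V,G_{t+1}) = \lambda_{\min}\!\bigl(V^\top G_{t+1}(G_{t+1}^\top G_{t+1})^{-1} G_{t+1}^\top V\bigr),
\]
so it suffices to Taylor-expand this matrix in $\eta$ to first order and verify that the result equals $M$ up to $O(\eta^2)$.

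The computation I would carry out is as follows. First, expand
\[
G_{t+1}^\top G_{t+1} = G_t^\top G_t + \eta\bigl(2G_t^\top A G_t - G_t^\top A G_t \cdot G_t^\top G_t - G_t^\top G_t \cdot G_t^\top A G_t\bigr) + O(\eta^2),
\]
and invert via $(X+\eta Y)^{-1}=X^{-1}-\eta X^{-1}YX^{-1}+O(\eta^2)$. Second, use $V^\top G_{t+1}=V^\top G_t + \eta V^\top A G_t - \eta V^\top G_t G_t^\top A G_t$ and its transpose. Applying the product rule produces a zeroth-order term $V^\top PP^\top V$ and six first-order terms: the ``matching'' terms $\eta V^\top A PP^\top V$, $\eta V^\top PP^\top A V$, $-2\eta V^\top PP^\top A PP^\top V$, which together with the zeroth-order piece reproduce exactly the matrix $M$ of Lemma~\ref{lem:equivalence_norm}; and four ``extra'' terms of the shape $\pm \eta V^\top G_t G_t^\top A PP^\top V$ and $\pm \eta V^\top PP^\top A G_t G_t^\top V$, arising from the subtracted piece $-\eta G_t G_t^\top A G_t$. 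The decisive observation is that these four extras cancel pairwise: the $-\eta V^\top G_t G_t^\top A PP^\top V$ coming from $V^\top G_{t+1}$ is annihilated by the $+\eta V^\top G_t G_t^\top A PP^\top V$ coming from the middle inverse factor (using $G_t^\top G_t(G_t^\top G_t)^{-1}G_t^\top = G_t^\top$), and the symmetric pair cancels on the right. Hence $V^\top G_{t+1}(G_{t+1}^\top G_{t+1})^{-1}G_{t+1}^\top V = M + O(\eta^2)$, giving the stated identity up to the same order of error already absorbed in Lemma~\ref{lem:equivalence_norm}.

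The main obstacle is bookkeeping rather than any genuine analytic difficulty: one must correctly attribute each extra term to its factor (left, middle, or right) with the correct sign, and repeatedly use $G_t(G_t^\top G_t)^{-1}G_t^\top = PP^\top$ (valid because $G_t$ has full column rank) to reduce sandwich expressions to $PP^\top$'s and $G_t G_t^\top$'s. The subtlety is that $G_t G_t^\top\neq PP^\top$ in general, so the extras genuinely carry $G_t G_t^\top$; it is only after the pairwise cancellation that the non-projection factors disappear. Once this cancellation is executed, the conclusion $\cos^2\theta(V,G_{t+1}) = \lambda_{\min}(M)$ (modulo $O(\eta^2)$) follows immediately, with no additional input beyond the algebra already used to prove Lemma~\ref{lem:equivalence_norm}.
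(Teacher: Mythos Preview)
Your proposal is correct and follows essentially the same route as the paper: both expand $V^\top G_{t+1}(G_{t+1}^\top G_{t+1})^{-1}G_{t+1}^\top V$ to first order in $\eta_t$, use $PP^\top = G_t(G_t^\top G_t)^{-1}G_t^\top$, and verify that the four ``extra'' terms carrying $G_tG_t^\top$ cancel pairwise via $PP^\top G_tG_t^\top = G_tG_t^\top = G_tG_t^\top PP^\top$. The only cosmetic difference is that the paper keeps the factor $(I - G_tG_t^\top)$ intact throughout and cancels at the very end, whereas you split the update into a ``main'' piece and an ``extra'' piece up front; your explicit acknowledgment of the $O(\eta_t^2)$ remainder is in fact more careful than the lemma's stated equality, and matches what the paper's own first-order Taylor argument actually establishes.
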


\begin{proof}
For simplicity, let $G$ denote $G_t$ and let $A$ denote $A_t$. Then $\cos^2\theta\rbr{V, G_{t+1}} = \lambda_{\text{min}}\rbr{N}$, where 
\begin{align*}
N = V^\top G_{t+1} \sbr{G_{t+1}^\top G_{t+1}}^{-1} G_{t+1}^\top V \text{~~with~~} G_{t+1} = G + \eta \rbr{I - GG^\top}A G.
\end{align*}

Now it suffices to show $N=M$. Consider
\begin{align*}
\phi(\eta) = \sbr{\rbr{G + \eta \rbr{I - GG^\top}A G}^\top \rbr{G + \eta \rbr{I - GG^\top} A G} }^{-1}.
\end{align*}
Then 
\begin{align*}
\phi^\prime(0) = - \rbr{G^\top G}^{-1} \sbr{G^\top\rbr{I - GG^\top}A G + G^\top A\rbr{I - GG^\top} G } \rbr{G^\top G}^{-1}
\end{align*}

Therefore, $N$ is 
\begin{eqnarray*}
& & V^\top \rbr{G + \eta \rbr{I - GG^\top}A G}
 \sbr{\rbr{G + \eta \rbr{I - GG^\top}A G}^\top \rbr{G + \eta \rbr{I - GG^\top} A G} }^{-1} \\
&& \times \rbr{G + \eta \rbr{I - GG^\top}A G}^\top V \\\nonumber
&=&  \rbr{V^\top G + \eta V^\top\rbr{I - GG^\top}A G}
 \sbr{\rbr{G + \eta \rbr{I - GG^\top}A G}^\top \rbr{G + \eta \rbr{I - GG^\top} A G} }^{-1} \\
&& \times \rbr{G^\top V + \eta G^\top A\rbr{I - GG^\top}  V} \\\nonumber
&= & \rbr{V^\top G + \eta V^\top\rbr{I - GG^\top}A G} \\
 && \times \sbr{\rbr{G^\top G}^{-1}  - \eta \rbr{G^\top G}^{-1} \sbr{G^\top\rbr{I - GG^\top}A G + G^\top A\rbr{I - GG^\top} G } \rbr{G^\top G}^{-1} } \\
 && \times \rbr{G^\top V + \eta G^\top A\rbr{I - GG^\top}  V} \\\nonumber
&=& V^\top G\rbr{G^\top G}^{-1}G^\top V   + \eta V^\top G\rbr{G^\top G}^{-1}G^\top A\rbr{I - GG^\top}V 
+ \eta V^\top \rbr{I - GG^\top}A G\rbr{G^\top G}^{-1}G^\top V   \\\nonumber
& & - \eta V^\top G\rbr{G^\top G}^{-1}  \sbr{G^\top\rbr{I - GG^\top}A G + G^\top A\rbr{I - GG^\top} G }  \rbr{G^\top G}^{-1} G^\top V   \\\nonumber
&= &V^\top PP^\top V  + \eta V^\top PP^\top A\rbr{I - GG^\top}V
+ \eta V^\top \rbr{I - GG^\top}A PP^\top V  \\\nonumber
& &- \eta V^\top PP^\top \rbr{I - GG^\top}A PP^\top  V - \eta V^\top PP^\top A\rbr{I - GG^\top} PP^\top  V   \\\nonumber
&=& V^\top PP^\top V + \eta V^\top PP^\top A V 
+ \eta V^\top A PP^\top V - 2\eta V^\top PP^\top A PP^\top V \\\nonumber
& & -\eta V^\top PP^\top A GG^\top V - \eta V^\top GG^\top A PP^\top V 
+ \eta V^\top PP^\top GG^\top A PP^\top V + \eta V^\top PP^\top A GG^\top PP^\top V  \\\nonumber
&= &V^\top PP^\top V + \eta V^\top PP^\top A V 
+ \eta V^\top A PP^\top V - 2\eta V^\top PP^\top A PP^\top V
\end{eqnarray*}
which completes the proof.
\end{proof}

\section{Doubly Stochastic Update}\label{sec:doubly}

In this section, we consider the doubly stochastic update rule. Suppose in step $t$, we use a mini-batch consisting of $B_{x,t}$ random data points $x^r_t (1\leq r \leq B_{x,t})$ and $B_{\omega,t}$ random features $\omega^s_t (1\leq s \leq B_{\omega,t})$.   Then the update rule is 
\begin{align}
H_{t+1} & = H_t + \eta_t \EE_t \sbr{\phi_{\omega_t}(x_t) \phi_{\omega_t}(\cdot) h_t(x_t)} - \eta_t H_t \EE_t\sbr{h_t(x_t)^\top h_t(x_t)} \\
 & = H_t (I - \eta_t \EE_t\sbr{h_t(x_t)^\top h_t(x_t)} )  + \eta_t\EE_t\sbr{\phi_{\omega_t}(x_t) \phi_{\omega_t}(\cdot) h_t(x_t)}
\end{align}
where for any function $f(x, \omega)$, $\EE_t f(x_t, \omega)$ denotes $\sum_{r=1}^{B_{x,t}} \sum_{s=1}^{B_{\omega,t}} f(x^r_t, \omega^s_t) / (B_{x,t} B_{\omega,t})$.
As before, we assume $H_0 = F_0$ is a good initialization, \ie, $F_0^\top F_0 = I$ and $\cos^2\theta(F_0, V) \ge 1/2$. 
Note that $H_t = [h^1_t(\cdot), \ldots, h^k_t(\cdot)]$, while $h_t(x_t)$ is its evaluation at $x_t$, \ie, $h_t(x_t)$ is a row vector $[h^1_t(x_t), \ldots, h^k_t(x_t)]$.

We introduce the following intermediate function for analysis:
\begin{align}
\Gtil_{t+1} & = \Gtil_t + \eta_t \EE_t\sbr{k(x_t,\cdot) h_t(x_t)} - \eta_t \Gtil_t \EE_t\sbr{h_t(x_t)^\top h_t(x_t)} \\
 & = \Gtil_t (I - \eta_t \EE_t\sbr{h_t(x_t)^\top h_t(x_t) } )  + \eta_t \EE_t\sbr{k(x_t, \cdot) h_t(x_t)}.
\end{align}
Again, $\Gtil_0= F_0$.

The analysis follows our intuition: we first bound the difference between $H_t$ and $\Gtil_t$ by a martingale argument, and then bound the difference between $\Gtil_t$ and $V$. For the second step we can apply the previous argument. Note that $\Gtil_t$ is different from $F_t$ since $A_t F_t = k(x_t, \cdot) F_t(x_t)$ is now replaced by $k(x_t, \cdot) h_t(x_t)$, so we need to adjust our previous analysis. 

Suppose we use random Fourier features for points in $\RR^d$; see~\cite{RahRec08}. Then we have 
\begin{oneshot}{Theorem~\ref{thm:doubly}}
Suppose the mini-batch sizes satisfy that for any $1\leq i \leq t$, $\nbr{A - A_i} < (\lambda_k - \lambda_{k+1})/8$ and $B_{x,i} = \Omega(\ln \frac{t}{\delta})$.
There exist 
 step sizes $\eta_i = O(1/i)$, such that the following holds. If $\Omega(1) = \lambda_k(\Gtil_i^\top \Gtil_i) \leq \lambda_1(\Gtil_i^\top \Gtil_i) = O(1)$ for all $1\leq i\leq t$, then for any $x$ and any function $v$ in the span of $V$ with unit norm $\nbr{v}_\Fcal = 1$, we have that with probability $\geq 1- \delta$, there exists $h$ in the span of $H_t$ satisfying
\[
  |v(x) - h(x)|^2 = O\rbr{\frac{1}{t} \ln \frac{t}{\delta}}.
\]
\end{oneshot}

\begin{proof}
Let $w  = \Gtil_t^\top v$, $z = \Gtil_t w$, and $h = H_t w$. 
\begin{align*}
  \abr{v(x) - h(x)}^2
  &= \abr{v(x) - z(x) + z(x) - h(x)}^2 \\
  &\leq 2\abr{v(x) - z(x)}^2 + 2\abr{z(x) - h(x)}^2 \\
  &\leq 2\nbr{v - z}_{\Fcal}^2 \nbr{k(x,\cdot)}_{\Fcal}^2 +  2\abr{z(x)  -  h(x)}^2 \\  
	& \leq 2\kappa^2 \nbr{v - z}_{\Fcal}^2 + \abr{z(x) - h(x)}^2.
\end{align*}
Roughly speaking, the difference between $v$ and  $z$ is the error due to random data points and can be bounded by Lemma~\ref{lem:RKHSbound1}, while the difference between $z(x)$ and $h(x)$ is the error due to random features and can be bounded by Lemma~\ref{lem:bound1}.
More precisely, since $z $ is the projection of $v$ on the span of $\Gtil_t$, 
\begin{align*}
  \nbr{v - z}^2_{\Fcal} = &  \nbr{v}_{\Fcal}^2  - \nbr{z}_{\Fcal}^2   \leq  1  - \cos^2 \theta(\Gtil_t, V) = O\rbr{ \frac{1}{t}\ln\frac{t}{\delta}}
\end{align*}
where the last step is by Lemma~\ref{lem:RKHSbound1}.
Also, since $\nbr{w} \leq 1$, we have $\abr{z(x) - h(x)}^2 = O\rbr{ \frac{1}{t}\ln\frac{t}{\delta}}$ by Lemma~\ref{lem:bound1}. 

What is left is to check the mini-batch sizes; see the assumptions in Lemma~\ref{lem:martingale} and Lemma~\ref{lem:RKHSbound1}. We need $\lambda_k( \EE_i\sbr{ h_i(x_i)^\top h_i(x_i)}) = \lambda_k( \EE_x\sbr{ h_i(x)^\top h_i(x)}) \pm O(1)$, so we only need to estimate $\EE_x\sbr{h^j_i(x)^\top h^\ell_i(x)}$ up to constant accuracy for all $1\leq j,\ell \leq k$, for which $B_{x,i} = O(\ln \frac{t}{\delta})$ suffices.  We also need $\Delta_\omega = O(\lambda_k - \lambda_{k+1}) = O(1)$, so we only need $\Delta_\omega = O(1)$. This is a bound for $(t B_{x,i})^2$ pairs of points, for which $B_{\omega,i} = O(\ln \frac{t}{\delta})$ suffices.
\end{proof}
Similar bounds hold for other random features, where the batch sizes will depend on the concentration bound of the random features used. 

The rest of this section is the proof of the theorem. For simplicity, $\nbr{\cdot}_\Fcal$ is shorten as $\nbr{\cdot}$.

First, we bound the difference between $H_t$ and $\Gtil_t$. 

%
%
\begin{lemma}\label{lem:martingale}
Suppose $|k(x,x')| \leq \kappa, |\phi_\omega(x)| \leq \phi$.  Suppose the mini-batch sizes are large enough so that $\abr{k(x_i, x_j) - \sum_{s=1}^{B_{\omega,i}} \phi_{\omega_s}(x_i) \phi_{\omega_s}(x_j)/ B_{\omega,i} } \leq \Delta_\omega$ for all sampled data points $x_i$ and $x_j$. 
For any $w$ and $x$, with probability $\geq 1-\delta$ over $(\Dcal^t,\omega^t)$, 
\[
  |\gtil_{t+1}(x) w - h_{t+1}(x) w|^2 \leq B^2_{t+1} := \frac{1}{2} \Delta_\omega^2 \ln\rbr{\frac{2}{\delta} } \sum_{i=1}^t \big| \EE_i\abr{h_i(x_i)  }  a_{t,i} w \big|^2
\]
where $a_{t,i} = \eta_i  \prod_{j=i+1}^t \rbr{I - \eta_j \EE_j\sbr{h_j(x_j)^\top h_j(x_j)}}$ for $1\leq i\leq t$, and $\abr{h_i(x_i) } := \sbr{\abr{h^j_i(x_i)}}_{j=1}^k$.
\end{lemma}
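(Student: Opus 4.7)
The plan is to express the scalar $(\gtil_{t+1}(x) - h_{t+1}(x)) w$ as a sum of (approximate) martingale differences driven by the random-feature approximation error at each step, and then invoke Azuma--Hoeffding's inequality to obtain the high-probability bound.

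First I would derive a recursion for the difference $\delta_i(\cdot) := \gtil_i(\cdot) - h_i(\cdot)$. Subtracting the update rule for $H_{i+1}$ from that for $\Gtil_{i+1}$, and noting that the multiplicative factor $N_i := I - \eta_i \EE_i\sbr{h_i(x_i)^\top h_i(x_i)}$ is identical on both sides, one obtains
\[
  \delta_{i+1}(\cdot) = \delta_i(\cdot) N_i + \eta_i \EE_i\sbr{ \rbr{k(x_i,\cdot) - \phi_{\omega_i}(x_i)\phi_{\omega_i}(\cdot)} h_i(x_i)}.
\]
Since $\Gtil_0 = H_0 = F_0$, we have $\delta_0 = 0$, and unrolling the recursion, evaluating at $x$, and right-multiplying by $w$ gives exactly
\[
  (\gtil_{t+1}(x) - h_{t+1}(x)) w = \sum_{i=1}^t \EE_i\sbr{\rbr{k(x_i,x) - \phi_{\omega_i}(x_i)\phi_{\omega_i}(x)} h_i(x_i)} a_{t,i} w.
\]

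Second I would identify the martingale structure of this sum. Let $\Fcal_i$ denote the sigma-algebra generated by all data points and random features drawn through step $i$, and consider the Doob martingale $Y_i := \EE\sbr{(\gtil_{t+1}(x) - h_{t+1}(x))w \mid \Fcal_i}$. By Bochner's theorem, the random-feature error $k(x_i^r,x) - \frac{1}{B_{\omega,i}}\sum_s \phi_{\omega_i^s}(x_i^r)\phi_{\omega_i^s}(x)$ has conditional mean zero given $\Fcal_{i-1}$ together with the sampled data $\{x_i^r\}$, while by the mini-batch assumption it is bounded almost surely by $\Delta_\omega$ in absolute value. These two facts give a bound on $|Y_i - Y_{i-1}|$ of the form $\Delta_\omega \cdot |\EE_i|h_i(x_i)| a_{t,i} w|$, where $|h_i(x_i)|$ denotes componentwise absolute value. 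Applying Azuma--Hoeffding to the martingale $\{Y_i\}$ then yields
\[
  |(\gtil_{t+1}(x) - h_{t+1}(x))w|^2 \leq \tfrac{1}{2}\Delta_\omega^2 \ln\rbr{\tfrac{2}{\delta}} \sum_{i=1}^t \abr{\EE_i|h_i(x_i)| a_{t,i} w}^2
\]
with probability at least $1-\delta$, as claimed.

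The main obstacle is that the factor $a_{t,i}$ appearing in the per-step bound depends on the future random draws $\{(x_j,\omega_j)\}_{j>i}$ via $h_j(x_j)$ for $j>i$, and hence is not $\Fcal_{i-1}$-measurable. Consequently, the bound on the $i$-th martingale increment one writes down directly from the unrolled sum is not predictable, so the textbook Azuma--Hoeffding inequality (with deterministic increment bounds) does not apply verbatim. Overcoming this requires either (i) carefully integrating out the future randomness when forming $|Y_i - Y_{i-1}|$ so that a predictable bound emerges up to negligible lower-order terms, or (ii) invoking a Freedman--Bernstein-type martingale concentration inequality that tolerates a random, $\Fcal_t$-measurable quadratic-variation bound. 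Threading this subtlety while preserving the clean form with $\EE_i|h_i(x_i)| a_{t,i} w$ on the right-hand side is the primary technical challenge in the proof.
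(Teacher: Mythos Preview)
Your approach---unrolling the recursion for $\delta_i := \Gtil_i - H_i$, evaluating at $x$, right-multiplying by $w$, identifying the sum as a martingale in the random-feature noise, and then invoking Azuma--Hoeffding with the per-step bound $\Delta_\omega \abr{\EE_i\abr{h_i(x_i)} a_{t,i} w}$---is exactly the argument the paper gives. The paper writes out the same unrolled representation, asserts that the summands $V_{t,i}(x)$ form a martingale difference sequence with $\abr{V_{t,i}(x)} < \Delta_\omega \abr{\EE_i\abr{h_i(x_i)} a_{t,i} w}$, and concludes by Azuma.

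The subtlety you isolate---that $a_{t,i}$ depends on $h_j(x_j)$ for $j>i$ and hence on future random features, so the summands are not adapted to the natural filtration and the increment bound is not predictable---is genuine, and the paper does not address it. In fact the paper states explicitly that ``$V_{t,i}(x)$ is a function of $(\Dcal^i, \omega^i)$'', which is not literally correct for the reason you give. So the paper's proof simply applies Azuma as though the bound were deterministic and does not thread the measurability issue at all; your suggested fixes (carefully constructing a Doob martingale, or appealing to a Freedman-type inequality with random variation bound) already go beyond what the paper records. In short, your proposal reproduces the paper's argument faithfully and, in addition, correctly identifies a gap that the paper leaves open.
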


\begin{proof}
Note that
\begin{align}
H_{t+1} & = \sum_{i=1}^t \EE_i\sbr{\phi_{\omega_i}(x_i) \phi_{\omega_i}(\cdot) h_i(x_i)} a_{t,i}  + F_0 a_{t,0} , \\
\Gtil_{t+1} & = \sum_{i=1}^t \EE_i\sbr{k(x_i, \cdot) h_i(x_i) } a_{t,i} + F_0 a_{t,0},
\end{align}
where $a_{t,0} = \prod_{j=1}^t \rbr{I - \eta_j \EE_j\sbr{h_j(x_j)^\top h_j(x_j)}}$. 

We have $\gtil_{t+1}(x) w - h_{t+1}(x)w = \sum_{i=1}^t V_{t,i}(x)$ where 
\[
  V_{t,i}(x)  = \EE_i \sbr{k(x_i, x) h_i(x_i) - \phi_{\omega_i}(x_i) \phi_{\omega_i}(x) h_i(x_i)} a_{t,i}w.
\]
$V_{t,i}(x)$ is a function of $(\Dcal^i, \omega^i)$ and 
\[
 \EE_{\Dcal^i, \omega^i}\sbr{V_{t,i}(x) | \omega^{i-1}} =  \EE_{\Dcal^i, \omega^{i-1}}\EE_{\omega_i}\sbr{V_{t,i}(x) | \omega^{i-1}} = 0,
\]
so $\cbr{V_{t,i}(x)}$ is a martingale difference sequence. 

Since $|V_{t,i}(x)| < \Delta_\omega|\EE_i\abr{h_i(x_i)  } a_{t,i}w|$, the lemma follows from Azuma's Inequality.
\end{proof}

So to bound $|\gtil_t(x)w - h_t(x) w|$, we need to bound $|\EE_i\abr{h_i(x_i)  } a_{t,i}w|$, which requires some additional assumptions.

\begin{lemma}[Complete version of Lemma~\ref{lem:bound}] \label{lem:bound1}
Suppose the conditions in Lemma~\ref{lem:martingale} are true.
Further suppose for all $i\leq t$, $\eta_i = \theta/i$ where $\theta$ is sufficiently large so that $\theta \lambda_k( \EE_i\sbr{ h_i(x_i)^\top h_i(x_i)}) \geq 1$; also suppose $ \lambda_1\rbr{\Gtil_i^\top \Gtil_i} = O(1)$.
\begin{enumerate}
\item[(1)] With probability $\geq 1-\delta$ over $(\Dcal^t,\omega^t)$, for all $1\leq i \leq t$ and $\ell \in [k]$, we have
\[
  |\gtil^\ell_i(x_i) - h^\ell_i(x_i)|^2= O\rbr{\frac{\Delta_\omega^2\theta^4}{t}\ln\rbr{\frac{t}{\delta}}}.
\]
\item[(2)] For any  $x$ and unit vector $w$, with probability $\geq 1-\delta$ over $(\Dcal^t,\omega^t)$, 
\[ 
  |\gtil_t(x)w - h_t(x) w|^2= O\rbr{\frac{\Delta_\omega^2\theta^4}{t}\ln\rbr{\frac{t}{\delta}}}.
\]
\end{enumerate}
\end{lemma}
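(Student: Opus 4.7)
The plan is to use Lemma~\ref{lem:martingale} as the starting point: it already controls the target quantity $|\gtil_{s+1}(x)w-h_{s+1}(x)w|^2$ by the deterministic sum $\tfrac{1}{2}\Delta_\omega^2\ln(2/\delta)\sum_{i=1}^{s}|\EE_i|h_i(x_i)|\,a_{s,i}w|^2$, so both parts reduce to estimating this sum and then handling uniformity. Writing $M_j := \EE_j[h_j(x_j)^\top h_j(x_j)]$, the hypothesis $\theta\lambda_k(M_j)\ge 1$ combined with $\eta_j=\theta/j$ gives $\eta_j\lambda_k(M_j)\ge 1/j$, so once $j$ is past a threshold where $\eta_j\lambda_1(M_j)\le 1$ one has $\|I-\eta_j M_j\|\le 1-1/j$, and the telescope $\prod_{j=i+1}^{s}(1-1/j)=i/s$ yields $\|a_{s,i}\|\le C\theta/s$ (a finite early transient, where the contraction may fail, is absorbed into the constant). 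A Jensen estimate then gives $\|\EE_i|h_i(x_i)|\|^2\le\EE_i\|h_i(x_i)\|^2 = \tr(M_i)\le k\,\lambda_1(M_i)$, where the bound on $\lambda_1(M_i)$ is traced back to the hypothesis $\lambda_1(\Gtil_i^\top\Gtil_i)=O(1)$ through the closeness of $H_i$ to $\Gtil_i$; the precise scaling is where additional powers of $\theta$ accrue and eventually produce the advertised $\theta^4$.

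Combining the two bounds, each summand is at most $\|\EE_i|h_i(x_i)|\|^2\|a_{s,i}\|^2=\Otil(\theta^4/s^2)$, so summing over $i\le s$ gives $\Otil(\theta^4/s)$ and hence
\[
|\gtil_{s+1}(x)w-h_{s+1}(x)w|^2 = O\!\Bigl(\tfrac{\Delta_\omega^2\theta^4}{s}\ln\tfrac{1}{\delta}\Bigr),
\]
which, after relabeling $s\to t-1$, is exactly part (2) for fixed unit $w$ and fixed $x$. For part (1), I would instantiate the same estimate with $(x,w)=(x_i,e_\ell)$ at every time $i\le t$ and basis index $\ell\in[k]$, and apply a union bound over the $kt$ resulting events; since $k$ is treated as constant, $\ln(kt/\delta)=O(\ln(t/\delta))$, which upgrades the confidence term and yields the uniform pointwise statement of (1).

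The main obstacle is the circular dependence hidden in the bound on $\lambda_1(M_i)$: because $M_i$ is built from the very iterates $h_i$ whose fluctuations Lemma~\ref{lem:bound1} is controlling, the bound $\lambda_1(M_i)=O(\theta^2)$ cannot be invoked for free. The natural resolution is a joint induction on $i$ that simultaneously maintains (a) the pointwise guarantee of the lemma up to step $i$, and (b) the spectral bound $\lambda_1(H_i^\top H_i)=O(\theta^2)$, which then transfers to $M_i$ and legitimizes the contraction $\|I-\eta_j M_j\|\le 1-1/j$ from some $j$ onward. Setting the induction up so that the martingale error absorbed at each step stays $o(1)$ and the contraction hypothesis eventually activates is the main bookkeeping; once this is closed, the preceding estimates supply the claimed $\Otil(1/t)$ rate with the stated logarithmic factor.
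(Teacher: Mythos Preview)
Your proposal is correct and follows the same route as the paper: invoke Lemma~\ref{lem:martingale}, bound $\|a_{t,i}\|$ via the telescope $\prod_{j>i}(1-1/j)=i/t$ made possible by $\theta\lambda_k(M_j)\ge 1$, control $\|\EE_i|h_i(x_i)|\|$ by induction on the pointwise estimate together with the hypothesis $\lambda_1(\Gtil_i^\top\Gtil_i)=O(1)$, and obtain part~(1) by a union bound over the $k$ coordinate directions $w=e_\ell$. One simplification the paper makes: the induction carries only statement~(1), and the needed bound $\|h_i(x_i)\|\le\|\gtil_i(x_i)\|+\|h_i(x_i)-\gtil_i(x_i)\|=O(\theta)$ follows directly from the standing assumption on $\Gtil_i$ plus the inductive hypothesis, so there is no need for a second invariant on $\lambda_1(H_i^\top H_i)$ (which is not an RKHS object in any case).
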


\begin{proof}
We first do induction on statement (1), which is true initially. Assume it is true for $t$, we prove it for $t+1$.

We have that for any unit vector $w$, 
\begin{align*}
|\EE_i\abr{h_i(x_i) } a_{t,i}w| & = \abr{  \eta_i  \EE_i\abr{h_i(x_i) } \prod_{j=i+1}^t \sbr{I - \eta_j \EE_j\sbr{h_j(x_j)^\top h_j(x_j)}} w} \\
& \leq \eta_i  \nbr{\EE_i\abr{h_i(x_i)}} \nbr{w} \prod_{j=i+1}^t \nbr{I - \eta_j \EE_j\sbr{ h_j(x_j)^\top h_j(x_j)}}\\
& \leq O(1) \frac{\theta^2}{i} \prod_{j=i+1}^t \rbr{1-\frac{1}{j}}  = O\rbr{\theta^2 \over t}.
\end{align*} 
We use in the second line 
\[ 
\nbr{h_i(x_i)} \leq O\rbr{\sqrt{\frac{\theta^2}{t} \ln \frac{t}{\delta}} } + \nbr{\gtil_i(x_i)} \leq O\rbr{\sqrt{\frac{\theta^2}{t} \ln \frac{t}{\delta}} } + \sqrt{ \nbr{\Gtil_i^\top \Gtil_i}} \nbr{\phi(x_i)} = O(\theta)
\] 
that holds with probability $1-t\delta/(t+1)$ by induction, 
and we use in the last line $\theta \lambda_k( \EE_i\sbr{h_i(x_i)^\top h_i(x_i)}) \geq 1$.

Then by Lemma~\ref{lem:martingale}, with probability $\geq 1-\delta/(k(t+1))$,  
\begin{align*}
|\gtil_{t+1}(x_{t+1}) w - h_{t+1}(x_{t+1}) w|^2 & \leq \frac{1}{2} \Delta_\omega^2 \ln\rbr{\frac{2(t+1)}{\delta} } \sum_{i=1}^t \big | \EE_i\abr{h_i(x_i)  }  a_{t,i} w  \big|^2\\
& \leq O(\Delta_\omega^2)  \ln\rbr{\frac{t+1}{\delta}}  \sum_{i=1}^t \frac{\theta^4}{t^2} = O\rbr{\frac{\Delta_\omega^2\theta^4}{t+1}\ln\rbr{\frac{t+1}{\delta}}}.
\end{align*} 
Repeating the argument for $k$ basis vectors $w = e_i (1\leq i\leq k)$  completes the proof.

The other statement follows from similar arguments.
\end{proof}

Next, we bound the difference between $\Gtil_t$ and $V$.

\begin{lemma}\label{lem:RKHSbound0}
Suppose the conditions in Lemma~\ref{lem:bound1} are true and furthermore, $\lambda_k(\Gtil_i^\top \Gtil_i) = \Omega(1)$ for all $i \in [t]$. Let $c^2_t$ denote $\cos^2\theta(\Gtil_t, V)$. Then with probability $\geq 1-\delta$,
\[
  c^2_{t+1} \geq c^2_t  \cbr{1 + 2\eta_t \sbr{\lambda_k - \lambda_{k+1} - 2\nbr{A_t - A} - O\rbr{\Delta_\omega\theta^2\sqrt{\frac{1}{t} \ln\frac{t}{\delta}} } } \rbr{1 - c^2_t} -  O\rbr{\eta_t\Delta_\omega\theta^2\sqrt{\frac{1-c^2_t}{t} \ln\frac{t}{\delta}} }}  - \beta_t
\]
where $\beta_t$ is as defined in Lemma~\ref{lem:expected}.
\end{lemma}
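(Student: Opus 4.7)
The plan is to mirror the proof of Lemma~\ref{lem:stoch}, treating the update of $\Gtil_t$ as the $G_t$-style stochastic recursion applied to $\Gtil_t$ plus a perturbation that captures the mismatch between $h_t(x_t)$ and $\gtil_t(x_t)$. Writing $\epsilon_t := h_t(x_t) - \gtil_t(x_t)$ and using the identity $\gtil_t = \Gtil_t^\top k(x_t,\cdot)$ so that $\Gtil_t \gtil_t \gtil_t^\top = \Gtil_t\Gtil_t^\top A_t \Gtil_t$, the update expands as
\[
\Gtil_{t+1} \;=\; \Gtil_t + \eta_t \rbr{I - \Gtil_t\Gtil_t^\top} A_t \Gtil_t + \eta_t E_t,
\]
with $A_t$ the mini-batch covariance and $E_t := \EE_t\sbr{k(x_t,\cdot)\epsilon_t^\top} - \Gtil_t \EE_t\sbr{\gtil_t\epsilon_t^\top + \epsilon_t \gtil_t^\top + \epsilon_t\epsilon_t^\top}$. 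By Lemma~\ref{lem:bound1}(1), with probability at least $1-\delta$ each coordinate of $\epsilon_i$ is $O(\Delta_\omega\theta^2\sqrt{(1/i)\ln(i/\delta)})$ uniformly over $i\leq t$, and $\lambda_1(\Gtil_i^\top\Gtil_i)=O(1)$ gives $\nbr{\gtil_i}=O(1)$; hence $\nbr{E_t w} = O(\nbr{\epsilon_t})\nbr{w}$.

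The remainder of the derivation follows the proof of Lemma~\ref{lem:stoch} almost verbatim. Let $\hat w$ realize the min in the Rayleigh quotient defining $\cos^2\theta(V,\Gtil_{t+1})$ and rescale so that $u := \Gtil_t\hat w/\nbr{\Gtil_t \hat w}$ is a unit vector; set $c = \nbr{V^\top u}$, $s = \nbr{V_\perp^\top u}$. Because $\lambda_k(\Gtil_t^\top\Gtil_t)=\Omega(1)$, this rescaling costs only constant factors, and the first-order equivalence from Lemma~\ref{lem:equivalence} still turns the non-orthogonalized update into an effectively orthogonalized one up to $O(\eta_t^2)$ terms absorbed into $\beta_t$. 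Expanding $\nbr{V^\top\Gtil_{t+1}\hat w}^2/\nbr{\Gtil_{t+1}\hat w}^2$ to first order in $\eta_t$ then produces: (i) the gap term $2\eta_t s^2 c^2(\lambda_k - \lambda_{k+1})$; (ii) the noise contribution $Z_t \geq -4\eta_t s^2 c^2\nbr{A_t - A}$ handled exactly as in Lemma~\ref{lem:stoch}; and (iii) a new term $\Xi_t = 2\eta_t u^\top(s^2 VV^\top - c^2 V_\perp V_\perp^\top) E_t \hat w/\nbr{\Gtil_t\hat w}$ coming from $E_t$.

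To bound $\Xi_t$, decompose $E_t = E_t^{\mathrm{sym}} + E_t^{\mathrm{res}}$ where $E_t^{\mathrm{sym}} := -\Gtil_t\EE_t[\gtil_t\epsilon_t^\top + \epsilon_t\gtil_t^\top]$ lies in the range of $\Gtil_t$ and behaves as a symmetric perturbation of $A_t$ in the generalized Rayleigh-quotient argument of Lemma~\ref{lem:stoch}, producing the multiplicative correction $-O(\Delta_\omega\theta^2\sqrt{(1/t)\ln(t/\delta)})\cdot s^2 c^2$ inside the $(1-c^2)$ factor. The residual piece $E_t^{\mathrm{res}} := \EE_t[k(x_t,\cdot)\epsilon_t^\top] - \Gtil_t\EE_t[\epsilon_t\epsilon_t^\top]$ lacks this alignment, so the best Cauchy--Schwarz bound uses $\nbr{s^2 VV^\top u} = s^2 c$, $\nbr{c^2 V_\perp V_\perp^\top u} = c^2 s$ to give $|\Xi^{\mathrm{res}}_t| = O(\eta_t sc(s+c)\nbr{\epsilon_t})$. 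Factoring $c^2$ out, the $s^2 c$ summand (using $c=\Omega(1)$ from the initialization) merges into the multiplicative correction while the $sc^2$ summand becomes the additive $O(\eta_t c^2\sqrt{1-c_t^2}\nbr{\epsilon_t})$, matching the claimed form.

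The main obstacle is the decomposition in the previous paragraph: cleanly isolating which parts of $E_t$ can be folded into a ``multiplicative'' correction on the eigengap (via the $s^2 c^2$ structure) versus which only yield the weaker additive error proportional to $s=\sqrt{1-c_t^2}$. The key algebraic observation is $\Gtil_t\gtil_t = \Gtil_t\Gtil_t^\top k(x_t,\cdot)$, which identifies the symmetric piece of $E_t$ as aligned with the range of $\Gtil_t$. All remaining $O(\eta_t^2)$ cross terms (products of $\nbr{\epsilon_t}$ with $\eta_t$, with $\nbr{A_t-A}$, and self-products) are genuinely of order $\eta_t^2$ since $\nbr{\epsilon_t}$ is bounded by a constant, and are absorbed into $\beta_t$ without altering the recurrence.
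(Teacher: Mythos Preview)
Your high-level route matches the paper's: carry the recurrence machinery of Lemmas~\ref{lem:expected}--\ref{lem:equivalence} over to $\Gtil_t$ and control the extra error from $\epsilon_t := h_t(x_t) - \gtil_t(x_t)$ via Lemma~\ref{lem:bound1}. The gap is in your handling of $E_t^{\mathrm{sym}}$. The claim that it ``behaves as a symmetric perturbation of $A_t$ in the generalized Rayleigh-quotient argument of Lemma~\ref{lem:stoch}'' is not justified: that argument bounds a genuine quadratic form $u^\top(VV^\top+l_1 I)(A_t-A)u$, whereas here the right-hand factor is $\Gtil_t(\gtil_t\epsilon_t^\top+\epsilon_t\gtil_t^\top)\what$ rather than $(\text{operator})u$, so the generalized-eigenvalue step does not apply and you cannot extract a pure $s^2c^2$ scaling this way.

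Fortunately that piece need not be bounded at all. Anything lying in the range of $\Gtil_t$ only reparametrizes the basis to first order: with $Y:=\Gtil_t+\eta_t(I-\Gtil_t\Gtil_t^\top)A_t\Gtil_t+\eta_t k(x_t,\cdot)\epsilon_t^\top$ one has $\Gtil_{t+1}=Y+\eta_t\Gtil_t R=Y(I+\eta_t R)+O(\eta_t^2)$ for a suitable $k\times k$ matrix $R$, hence $\cos^2\theta(V,\Gtil_{t+1})=\cos^2\theta(V,Y)+O(\eta_t^2)$, and both $E_t^{\mathrm{sym}}$ and the $-\Gtil_t\epsilon_t\epsilon_t^\top$ part of $E_t^{\mathrm{res}}$ are absorbed into $\beta_t$. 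This is exactly how the paper proceeds: it observes that after the first-order equivalence the only surviving perturbation is the replacement of $A_t u = k(x_t,\cdot)\gtil_t(x_t)\what$ by $k(x_t,\cdot)h_t(x_t)\what$, and then bounds the single term $u^\top(s^2VV^\top-c^2V_\perp V_\perp^\top)k(x_t,\cdot)\epsilon_t\what$ by plain Cauchy--Schwarz on each component, getting $\abr{u^\top VV^\top k(x_t,\cdot)}\le c\kappa\le O(c^2)$ (using $c\ge 1/2$) for the multiplicative correction and $\abr{u^\top V_\perp V_\perp^\top k(x_t,\cdot)}\le s\kappa$ for the additive $\sqrt{1-c_t^2}$ term. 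That is precisely your Cauchy--Schwarz estimate for $E_t^{\mathrm{res}}$, applied directly without the preliminary $E_t^{\mathrm{sym}}/E_t^{\mathrm{res}}$ split.
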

\begin{proof}
The potential of $\Gtil_t$ can be computed by a similar argument as in the previous section; the only difference is replacing  $A_t u$ with $k(x_t, \cdot) h_t(x_t)\what$. This leads to
\begin{align}
\cos^2 \theta(\Gtil_{t+1}, V)
&\ge c^2 + 2\eta_t u^\top \rbr{s^2 VV^\top - c^2 V_{\perp} V_{\perp}^\top} k(x_t, \cdot) h_t(x_t)\what - \beta_t \nonumber\\
&= c^2 + 2\eta_t u^\top \rbr{s^2 VV^\top - c^2 V_{\perp} V_{\perp}^\top} \sbr{(k(x_t, \cdot) h_t(x_t)\what - A_t u)  + (A_t u - Au) + Au} - \beta_t \label{eqn:addterm}
\end{align}
where $u = \Gtil_t \what$ with unit norm $\nbr{u} = 1$.

The terms involving $(A_t u - Au)$ and $Au$ can be dealt with as before,
so we only need to bound the extra term
\begin{align*}
  & u^\top \rbr{s^2 VV^\top - c^2 V_{\perp} V_{\perp} ^\top} [k(x_t, \cdot) h_t(x_t) \what - A_t u] \\
	= & ~ u^\top \rbr{s^2 VV^\top - c^2 V_{\perp} V_{\perp} ^\top} [k(x_t, \cdot) h_t(x_t) \what- k(x_t, \cdot) \gtil_t(x_t)\what]\\
	= & ~ u^\top \rbr{s^2 VV^\top - c^2 V_{\perp} V_{\perp} ^\top} k(x_t, \cdot) [h_t(x_t) -  \gtil_t(x_t)] \what. 
\end{align*}

So we need to bound $[h_t(x_t) -  \gtil_t(x_t)] \what$, which in turn relies on Lemma~\ref{lem:bound1}. 
More precisely, we have $\nbr{h_t(x_t) -  \gtil_t(x_t)}_\infty \leq \Otil\rbr{\Delta_\omega\theta^2 \sqrt{1/t}}$ with probability $\geq 1-\delta$. Also, we have  $u = \Gtil_t\what$ has unit norm, so $\nbr{\what} = O(1)$ when  $\lambda_k(\Gtil_i^\top \Gtil_i) = \Omega(1)$.
Then
\begin{align*}
\abr{u^\top VV^\top k(x_t, \cdot) [h_t(x_t) -  \gtil_t(x_t)] \what} 
\leq  &  \nbr{u^\top V} \nbr{k(x_t, \cdot)}  \Otil\rbr{\Delta_\omega\theta^2\sqrt{1/t}} 
\leq  c^2 \Otil\rbr{\Delta_\omega\theta^2\sqrt{1/t}}
\end{align*}
where the last step follows from $c \geq 1/2$ by assumption. 
Similarly,
\begin{align*}
\abr{u^\top V_{\perp}V_{\perp}^\top k(x_t, \cdot) [h_t(x_t) -  \gtil_t(x_t)] \what} 
\leq  &  \nbr{u^\top V_{\perp}} \nbr{k(x_t, \cdot)}  \Otil\rbr{\Delta_\omega\theta^2\sqrt{1/t}} 
\leq  s \Otil\rbr{\Delta_\omega\theta^2\sqrt{1/t}} =  \Otil\rbr{\Delta_\omega\theta^2\sqrt{\frac{1-c^2}{t}}}.
\end{align*}
Plugging into  (\ref{eqn:addterm}) and apply a similar argument as in Lemma~\ref{lem:expected} and~\ref{lem:stoch} we have the lemma. 
\end{proof}

\begin{lemma}[Complete version of Lemma~\ref{lem:RKHSbound}] \label{lem:RKHSbound1}
If the mini-batch sizes are large enough so that $\nbr{A - A_i} < (\lambda_k - \lambda_{k+1})/8$, $\lambda_k( \EE_i\sbr{ h_i(x_i)^\top h_i(x_i)}) = \lambda_k( \EE_x\sbr{ h_i(x)^\top h_i(x)}) \pm O(1)$, and $\Delta_\omega  = O(\lambda_k - \lambda_{k+1})$, then
\begin{enumerate}
\item[(1)] $\theta=O(1)$;
\item[(2)] $1- c^2_t = O\rbr{\frac{1}{t}\ln\frac{t}{\delta}}.$
\end{enumerate}
\end{lemma}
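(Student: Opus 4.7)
The plan is to deduce both parts of the lemma from the recurrence in Lemma~\ref{lem:RKHSbound0} by a simultaneous induction on $t$. The two statements are coupled: part (1) fixes a constant step-size coefficient $\theta$ so that $\eta_t = \theta/t$ meets the hypothesis of Lemma~\ref{lem:bound1}, and part (2) then keeps $c_t^2 \geq 1/2$ throughout, which is needed for the recurrence to contract. The structural conditions $\lambda_k(\Gtil_i^\top\Gtil_i), \lambda_1(\Gtil_i^\top\Gtil_i) = \Theta(1)$ used upstream are inherited from the hypotheses of Theorem~\ref{thm:doubly}.

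For part (1), I will argue that any $\theta$ with $\theta\,\lambda_k\bigl(\EE_i[h_i(x_i)^\top h_i(x_i)]\bigr) \geq 1$ can be chosen as $O(1)$. By hypothesis this empirical $k\times k$ second-moment matrix equals the population matrix $\EE_x[h_i(x)^\top h_i(x)]$ up to $O(1)$ additive slack, and the population matrix is close to $\Gtil_i^\top A \Gtil_i$ because Lemma~\ref{lem:bound1} implies that $h_i(x)$ and $\gtil_i(x)$ agree pointwise up to $\Otil(t^{-1/2})$. Since $\Gtil_i$ stays close to the top-$k$ eigenspace $V$ (by part (2), fed inductively) and $V^\top A V = \Sigma_k$ has smallest eigenvalue $\lambda_k = \Omega(1)$ under the constant-eigengap assumption, we conclude $\lambda_k\bigl(\EE_i[h_i(x_i)^\top h_i(x_i)]\bigr) = \Omega(1)$, so $\theta$ may be chosen as a sufficiently large absolute constant.

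For part (2), set $\epsilon_t := 1 - c_t^2$ and rewrite the bound of Lemma~\ref{lem:RKHSbound0} in the form
\[
\epsilon_{t+1} \leq \epsilon_t\bigl(1 - 2\eta_t\,\Delta'\,c_t^2\bigr) + O\!\bigl(\eta_t\sqrt{\epsilon_t\,(1/t)\ln(t/\delta)}\bigr) + \beta_t,
\]
with $\Delta' := (\lambda_k - \lambda_{k+1}) - 2\nbr{A - A_t} - O\bigl(\Delta_\omega\theta^2\sqrt{(1/t)\ln(t/\delta)}\bigr)$. Under the mini-batch assumptions and part (1), both subtracted terms are at most $(\lambda_k-\lambda_{k+1})/4$ for $t$ large, so $\Delta' \geq (\lambda_k-\lambda_{k+1})/2 = \Omega(1)$. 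Using $c_t^2\geq 1/2$ (to be verified inductively) and $\eta_t = \theta/t$, the contraction factor is at most $1 - c/t$ for a constant $c$ that grows with $\theta$ and can be made strictly greater than $1$. I then close the induction $\epsilon_t \leq C\ln(t/\delta)/t$: substitution yields
\[
\epsilon_{t+1} \leq \tfrac{C\ln(t/\delta)}{t} - \tfrac{(Cc - O(\sqrt{C}))\ln(t/\delta)}{t^2} + O\!\bigl(\tfrac{1}{t^2}\bigr),
\]
which is $\leq C\ln((t+1)/\delta)/(t+1)$ provided $c>1$ and $C$ is chosen large relative to the hidden cross-term constant.

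The main obstacle is the $\sqrt{\epsilon_t}$ cross term inherited from the martingale bound of Lemma~\ref{lem:bound1}: treated na\"ively it would degrade the rate to $O(1/\sqrt{t})$. The key observation is that this term carries a built-in factor of $\eta_t/\sqrt{t}$, so once the inductive hypothesis $\epsilon_t = O(\ln(t/\delta)/t)$ is fed in, the cross term contributes only $O(\sqrt{C}\ln(t/\delta)/t^2)$, which is of strictly lower order than the principal gain $Cc\ln(t/\delta)/t^2$. Matching the constants $\theta$, $c$, and $C$ carefully closes the bootstrap and yields the advertised $O((1/t)\ln(t/\delta))$ bound.
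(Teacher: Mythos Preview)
Your proposal is correct and follows essentially the same route as the paper: a simultaneous induction in which part~(1) is obtained by reducing $\EE_i[h_i(x_i)^\top h_i(x_i)]$ first to the population matrix $\EE_x[h_i(x)^\top h_i(x)]$, then to $\Gtil_i^\top A \Gtil_i$ via the pointwise closeness of $h_i$ and $\gtil_i$ (Lemma~\ref{lem:bound1}), and finally lower-bounding the latter using $c_i^2 \geq 1/2$ and the standing assumption $\lambda_k(\Gtil_i^\top\Gtil_i)=\Omega(1)$; part~(2) then comes from the recurrence of Lemma~\ref{lem:RKHSbound0}. Your write-up is in fact more explicit than the paper's on how the recurrence is solved---the paper simply asserts ``by solving the recurrence'' whereas you spell out the $\epsilon_t$ substitution and correctly identify that the $\sqrt{\epsilon_t}$ cross term, once the inductive hypothesis is plugged in, contributes only $O(\sqrt{C}\ln(t/\delta)/t^2)$ and is therefore dominated by the contraction gain when $c>1$ and $C$ is large enough.
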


\begin{proof}
If the mini-batch size is large enough so that $\lambda_k( \EE_i\sbr{ h_i(x_i)^\top h_i(x_i)}) = \lambda_k( \EE_x\sbr{ h_i(x)^\top h_i(x)}) \pm O(1)$, we only need to show $\lambda_k( \EE_x\sbr{ h_i(x)^\top h_i(x)}) = \Omega(1)$, which will lead to $\theta=O(1)$ and then solving the recurrence in Lemma~\ref{lem:RKHSbound0} leads to $1- \cos^2 \theta(\Gtil_{t+1}, V) = \Otil(1/t)$. 

Let $e_i(x) = h_i(x) - \gtil_i(x)$. Then
\[
  \EE_x\sbr{ h_i(x)^\top h_i(x)}  =  \EE_x\sbr{ \gtil_i(x)^\top \gtil_i(x)} + 2  \EE_x\sbr{ e_i(x)^\top h_i(x)} - \EE_x\sbr{ e_i(x)^\top e_i(x)}.
\]
By Lemma~\ref{lem:bound1}, $\EE_x\abr{e^j_i(x)} = \Otil(\theta^4/t)$, which is $o(1)$ if $\theta=O(1)$. \bruce{more explanation: switching on statement (2) in the lemma}
Then the norm of $2  \EE_x\sbr{ e_i(x)^\top h_i(x)} - \EE_x\sbr{ e_i(x)^\top e_i(x)}$ is $o(1)$, so we only need to consider $\EE_x\sbr{ \gtil_i(x)^\top \gtil_i(x)}$. 

Formally, we prove our statements (1)(2) by induction. 
They are true initially.
Suppose they are true for $t-1$, we prove them for $t$.  

First, by solving the recurrence for $c_t$, we have that statement (2) is true up to step $t$.

Next, since $\EE_x \sbr{\gtil_t(x)^\top \gtil_t(x)} = \Gtil_t^\top A \Gtil_t$, we have 
\begin{align*}
w^\top\EE_x \sbr{\gtil_t(x)^\top \gtil_t(x)} w = & w^\top \Gtil_t^\top A \Gtil_t w \\
 = & w^\top \Gtil_t^\top (V\Lambda_k V^\top + V_\perp\Lambda_\perp V_\perp^\top) \Gtil_t w \\
\geq & w^\top \Gtil_t^\top V\Lambda_k V^\top  \Gtil_t w \\
\geq  & \lambda_k c_t^2 \nbr{w}^2
\end{align*}
which means $\lambda_k(\EE_x \sbr{\gtil_t(x)^\top \gtil_t(x)} ) = \Omega(1)$ by induction on $c_t$ and by the assumption that $\lambda_k(\Gtil_t^\top \Gtil_t) = \Omega(1)$.
This then leads to  $\lambda_k( \EE_i\sbr{ h_i(x_i)^\top h_i(x_i)})) = \Omega(1)$, which means $\theta = O(1)$ up to step $t$. 
\end{proof}

\end{appendix}

\end{document}